\documentclass[letterpaper]{article} 
\usepackage{aaai24}  
\usepackage{times}  
\usepackage{helvet}  
\usepackage{courier}  
\usepackage[hyphens]{url}  
\usepackage{graphicx} 
\urlstyle{rm} 
\usepackage{natbib}  
\usepackage{caption} 
\frenchspacing  
\setlength{\pdfpagewidth}{8.5in}  
\setlength{\pdfpageheight}{11in}  
%
\usepackage[ruled,vlined]{algorithm2e}

%
%
\pdfinfo{
/TemplateVersion (2024.1)
}


\usepackage{amsthm}
\newtheorem{assumption}{Assumption}
\newtheorem{proposition}{Proposition}
\newtheorem{theorem}{Theorem}
\newtheorem*{proposition*}{Proposition}
\newtheorem{lemma}[theorem]{Lemma}
\newtheorem*{lemma*}{Lemma}
\newtheorem{corollary}{Corollary}
\newtheorem*{corollary*}{Corollary}
\newtheorem*{remark}{Remark}
\usepackage{multirow}

\usepackage{caption}
\usepackage{subcaption}
\usepackage{mathtools}
\usepackage{amssymb}


\usepackage{amsmath,amsfonts,bm}









\def\eqref#1{equation~\ref{#1}}









\def\1{\bm{1}}










\DeclareMathAlphabet{\mathsfit}{\encodingdefault}{\sfdefault}{m}{sl}
\SetMathAlphabet{\mathsfit}{bold}{\encodingdefault}{\sfdefault}{bx}{n}


\def\gL{{\mathcal{L}}}



\def\sN{{\mathbb{N}}}

\def\sP{{\mathbb{P}}}

\def\sR{{\mathbb{R}}}








\newcommand{\E}{\mathbb{E}}

\newcommand{\Var}{\mathrm{Var}}



\def\cN{\mathcal{N}}
\def\I{\mathbb{I}}

\def\V{\mathbb{V}}
\newcommand\norm[1]{\left\lVert#1\right\rVert}

\def\cN{\mathcal{N}}
\def\I{\mathbb{I}}

\def\V{\mathbb{V}}

\setcounter{secnumdepth}{1} 

%


\title{DP-AdamBC: Your DP-Adam Is Actually DP-SGD \\ (Unless You Apply Bias Correction)}
\author{
    Qiaoyue Tang, Frederick Shpilevskiy, Mathias L\'ecuyer
}
\affiliations{
    University of British Columbia \\

    Vancouver, British Columbia, Canada\\
    \{qiaoyuet, fshpil\}@cs.ubc.ca, mathias.lecuyer@ubc.ca
%
}

\begin{document}

\maketitle

\begin{abstract}
    The Adam optimizer is a popular choice in contemporary deep learning, due to its strong empirical performance. However we observe that in privacy sensitive scenarios, the traditional use of Differential Privacy (DP) with the Adam optimizer leads to sub-optimal performance on several tasks. We find that this performance degradation is due to a DP bias in Adam's second moment estimator, introduced by the addition of independent noise in the gradient computation to enforce DP guarantees.
    This DP bias leads to a different scaling for low variance parameter updates, that is inconsistent with the behavior of non-private Adam. 
    We propose DP-AdamBC, an optimization algorithm which removes the bias in the second moment estimation and retrieves the expected behaviour of Adam.
    Empirically, DP-AdamBC significantly improves the optimization performance of DP-Adam by up to 3.5\% in final accuracy in image, text, and graph node classification tasks.
\end{abstract}

\section{Introduction}
\label{sec:intro}
The Adam optimization algorithm \citep{orig_adam} is the default optimizer for several deep learning architectures and tasks, notably in Natural Language Processing (NLP), for which Stochastic Gradient Descent (SGD) tends to struggle. Even in vision tasks where Adam is less prevalent, it typically requires less parameter tuning than SGD to reach good performance.

On all these tasks, deep learning models can leak information about their training set \citep{carlini2019secret,carlini2021extracting,carlini2022membership,balle2022reconstructing}.
We consider settings in which the deep learning model's training data is privacy sensitive, and models are trained with Differential Privacy \citep{dwork2006calibrating,abadi2016deep} to provably prevent training example information leakage \citep{wasserman2010statistical}.
Intuitively, training DP models requires computing each minibatch gradient with DP guarantees by clipping per-example gradients and adding Gaussian noise (\S\ref{sec:motiv}), to bound the maximal influence of any data-point on the final model.
The DP gradients can then feed into any optimization algorithm without modification to update the model's parameters.
Due to its success in the non-private setting, Adam is also prevalent when training  DP models, for NLP \citep{li2021} and GNN \citep{daigavane2022nodelevel} models.
However we observe that when combined with DP, Adam does not perform as well as without privacy constraints: Adam suffers a larger degradation of performance compared to SGD on vision tasks, while NLP models perform poorly when training from scratch.

To understand this effect, we go back to the original intuition behind Adam \citep{orig_adam} that relies on exponential moving averages estimating the first and second moments of mini-batch gradients. We show that while DP noise does not affect the first moment, it does add a constant bias to the second.
Drawing on a recent empirical investigation that suggests that the performance of Adam may be linked to its update rule performing a smooth version of the sign descent update \citep{kunstner2023heavytailed}, we show that the additive shift in Adam's second moment estimate caused by DP noise moves the Adam update away from that of sign descent, by scaling the gradient dimensions with different magnitudes differently.
Indeed, under typical DP parameters, the DP bias added to the second moment estimates of DP-Adam dominate the second moment estimate, and makes DP-Adam a rescaled version of DP-SGD with momentum.
We show how to correct this DP noise induced bias, yielding a variation that we call DP-AdamBC.
Empirically, correcting Adam's second moment estimate for DP noise significantly increases test performance for Adam with DP, on tasks for which Adam is well suited.

We make the following contributions:
\begin{enumerate}
    \item We analyze the interaction between DP and the Adam optimizer, and show that DP noise introduces bias in Adam's second moment estimator (\S\ref{sec:motiv-bias}). We show theoretically, and verify empirically, that under typical DP parameters DP-Adam reduces to DP-SGD with momentum (\S\ref{sec:motiv-sgdm}). This behavior violates the sign-descent hypothesis for Adam's performance.
    \item We propose DP-AdamBC, a variation of DP-Adam that corrects for the bias introduced by DP noise. We show that DP-AdamBC is a consistent estimator for the Adam update, under the same simplifying assumptions that justify Adam's update. (\S\ref{sec:method}). 
    \item We empirically evaluate the effect of DP-AdamBC, and show that it yields significant improvements (up to $3.5$ percentage points of test accuracy) over DP-Adam. (\S\ref{sec:exp}).
\end{enumerate}

\noindent Our implementation is available at: \url{https://github.com/ubc-systopia/DP-AdamBC}. All Appendixes referenced in the paper are available in the long version of the paper \cite{tang2023dpadambc}.

\section{Adam and the Sign-descent Hypothesis}
\label{sec:background}
The Adam update \citep{orig_adam} is defined as follows. Denote the average gradient over a mini-batch of size B with respect to loss function $f$ at step $t$ as:
\[
    g_t = (1/B)\nabla f(\theta_{t-1})
\]
Let $\beta_1$ and $\beta_2$ be Adam's decay coefficients. At each step, Adam updates two estimators:
\[
    m_{t} \leftarrow \beta_{1} m_{t-1} + \left(1-\beta_{1}\right) g_t \ ; \ \ \ \widehat{m}_{t} \leftarrow m_{t} /\left(1-\beta_{1}^{t}\right) ,
\]
\[
    v_{t} \leftarrow \beta_{2} v_{t-1}+\left(1-\beta_{2}\right) g_t^{2} \ ; \ \ \ \widehat{v}_{t} \leftarrow v_{t} /\left(1-\beta_{2}^{t}\right) .
\]
Finally, the Adam update for the model's parameters is:
\[
    \theta_t \leftarrow \theta_{t-1} - \eta \Delta_t \ ; \ \ \ \Delta_t = \hat{m}_t / (\sqrt{\hat{v}_t} + \gamma) ,
\]
with learning rate $\eta$, and $\gamma>0$ a small numerical stability constant.
Intuitively, Adam's $\hat{m}_t$ and $\hat{v}_t$ use an exponential moving average to estimate $\E[g_t]$ and $\E[g_t^2]$, the vector of first and second moment of each parameter's gradient, respectively. The final update is thus approximating $\E[g_t]/\sqrt{\E[g_t^2]}$.

The reasons for Adam's performance are not fully understood. However, recent evidence \citep{kunstner2023heavytailed} supports the hypothesis that Adam derives its empirical performance from being a smoothed out version of sign descent. At a high level, Adam performs well in settings (e.g., NLP) where sign descent also performs well, at least when running with full (or very large) batch.
We next describe Adam's update rule under this sign descent hypothesis, before working out the impact of DP noise on this interpretation. Let $\E{}$ and $\V{}$ denotes the expectation and variance respectively,
\begin{enumerate}
\item If for parameter $i$, $|\E{[g_t]}|_i \gg \sqrt{\V{[g_t]}_i}$, then the update's direction is clear. And since $|\E{[g_t]}|_i \approx \sqrt{\E{[g^2_t]}_i}$, the Adam update is $\E{[g_t]}_i / \sqrt{\E{[g^2_t]}_i} \approx \pm 1$, and Adam is sign descent. Updates are {\em not scaled based on $|\E{[g_t]}|_i$} as in SGD.
\item If for parameter $i$, $|\E{[g_t]}|_i \not\gg \sqrt{\V{[g_t]}_i}$, the sign is less clear and Adam's update is in $[-1, 1]$, scaled closer to $0$ the more uncertain the sign is (smoothing behavior).
\end{enumerate}
Finally, Adam ensures numerical stability when $|\E{[g_t]}|_i \approx 0$ and $\V{[g_t]}_i \approx 0$ using the additive constant $\gamma$ in the denominator of the update. In that case, the update is approximately $\E{[g_t]}_i/\gamma \approx 0$.

To summarize, under the sign descent hypothesis, Adam  updates parameters with low variance gradients using a constant size $\pm 1$ update (or $\pm \eta$ after the learning rate is applied), and rescales the update of parameters with high variance gradients towards $0$. As we describe next, adding DP to gradient computations breaks this interpretation of Adam as sign descent.

\section{Adam Update under Differential Privacy}
\label{sec:motiv}
Most optimization approaches for deep learning models with Differential Privacy follow a common recipe \cite{abadi2016deep}: compute each gradient update over a mini-batch with DP, and leverage DP's post-processing guarantee and composition properties to analyse the whole training procedure.
Computing a DP update over a mini-batch involves clipping per-example gradients to control the update's sensitivity, and adding {\em independent} Gaussian noise to the aggregated gradients.
Formally, for each step $t$, let $g_{n} = \nabla f(\theta_t, x_n)$ be the gradient for sample $n$, and let $C$, $\sigma$ be the maximum $L2$-norm clipping value and the noise multiplier, respectively. Given a mini-batch $B$, the DP gradient is:
\begin{gather*}
    \Tilde{g}_t = \overline{g}_t + (1/B) z_t \ ; \ \ \ z_t \sim \cN(0, \sigma^{2}C^{2}\I^{d}) \ ; \\
    \overline{g}_t = (1/B)\sum_{n} g_{n} / {\max{(1, \lVert g_n \rVert_{2}/C})},
\end{gather*}
where $\overline{g}_t$ is the mean of clipped gradients over the minibatch---a biased estimate of $g_t$---and $\Tilde{g}_t$ the DP gradient.

With this recipe, any optimizer that only takes mini-batch updates as input, such as Adam, can be applied to the DP update $\Tilde{g}$ and preserve privacy. This is how existing DP approaches using Adam work (e.g., \cite{li2021}), yielding the following update: let the superscript $p$ denote private version of a quantity, then
\begin{gather*}
    m_{t}^{p} \leftarrow \beta_{1} m_{t-1}^{p} + \left(1-\beta_{1}\right) \Tilde{g}_t, \, \widehat{m}_{t}^{p} \leftarrow m_{t}^{p} /\left(1-\beta_{1}^{t}\right), \\
    v_{t}^{p} \leftarrow \beta_{2} v_{t-1}^{p}+\left(1-\beta_{2}\right) \Tilde{g}_t^{2}, \, \widehat{v}_{t}^{p} \leftarrow v^p_{t} /\left(1-\beta_{2}^{t}\right), \\
    \theta_t \leftarrow \theta_{t-1} - \eta \widehat{m}_{t}^{p} / (\sqrt{\widehat{v}_{t}^{p}} + \gamma).
\end{gather*}

We show next that this DP-Adam algorithm uses a biased estimator for the second moment. This bias dominates the scale of the denominator in Adam's update, thus breaking the sign descent behaviour of Adam (\S\ref{sec:motiv-bias}) and reducing DP-Adam to DP-SGD with momentum and a specific learning rate schedule (\S\ref{sec:motiv-sgdm}).

\subsection{DP noise biases second moment estimates, breaking the sign descent behavior}
\label{sec:motiv-bias}
Under DP, Adam estimates the first and second moments as $m_t^{p}$ and $v_t^{p}$, and rescaled versions $\hat{m}_t^{p}$ and $\hat{v}_t^{p}$, using $\Tilde{g}_t$ in order to preserve privacy. Since the noise added for DP is independent of the gradient update, there is no impact on the first moment in expectation:
\begin{align}
\label{eq:mt}
\begin{split}
    \E{[m_t^{p}]} &= \E{\bigg[ (1-\beta_{1}) \sum_{\tau=1}^{t} \beta_{1}^{t-\tau} \Tilde{g}_\tau \bigg]} \\
    &= (1-\beta_{1}) \sum_{\tau=1}^{t} \beta_{1}^{t-\tau} \bigg( \E{[ \overline{g}_\tau ]} +  \underbrace{\frac{1}{B}\E{[z_\tau]}}_{\text{0}} \bigg) \triangleq \E{[m_t^{c}]}.
\end{split}
\end{align}
However, $v_t^{p}$ is now a biased estimate of the second moment of the mini-batch's update $\bar{g}_t$, as it incurs a constant shift due to DP noise \citep{tang2023dpadam}. By independence of the DP noise $z_t$ and $\bar{g}_t$, we have that:
\begin{align}
\label{eq:vt}
\begin{split}
    \E{[v_t^{p}]} &= \E{\bigg[ (1-\beta_{2}) \sum_{\tau=1}^{t} \beta_{2}^{t-\tau} {\Tilde{g}_\tau}^{2} \bigg]} \\
    &= \underbrace{(1-\beta_{2}) \sum_{\tau=1}^{t} \beta_{2}^{t-\tau} \E{ [ \overline{g}_\tau^{2}]}}_{\triangleq \ \E{[v_t^{c}]}} + (1-\beta_{2}^{t})\underbrace{\bigg(\frac{\sigma C}{B} \bigg)^{2}}_{\Phi} .
\end{split}
\end{align}
In these equations, $\E{[m_t^{c}]}$ and $\E{[v_t^{c}]}$ are the quantities that would be estimated under regular Adam (without DP noise), computed with respect to $\bar{g}_t$ (clipped gradients for DP).

\begin{figure}[htb]
\centering
\includegraphics[width=0.90\linewidth]{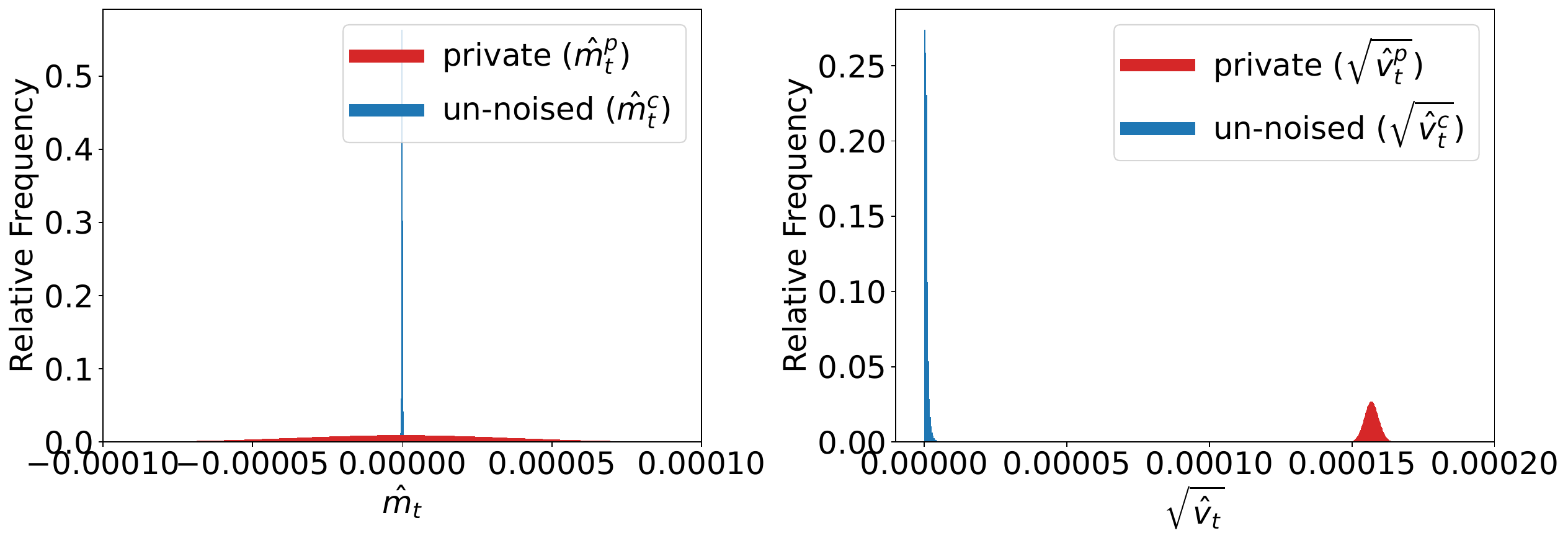}
\caption{Histogram of \textbf{Left:} un-noised ($\hat{m}_t^c$) and private ($\hat{m}_t^p$) first moment estimates, \textbf{Right:} un-noised ($\hat{v}_t^c$) and private ($\hat{v}_t^p$) second moment estimates near end of training at $t=10000$, using the SNLI dataset with $B=256, C=0.1, \sigma=0.4, \beta_2=0.999$, $\Phi \approx \textnormal{2.441e-8}$ for large $t$.}
\label{fig:motiv_10001}
\end{figure}

We use a text classification dataset (SNLI) to demonstrate the effect of DP noise on first and second moment estimates, with $B=256, C=0.1, \sigma=0.4, \beta_2=0.999$, $\Phi = \textnormal{2.441e-8}$ with large $t$.

Figure \ref{fig:motiv_10001} (Left) shows the histogram of values of the first moment estimates $\hat{m}_t^c$ (clipped gradients, no noise) for each dimension, and private $\hat{m}_t^p$ (clipped and noised gradients), at the end of training. We observe that the center of the distributions align, suggesting that $\E{[\hat{m}_t^p]} = \E{[\hat{m}_t^c]}$ as in Equation \ref{eq:mt}. The private first moment distribution has larger variance compared to the clean distribution as a result of DP noise. Figure \ref{fig:motiv_10001} (Right) shows the histogram of $\hat{v}_t^c$ (clipped, no noise) and private $\hat{v}_t^p$ (clipped and noised) second moment estimates at the end of training. We see that the distributions of $\hat{v}_t^c$ and $\hat{v}_t^p$ are quite different, with a shift in the center approximately equal to $\sqrt{\Phi}$. This suggests that the DP noise variance dominates the scale of $\hat{v}_t^p$ in Equation \ref{eq:vt}.

To understand the implication of DP noise bias $\Phi$, let us follow the original Adam paper \cite{orig_adam} and interpret the update under the following assumption:

\begin{assumption}[Stationarity]\label{assumption:stationarity}
For all $\tau$ in $[0, t]$, the (full) gradient is constant, $\nabla f(\theta_\tau) \triangleq \nabla f$, and minibatch gradients are i.i.d samples such that $\E[g_t] = \nabla f$.
\end{assumption}

\begin{remark}
    Note that Assumption \ref{assumption:stationarity} is not required for convergence (see Appendix F), nor is it used in empirical experiments. It is useful though, to reason about the behavior of DP-Adam and compare it to the intended behavior of Adam without DP, as we do next. The same assumption was used in Adam's original work for the same purpose, to reason about the quality of Adam's moment estimates [\cite{orig_adam}, \S3].
\end{remark}

Under Assumption \ref{assumption:stationarity}, with $\beta_1 \rightarrow 1$, $\beta_2$ such that $(1-\beta_1^{t})/{\sqrt{1-\beta_2^{t}}} = 1$, and for large enough $t$, we have that $\E{[\hat{m}_t^{c}]} \approx \E{[\bar{g}_t]}$ and $\E{[\hat{v}_t^{c}]} \approx \E{[\bar{g}_t^{2}]}$, and $\Delta_t =
\E{[\tilde{g}_t]} / \sqrt{\E{[\tilde{g}_t^2]}} = \E{[\bar{g}_t]} / \sqrt{\V{[\bar{g}_t]+\E(\bar{g}_t)^2}+\Phi}$.
Due to the extra DP bias $\Phi$ in the denominator of Adam's estimator, DP-Adam no longer follows the sign descent interpretation seen in \S\ref{sec:background}.

Focusing on the sign descent regime---when a parameter $i$ in the model has a large signal and small variance, such that $|\E{[\bar{g}_t]}|_i \approx \sqrt{\E{[\bar{g}^2_t]}_i}$---the Adam update becomes $\pm (|\E{[\bar{g}_t]}|_i / \sqrt{\E{[\bar{g}^2_t]}_{i} + \Phi})$ instead of $\pm 1$.
For example: if $|\E[\bar{g}_t]|_i=\sqrt{0.1\Phi}$, the update will be $\approx \pm 0.1$, whereas it will be $\approx \pm 1$ if $|\E[\bar{g}_t]|_i=\sqrt{10\Phi}$. In each case, without DP noise Adam would result in a $\pm 1$ update.

Importantly, re-scaling the learning rate $\eta$ is not sufficient to correct for this effect. Indeed, consider two parameters of the model indexed by $i$ and $j$ that, at step $t$, both have updates of small variance but different magnitude, say $|\E[\bar{g}_t]|_i=\sqrt{0.1\Phi}$ and $|\E[\bar{g}_t]|_j=\sqrt{10\Phi}$.  Then the Adam update for $i$ will be $\approx \pm 0.1$ and that of $j$ $\approx \pm 1$, and no uniform learning rate change can enforce a behavior close to sign descent for both $i$ and $j$ in this step. Indeed, under typical DP parameters, DP-Adam is closer for DP-SGD with momentum, as we show next.

\subsection{DP-Adam is DP-SGD with momentum}
\label{sec:motiv-sgdm}
As we saw on Figure \ref{fig:motiv_10001}, under typical DP parameters the DP noise bias $\Phi$ dominates $\hat{v}^p_t$. That is, $\Phi \gg \E[\bar{v}^c_t]$, and we have $\Delta_t \approx \hat{m}^p_t / \sqrt{\Phi}$. Intuitively in this setting, the denominator of DP-Adam's update leads to a constant rescaling, instead of a sign descent \citep{kunstner2023heavytailed} or inverse variance conditioning \citep{balles2020dissecting}. Compensating by properly scaling the learning rate yields an update proportional to $m^p_t$, which is the update of DP-SGD with momentum.

More precisely, using the private gradients $\Tilde{g}_t$ in DP-SGD with Momentum (DP-SGDM) yields the following update:
\[
    b_t^{p} \leftarrow \beta b_{t-1}^{p} + \Tilde{g}_t \ ; \ \ \ \theta_t \leftarrow \theta_{t-1} + \eta b_t^{p} ,
\]
where $\beta \in [0, 1]$ is a momentum decay coefficient. Note the slightly different semantics for $\beta$ compared to Adam, as we follow the typical formulation of DP-SGDM.
We thus have $b_t^{p} = \sum_{\tau \leq t} \beta^{t-\tau} \Tilde{g}_t$ and $\hat{m}_t^{p} = \frac{1-\beta_1}{1-\beta_1^t} \sum_{\tau \leq t} \beta_1^{t-\tau} \Tilde{g}_t$.
Setting $\beta^{\textnormal{DP-SGDM}} = \beta_1^{\textnormal{DP-Adam}}$, and using the same updates $\Tilde{g}_t$ leads to $b_t^{p} = \frac{1-\beta}{1-\beta^t} \hat{m}_t^{p}$. In the DP regime where $\Phi \gg \E[\bar{v}^c_t]$, and thus $\hat{v}_t^{p} \approx \Phi$, the DP-Adam update is $\Delta_t \approx \hat{m}_t^{p} / \sqrt{\Phi} = \frac{1-\beta}{(1-\beta^t)\sqrt{\Phi}} b_t^{p}$. Hence, DP-Adam is DP-SGDM with the following learning rate schedule:
\begin{equation}
\label{eq:lr_convert}
    \eta^{\textnormal{DP-SGDM}} = \eta^{\textnormal{DP-Adam}} \bigg( \frac{1-\beta}{(1-\beta^t)\sqrt{\Phi}} \bigg).
\end{equation}

\begin{figure*}[t]
\centering
\includegraphics[width=0.7\textwidth]{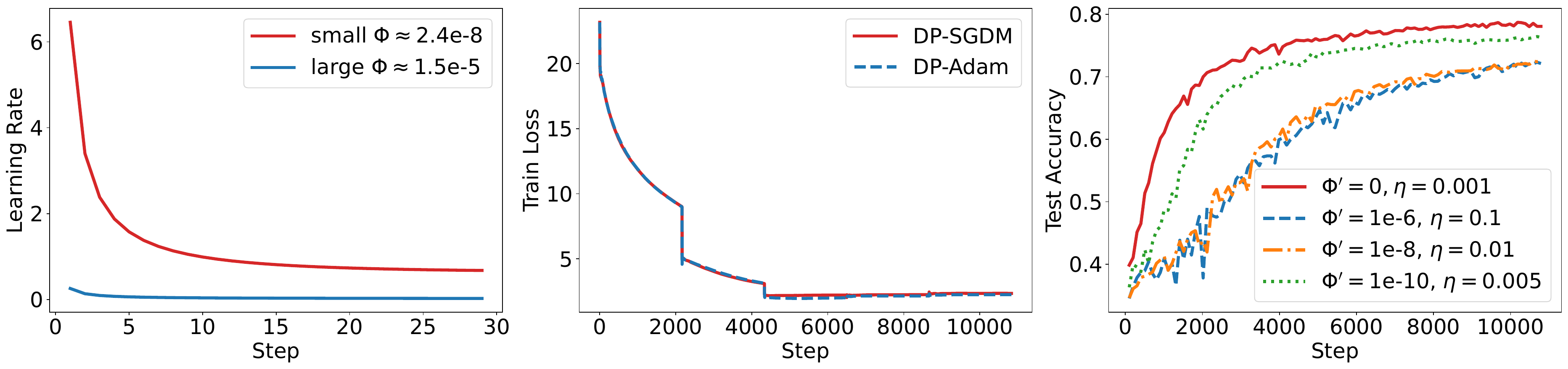}
\caption{DP-Adam behaves similarly to DP-SGDM with a specific learning rate (lr) schedule. \textbf{Left: }The implied lr schedule of DP-SGDM. \textbf{Middle: } DP-Adam and DP-SGDM with the specific lr schedule has similar training performance with a mean squared difference of 0.015 in training loss. \textbf{Right: }Performance degrades when adding a larger constant bias to un-noised (clipping-only) DP-Adam as it transitions to behave more like DP-SGDM.}
\label{fig:motiv_sgdm}
\end{figure*}

Figure \ref{fig:motiv_sgdm} empirically confirms this analysis in typical DP regimes.
Figure \ref{fig:motiv_sgdm} (Left) shows the learning rate schedule over step $t$ for the $\Phi$ values of two DP settings (`small' $\Phi$ with $B=256, C=0.1, \sigma=0.4$, `large' $\Phi$ with $B=256, C=1.0, \sigma=1.0$) when $\beta=0.9$ and $\eta^{\textnormal{DP-Adam}}=0.001$. We see that DP-Adam emulates DP-SGDM with an exponentially decreasing learning rate schedule, with an asymptotic value that depends on $\Phi$ ($\approx0.645$ for $\Phi\approx$2.4e-8, $\approx0.026$ for $\Phi\approx$1.5e-5).

Figure \ref{fig:motiv_sgdm} (Middle) shows the training loss over steps for DP-Adam ($B=256, C=0.1, \sigma=0.4, \eta=0.001$) and DP-SGDM (same $B, C, \sigma$, $\eta$ follows Eq. \ref{eq:lr_convert}, converging to $\approx 6.4$), on the SNLI dataset. We observe that the two algorithms have almost identical training performance:  their respective loss over steps closely aligns, with a mean squared difference of $\approx0.015$ over the entire training.

Figure \ref{fig:motiv_sgdm} (Right) shows the effect of adding a constant bias ($\Phi'$) to Adam's update denominator, without noise, on the SNLI dataset. That is, we update parameters with $\Delta_t = m_t^c/\sqrt{v_t^c + \Phi'}$, where $\Phi'=0$ implies un-noised DP-Adam (gradients are clipped, but no noise is added).
We tune $\eta$ for test accuracy at the end of training.
This experiment thus isolates the effect of second moment bias from DP noise.
We observe that on this text classification task, on which Adam performs better than SGD without DP, the performance of DP-Adam degrades as it transitions to DP-SGDM (more bias is added to the denominator). We conclude that DP-Adam's performance likely degrades due to the DP bias $\Phi$. Appendix D shows more performance comparisons between DP-SGDM and DP-Adam.

Prior work made similar observations on the effect of DP noise on DP-Adam's second moment estimator \cite{mohapatra2021role}. Their approach is to remove second moment scaling, which as we showed produces DP-SGDM. Instead, we show how to correct DP noise bias, yielding the DP-AdamBC variant that follows Adam's behavior without DP, despite the addition of noise.

\section{DP-Adam, Bias Corrected (DP-AdamBC)}
\label{sec:method}
Since we can compute the bias in $v_t^{p}$ due to DP noise (see Eq. (\ref{eq:vt})), we propose to correct for this bias by changing the Adam update $\Delta_t$ as follows:
\begin{equation}
\label{eq:adam_corr}
    \Delta_t = \frac{\hat{m}_t}{\sqrt{\max{\big(\hat{v}_t - \Phi}, \gamma' \big)}} = \frac{\hat{m}_t}{\sqrt{\max{\big(\hat{v}_t - (\sigma C / B)^{2}}, \gamma' \big)}}.
\end{equation}

\begin{algorithm}[h]
\DontPrintSemicolon
\SetAlgoLined
\scriptsize
\KwOut{Model parameters $\theta$}
\KwIn{Data $D=\{x_i\}_{i=1}^N$, $\eta$, $\sigma$, $B$, $C$, $\beta_1, \beta_2$, $\gamma'$, $\epsilon$-DP, $\delta$-DP; initialize $\theta_0$ randomly; $m_0 = 0, v_0 = 0$; total number of steps $T = f(\epsilon\textnormal{-DP}, \delta\textnormal{-DP}, B, N, \sigma)$}
 \For{$t = 1 \ldots, T$} {
    Take a random batch with sampling probability $B/N$ \;
    $g_i = \nabla \gL(\theta_{t-1}, x_i)$ \;
    $\Tilde{g}_t = \frac{1}{B} \big( \sum_{i} g_{i} / {\max{\big(1, \frac{\lVert g_i \rVert_{2}}{ C}} \big)} + z_t \big), z_t \sim \cN\big(0, \sigma^{2}C^{2}\I^{d}\big)$ \;
    $m_{t} \leftarrow \beta_{1} \cdot m_{t-1} + \left(1-\beta_{1}\right) \cdot \Tilde{g}_t$,\, $\widehat{m}_{t} \leftarrow m_{t} /\left(1-\beta_{1}^{t}\right)$ \;
    $v_{t} \leftarrow \beta_{2} \cdot v_{t-1}+\left(1-\beta_{2}\right) \cdot \Tilde{g}_t^{2}$,\, $\widehat{v}_{t} \leftarrow v_{t} /\left(1-\beta_{2}^{t}\right)$ \;
    $\theta_{t} \leftarrow \theta_{t-1} - \eta \cdot \hat{m}_t / \sqrt{\max{\big(\hat{v}_t - (\sigma C / B)^{2}}, \gamma' \big)}$
  }
 \caption{DP-AdamBC (with corrected DP bias in second moment estimation)}
 \label{alg:dp_adam}
\end{algorithm}

Algorithm \ref{alg:dp_adam} shows the overall DP-AdamBC optimization procedure, including the moment estimates from Adam. The main differences are the bias correction to the second moment estimate, and a different numerical stability constant, which we come back to later in this section, after discussing several important properties of DP-AdamBC.

\paragraph{Privacy Analysis.} Our bias corrected DP-AdamBC follows the same DP analysis as that of DP-Adam, and that of DP-SGD. Since both $\hat{m}_t$ and $\hat{v}_t$ are computed from the privatized gradient $\Tilde{g_t}$, the post-processing property of DP and composition over training iterations ensure privacy. The correction is based only on public parameters of DP-Adam: $\beta_{2}$, step $t$, batch size $B$, and the DP noise variance $(\sigma C)^{2}$. We prove the following proposition in Appendix E. In experiments (\S\ref{sec:exp}) we use R\'enyi DP for composition, though other techniques would also apply.

\begin{proposition}[Privacy guarantee of DP-AdamBC]
\label{prop:privacy_guarantee}
    Let the optimization algorithm DP-SGD($\theta, X, y, C, \sigma, B$) (Algorithm 1 in \citet{abadi2016deep}),
    with privacy analysis \textit{Compose}($T$, $\theta_{1,\ldots,T}$), be $(\epsilon, \delta)$-DP, then DP-AdamBC($\theta, X, y, C, \sigma, B$) with the same privacy analysis \textit{Compose}($T$, $\theta_{1,\ldots,T}$) is also $(\epsilon, \delta)$-DP.
\end{proposition}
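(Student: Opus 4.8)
The plan is to reduce the statement to the post-processing property of differential privacy, after observing that DP-AdamBC touches the sensitive data $X$ through exactly the same privatized queries as DP-SGD. The key structural fact is that at each step $t$ the \emph{only} data-dependent operation is the formation of $\Tilde{g}_t$: per-example gradients are clipped to $L2$-norm $C$, averaged, and perturbed by $z_t \sim \cN(0, \sigma^{2} C^{2} \I^{d})$. Everything else in Algorithm~\ref{alg:dp_adam} — the running estimates $m_t, \hat{m}_t, v_t, \hat{v}_t$, the bias-correction subtraction of $\Phi = (\sigma C / B)^{2}$, the clamp $\max(\cdot, \gamma')$, and the parameter update — is a deterministic function of the sequence $(\Tilde{g}_1, \ldots, \Tilde{g}_t)$ together with public hyperparameters ($\beta_1, \beta_2, t, B, \sigma, C, \gamma', \eta$). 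In particular $\Phi$ and $\gamma'$ are fixed public constants, so the correction reveals no new information about $X$.

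The argument I would carry out is an adaptive-composition / post-processing chain. First, make precise the ``gradient oracle'' underlying the analysis of DP-SGD: the interactive mechanism that, on a sequence of adaptively chosen evaluation points $\theta_0, \theta_1, \ldots, \theta_{T-1}$ (where $\theta_{t-1}$ may depend on the earlier outputs $\Tilde{g}_1, \ldots, \Tilde{g}_{t-1}$ but not directly on $X$), releases $\Tilde{g}_1, \ldots, \Tilde{g}_T$. Since at every round the released quantity is the output of a Gaussian mechanism applied to a query of $L2$-sensitivity at most $C$ with noise scale $\sigma C$, the composition accountant \textit{Compose}$(T, \theta_{1,\ldots,T})$ certifies that the whole transcript is $(\epsilon, \delta)$-DP — this is exactly the content of the assumed guarantee for DP-SGD, whose iterates $\theta_t = \theta_{t-1} - \eta \Tilde{g}_t$ are one admissible choice of evaluation points.

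Second, observe that DP-AdamBC is simply another admissible strategy for interacting with the same oracle, with the same per-round mechanism and the same number of rounds $T$. Unrolling Algorithm~\ref{alg:dp_adam}, each iterate $\theta_t$ is obtained from $\theta_{t-1}$ and the post-processed quantities $\hat{m}_t, \hat{v}_t$, which are themselves deterministic functions of $(\Tilde{g}_1, \ldots, \Tilde{g}_t)$ and public constants; hence the evaluation point $\theta_{t-1}$ used to form $\Tilde{g}_t$ depends only on $(\Tilde{g}_1, \ldots, \Tilde{g}_{t-1})$ and the data-independent initialization, never on $X$ directly. Therefore the transcript produced by DP-AdamBC enjoys the same $(\epsilon, \delta)$-DP guarantee under \textit{Compose}$(T, \theta_{1,\ldots,T})$. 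Since the returned model $\theta = \theta_T$ — and in fact the entire trajectory — is a deterministic function of the transcript (given the data-independent random initialization), post-processing yields that DP-AdamBC$(\theta, X, y, C, \sigma, B)$ is $(\epsilon, \delta)$-DP. The argument is agnostic to the particular accountant, so it applies verbatim to the R\'enyi DP composition used in the experiments.

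There is no genuinely hard step; the care needed is purely in the modeling. One must confirm that the bias-correction term and the numerical-stability clamp introduce no new data dependence (that $\Phi, \gamma'$ are public constants and $m_t, v_t, \hat{m}_t, \hat{v}_t$ are measurable functions of the noisy gradients alone), and that the per-step Gaussian mechanism (clipping norm $C$, noise $\sigma C$) and the step count $T$ are identical to those of DP-SGD so the same accountant applies unchanged. The one subtlety worth spelling out is that one cannot argue by a naive ``DP-AdamBC is post-processing of DP-SGD's output,'' since the two algorithms evaluate gradients at different parameter points; the adaptive-composition framing above is what makes the reduction correct.
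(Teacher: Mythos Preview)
Your proposal is correct and follows essentially the same route as the paper: identify that the only data access is through the clipped-and-noised gradients $\Tilde{g}_t$, note that $\Phi$ and $\gamma'$ are public constants so the moment updates and bias correction are pure post-processing, and then invoke the same per-step Gaussian-mechanism guarantee plus the accountant \textit{Compose}$(T,\cdot)$ used for DP-SGD. The paper's proof is terser and does not explicitly spell out the adaptive-composition framing you emphasize (it simply says ``by the post-processing property''), so your observation that one cannot argue by naive post-processing of DP-SGD's \emph{output} and must instead view both algorithms as strategies against the same adaptive gradient oracle is a useful clarification rather than a different argument.
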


\paragraph{Consistency of DP-AdamBC.}
Remember from \S\ref{sec:background} and \S\ref{sec:motiv-bias} that Adam seeks to approximate $\E[g_t]/\sqrt{\E[g_t^2]}$, and does under Stationarity (Assumption \ref{assumption:stationarity}).
Similarly, under Assumption \ref{assumption:stationarity}, DP-AdamBC is a consistent estimator of $\E[\bar{g}_t]/\sqrt{\E[\bar{g}_t^2]}$ as $\beta_1, \beta_2 \rightarrow 1$, and $t \rightarrow \infty$. Formally, calling $\hat{v}_t^{\textnormal{corr}} = \frac{(1-\beta_2)\sum_{\tau=1}^{t}\beta_2^{t-\tau}\Tilde{g}_t^2}{1-\beta_2^t} - \big( \frac{\sigma C}{B} \big)^2$, we have the following result, proven in Appendix B:
\begin{proposition}\label{proposition:consistent}
Under Assumption \ref{assumption:stationarity}, the DP-AdamBC update (without numerical stability constant) $\frac{\hat{m}_t^p}{\sqrt{\max(\hat{v}_t^{\textnormal{corr}}, 0)}}$ is a consistent estimator of $\frac{\E[\bar{g}_t]}{\sqrt{\E[\bar{g}_t^2]}}$ as $\beta_1, \beta_2 \rightarrow 1$, and $t \rightarrow \infty$.
\end{proposition}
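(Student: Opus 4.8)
The plan is to show that, coordinate-by-coordinate, $\hat m_t^p$ concentrates (in probability, over the randomness of minibatch sampling and DP noise) on $\E[\bar{g}_t]$ and $\hat v_t^{\textnormal{corr}}$ concentrates on $\E[\bar{g}_t^2]$ as $\beta_1,\beta_2\to1$ and $t\to\infty$, and then to combine these via the continuous mapping theorem applied to the map $(a,b)\mapsto a/\sqrt{\max(b,0)}$, which is continuous wherever $b>0$. Throughout, fix a coordinate $i$ with $q_i \triangleq \E[(\bar{g}_t)_i^2]>0$; a coordinate with $q_i=0$ forces $(\bar{g}_t)_i=0$ a.s.\ and hence $\mu_i \triangleq \E[(\bar{g}_t)_i]=0$, so the target ratio is simply undefined there and is excluded, exactly as in the original Adam derivation. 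Under Assumption~\ref{assumption:stationarity}, $\mu_i$ and $q_i$ are fixed constants that do not depend on $t$, $\beta_1$, or $\beta_2$, since the distribution of $\bar{g}_t$ (a deterministic per-example-clipped average over an i.i.d.\ minibatch) does not.

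First I would do the bookkeeping of means. Under Assumption~\ref{assumption:stationarity} the privatized gradients $\tilde g_1,\dots,\tilde g_t$ are i.i.d.\ (each $\tilde g_\tau=\bar{g}_\tau+z_\tau/B$ with the $\bar{g}_\tau$ i.i.d.\ across steps and $z_\tau\sim\cN(0,\sigma^2C^2\I^d)$ independent of everything), and clipping gives $\lVert\bar{g}_\tau\rVert_2\le C$, so every $(\tilde g_\tau)_i$ has finite moments of all orders. Because the EMA weights $\tfrac{1-\beta}{1-\beta^t}\beta^{t-\tau}$ sum to one, $\E[(\hat m_t^p)_i]=\E[(\tilde g_t)_i]=\mu_i$ exactly; and since $\E[(\tilde g_t)_i^2]=q_i+\Phi$ by independence of noise and gradient (Eq.~\ref{eq:vt}), subtracting $\Phi=(\sigma C/B)^2$ gives $\E[(\hat v_t^{\textnormal{corr}})_i]=q_i$ exactly. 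So both estimators are unbiased for every $t,\beta_1,\beta_2$, and it only remains to kill their variances.

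Next I would bound the variances. By independence of the $\tilde g_\tau$,
\[
 \V[(\hat m_t^p)_i]=w_t(\beta_1)\,\V[(\tilde g_t)_i],\qquad \V[(\hat v_t^{\textnormal{corr}})_i]=w_t(\beta_2)\,\V[(\tilde g_t)_i^2],\qquad w_t(\beta)\triangleq\Big(\tfrac{1-\beta}{1-\beta^t}\Big)^2\sum_{\tau=1}^t\beta^{2(t-\tau)}=\frac{(1-\beta)(1+\beta^t)}{(1+\beta)(1-\beta^t)},
\]
where $\V[(\tilde g_t)_i]$ and $\V[(\tilde g_t)_i^2]$ are finite constants (all moments of $(\tilde g_t)_i$ are finite). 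The key elementary estimate is $w_t(\beta)\le 4\max\{1-\beta,\ 1/t\}$: if $\beta^t\le\tfrac12$ then $1-\beta^t\ge\tfrac12$ gives $w_t(\beta)\le 4(1-\beta)$; if $\beta^t>\tfrac12$ then $-t\log\beta<\log2$ forces $1-\beta<(\log2)/t$, while $1-\beta^t=(1-\beta)\sum_{j<t}\beta^j\ge(1-\beta)\,t\,\beta^{t-1}>(1-\beta)t/2$, giving $w_t(\beta)\le 4/t$. Hence $w_t(\beta_1),w_t(\beta_2)\to0$ along any joint limit $\beta_1,\beta_2\to1,\ t\to\infty$, so the two variances vanish; Chebyshev then yields $(\hat m_t^p)_i\to\mu_i$ and $(\hat v_t^{\textnormal{corr}})_i\to q_i>0$ in probability. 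Since $(a,b)\mapsto a/\sqrt{\max(b,0)}$ is continuous at $(\mu_i,q_i)$, the continuous mapping theorem gives $(\hat m_t^p)_i/\sqrt{\max((\hat v_t^{\textnormal{corr}})_i,0)}\to\mu_i/\sqrt{q_i}=\big(\E[\bar{g}_t]/\sqrt{\E[\bar{g}_t^2]}\big)_i$ in probability, which is the claim.

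No step here is deep, but two points need care and will be the main things to get right. The first is making $w_t(\beta)\to0$ hold uniformly over the joint limit: the ratio $(1-\beta)/(1-\beta^t)$ is a $0/0$ form as $\beta\to1$ with $t$ fixed, so one cannot simply send $\beta\to1$ first; the case split above resolves this, and if the statement is read with iterated limits ($t\to\infty$ then $\beta\to1$) it collapses to the trivial $w_t(\beta)\to(1-\beta)/(1+\beta)\to0$. The second is the exclusion of degenerate coordinates $q_i=0$, where the target $\E[\bar{g}_t]/\sqrt{\E[\bar{g}_t^2]}$ is genuinely undefined — the same implicit convention used when Adam's original analysis calls its moment ratio ``consistent''.
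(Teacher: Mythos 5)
Your proposal is correct and follows essentially the same route as the paper's proof: exact unbiasedness of $\hat{m}_t^p$ and $\hat{v}_t^{\textnormal{corr}}$, a variance computation driven to zero by the EMA weights, Chebyshev's inequality, and the continuous mapping theorem to pass to the ratio. Your treatment is in fact slightly more careful than the paper's on two minor points --- the explicit $w_t(\beta)\le 4\max\{1-\beta,1/t\}$ case split handling the joint limit $\beta\to 1$, $t\to\infty$, and the continuity of $(a,b)\mapsto a/\sqrt{\max(b,0)}$ at $(\mu_i,q_i)$ with $q_i>0$ --- but these are refinements of the same argument, not a different one.
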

Intuitively, under the stationarity assumption, DP-AdamBC estimates the Adam target update in the limit of averaging over a large number of steps. In practice, $\beta_1$ and $\beta_2$ trade-off the freshness of gradients used in the running estimates with the effect of averaging out DP noise. The DP-Adam update is not a consistent estimate of $\E[\bar{g}_t]/\sqrt{\E[\bar{g}_t^2]}$, but converges to $\E[\bar{g}_t]/\sqrt{\E[\bar{g}_t^2] + \Phi}$. Making $\Phi$ smaller would require increasing $B$ or decreasing $\sigma C$, resulting in a higher privacy cost per optimization step.

\paragraph{DP-AdamBC and sign-descent.} Thanks to its consistency property, the DP-AdamBC update on Equation \ref{eq:adam_corr} re-enables the sign descent interpretation for DP-Adam which closely tracks that of Adam. Ignoring the stochasticity introduced by measurements with DP noise for now:
\begin{enumerate}
    \item If for parameter $i$, $|\E{[\Bar{g}_t]}|_i \gg \sqrt{\V{[\Bar{g}_t]}_i + \Phi}$ , then $|\E{[\Bar{g}_t]}|_i \approx \sqrt{\E{[\Bar{g}^2_t]}_i}$, and $\Delta_t \approx \pm 1$. The update would be similar even without of our bias correction.
    \item If for parameter $i$, $|\E{[\Bar{g}_t]}|_i \gg \sqrt{\V{[\Bar{g}_t]}_i}$ but $|\E{[\Bar{g}_t]}|_i \ll \Phi$, then correcting for $\Phi$ ensures that $|\E{[\Bar{g}_t]}|_i \approx \sqrt{\E{[\Bar{g}^2_t]}_i}$, and $\Delta_t \approx \pm 1$, the expected behavior under Adam and the sign descent hypothesis. Without the correction, the update would be scaled as $\E{[\bar{g}_t]}/\Phi$ instead, and proportional to the gradient size, which is not the Adam or sign descent behavior.
    \item If for parameter $i$, $|\E{[\Bar{g}_t]}|_i \not\gg \sqrt{\V{[\Bar{g}_t]}_i + \Phi}$ (large gradient variance), $\Delta_t \in [-1, 1]$, performing a smooth (variance scaled) version of sign descent (not correcting for $\Phi$ would make the update closer to $0$, especially if $\Phi$ is large compared to $\V{[\Bar{g}_t]}_i$).
\end{enumerate}
In practice we cannot ignore the effect of DP noise of course. The first moment estimate $m_t^p$ is unbiased and adds variance to the optimization. We discuss the impact of stochastic measurements on the second moment next, while \S\ref{eval:empirical-effect-correction} details the empirical effects of our correction.

\paragraph{The numerical stability constant.}
The exponential moving average over DP quantities introduces measurement errors due to DP noise. It is thus possible that $\hat{v}_{i,t} - \Phi < \V{[\bar{g}_t]}_i$, and even that $\hat{v}_{i,t} - \Phi <0$. Our stability correction, $\textrm{max}( . , \gamma')$, deals with these cases similarly to Adam's $\gamma$. We expect that $\sqrt{\gamma'} \gg \gamma$ since the DP noise is typically larger than the gradients' variance.
To quantify this effect, we first analyze the error introduced by DP noise to $\hat{v}_t^{\textnormal{corr}}$ when considering a fixed sequence of clipped gradients. That is, the sequence of parameters $\theta_t$ and mini-batches is fixed. This measures the deviation of $\hat{v}_t^{\textnormal{corr}}$ from $\hat{v}_t^c$ due to DP noise, a measurement error from the quantity we are trying to estimate on a fixed sequence of parameters. In this case:

\begin{proposition}
\label{prop:bound_fixed}
Consider a fixed-in-advance sequence of model parameters $\theta_t$ and mini-batches.
For $0 < \alpha < 1$, for each dimension $i$, we have $\sP[\lvert
 \hat{v}_t^{\textnormal{corr}} - \hat{v}_t^c \rvert_i \geq \xi ] \leq \alpha$ with:
\begin{align*}
    \xi \geq
    \begin{cases}
        (\frac{1-\beta_2}{1-\beta_2^t}) \sqrt{\ln{(1/\frac{\alpha}{2})}(2v^2)}
        & 0 \leq \frac{\xi(1-\beta_2^t)}{1-\beta_2}  \leq \frac{\nu^2}{b} \\
        (\frac{1-\beta_2}{1-\beta_2^t}) \ln{(1/\frac{\alpha}{2})}2b
        & \frac{\xi(1-\beta_2^t)}{1-\beta_2}    \geq \frac{\nu^2}{b},
    \end{cases}
\end{align*}
where $\nu = (\frac{4\sigma^2C^2}{B^2})\sqrt{\frac{1-\beta_2^{2t}}{1-\beta_2^2}}, b=\frac{4\sigma^2C^2}{B^2}$.
\end{proposition}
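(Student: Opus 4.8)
The plan is to rewrite the deviation as a weighted sum of independent, mean-zero, sub-exponential random variables, bound its tail with Bernstein's inequality, and invert the bound to solve for $\xi$. Throughout, the sequences $\theta_\tau$ and mini-batches are fixed in advance, so every clipped-gradient coordinate $\bar{g}_{\tau,i}$ is a deterministic constant and the only randomness is the DP noise $z_\tau\sim\cN(0,\sigma^{2}C^{2}\I^{d})$, independent across $\tau$. Writing $\Tilde{g}_{\tau,i}=\bar{g}_{\tau,i}+z_{\tau,i}/B$, expanding the squares in the definitions of $\hat{v}_{t,i}^{\textnormal{corr}}$ and $\hat{v}_{t,i}^{c}$, and using $\frac{1-\beta_2}{1-\beta_2^{t}}\sum_{\tau=1}^{t}\beta_2^{t-\tau}=1$, I get
\[
 \hat{v}_{t,i}^{\textnormal{corr}}-\hat{v}_{t,i}^{c}
 \;=\;\frac{1-\beta_2}{1-\beta_2^{t}}\sum_{\tau=1}^{t}\beta_2^{t-\tau}\,X_{\tau,i},
 \qquad X_{\tau,i}\;\triangleq\;\Tilde{g}_{\tau,i}^{2}-\bar{g}_{\tau,i}^{2}-\Phi .
\]
Since $\Tilde{g}_{\tau,i}\sim\cN(\bar{g}_{\tau,i},\Phi)$ we have $\E[\Tilde{g}_{\tau,i}^{2}]=\bar{g}_{\tau,i}^{2}+\Phi$, so each $X_{\tau,i}$ has mean zero --- precisely the statement that the $\Phi$ subtraction cancels the bias of Equation~\ref{eq:vt} --- and the $X_{\tau,i}$ are independent over $\tau$.

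The next step is to identify the sub-exponential parameters of $X_{\tau,i}$. In distribution $X_{\tau,i}=\frac{\sigma^{2}C^{2}}{B^{2}}\big(\chi^{2}_{1}(\lambda_\tau)-1-\lambda_\tau\big)$, a scaled, centered, non-central chi-square with non-centrality $\lambda_\tau=B^{2}\bar{g}_{\tau,i}^{2}/(\sigma^{2}C^{2})$ arising from the linear ``cross'' term $\tfrac{2}{B}\bar{g}_{\tau,i}z_{\tau,i}$. A standard moment-generating-function computation for (non-central) chi-squares then shows $X_{\tau,i}$ is sub-exponential with scale $b_\tau\le b\triangleq 4\sigma^{2}C^{2}/B^{2}$ and variance proxy at most $b^{2}$, with the clipping bound $|\bar{g}_{\tau,i}|\le C$ used to keep $\lambda_\tau$ under control. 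Because $\beta_2^{t-\tau}\le 1$, the term $\beta_2^{t-\tau}X_{\tau,i}$ has scale $\le b$ and variance proxy $\le \beta_2^{2(t-\tau)}b^{2}$, so by independence $S\triangleq\sum_{\tau=1}^{t}\beta_2^{t-\tau}X_{\tau,i}$ is sub-exponential with scale $b$ and variance proxy $\nu^{2}=b^{2}\sum_{\tau=1}^{t}\beta_2^{2(t-\tau)}=b^{2}\frac{1-\beta_2^{2t}}{1-\beta_2^{2}}$, i.e.\ $\nu=\frac{4\sigma^{2}C^{2}}{B^{2}}\sqrt{\frac{1-\beta_2^{2t}}{1-\beta_2^{2}}}$ exactly as stated.

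Finally I would apply Bernstein's inequality for sub-exponential sums, $\sP[\,|S|\ge s\,]\le 2\exp\!\big(-\tfrac12\min(s^{2}/\nu^{2},\,s/b)\big)$, which reads $2\exp(-s^{2}/2\nu^{2})$ when $s\le\nu^{2}/b$ and $2\exp(-s/2b)$ when $s\ge\nu^{2}/b$. The event $\{\,|\hat{v}_{t,i}^{\textnormal{corr}}-\hat{v}_{t,i}^{c}|\ge\xi\,\}$ coincides with $\{\,|S|\ge s\,\}$ for $s=\xi(1-\beta_2^{t})/(1-\beta_2)$, so setting the relevant bound equal to $\alpha$ and solving for $\xi$ yields $\xi\ge\frac{1-\beta_2}{1-\beta_2^{t}}\sqrt{2\nu^{2}\ln(2/\alpha)}$ in the regime $0\le s\le\nu^{2}/b$ and $\xi\ge\frac{1-\beta_2}{1-\beta_2^{t}}\,2b\ln(2/\alpha)$ in the regime $s\ge\nu^{2}/b$, which are the two displayed cases (noting $\ln(1/\tfrac{\alpha}{2})=\ln(2/\alpha)$). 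I expect the main obstacle to be the second step: extracting explicit sub-exponential constants, in particular dealing cleanly with the non-central/cross term and justifying the variance-proxy bound $\nu_\tau^{2}\le b^{2}$ (perhaps at a small cost in constants); once these parameters are in hand, the Bernstein step and the inversion are routine.
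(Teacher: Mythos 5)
Your overall architecture (per-step sub-exponential parameters, additivity over independent noise draws, the two-regime Bernstein bound from \citet{wainwright2019high}, and the inversion for $\xi$) is the same as the paper's, and your Bernstein step and inversion are fine. The genuine gap is exactly the step you flagged as the ``main obstacle'': your decomposition keeps the cross term, i.e.\ $X_{\tau,i}=\tfrac{2}{B}\bar g_{\tau,i}z_{\tau,i}+\tfrac{1}{B^2}z_{\tau,i}^2-\Phi$, and you then assert a variance proxy at most $b^2=(4\sigma^2C^2/B^2)^2$. That cannot hold: expanding $\E[e^{\lambda(X_{\tau,i}-\E X_{\tau,i})}]\le e^{\lambda^2\nu_\tau^2/2}$ around $\lambda=0$ (the same Taylor argument the paper uses in its Lemma~\ref{lemma:zk2_sub_exponential}) forces $\nu_\tau^2\ge\V[X_{\tau,i}]=2\Phi^2+4\bar g_{\tau,i}^2\Phi$ with $\Phi=\sigma^2C^2/B^2$. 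The cross term alone contributes $4\bar g_{\tau,i}^2\Phi$, which for $|\bar g_{\tau,i}|$ of order $C$ is about $4\sigma^2C^4/B^2$, exceeding $b^2=16\sigma^4C^4/B^4$ by a factor of order $B^2/\sigma^2$ (roughly $10^5$ for the paper's SNLI setting $B=256$, $\sigma=0.4$). So no MGF computation, clipping notwithstanding, can give $\nu_\tau\le b$, and Bernstein with the proposition's $(\nu,b)$ does not follow from your decomposition.

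The comparison with the paper is instructive: the paper's proof never carries the cross term. It bounds only $Z_t=(\tfrac{1-\beta_2}{1-\beta_2^t})\sum_\tau\beta_2^{t-\tau}Z_\tau^2$, the centered moving average of the squared DP noise, via Lemma~\ref{lemma:zk2_sub_exponential} plus additivity, and then identifies this deviation with $\lvert\hat v_t^{\textnormal{corr}}-\hat v_t^c\rvert$; that is precisely why the stated $\nu$ and $b$ depend only on $\sigma C/B$. The term $2\bar g_k z_k$ is accounted for only in the martingale bound of Proposition~\ref{prop:bound_martingale}, where it inflates $\nu$ by the extra $\sigma C^2/B$. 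Your decomposition is actually the more faithful one, but carried through honestly it yields constants of Proposition~\ref{prop:bound_martingale}'s form (with independence in place of the Doob construction), not the constants stated here; to reproduce the stated constants you must, like the paper, treat $\hat v_t^{\textnormal{corr}}-\hat v_t^c$ as the centered average of $z_{\tau}^2/B^2$ alone, i.e.\ drop or separately argue away the $2\bar g_{\tau,i}z_{\tau,i}/B$ contribution.
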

The proof is in Appendix C.
For our SNLI example, this yields a bound of $5.933$e-09 at probability 0.05 at $t=10000$. We show in Appendix C, using empirical measurements, that this bound is accurate.
In practice, the values of $\hat{v}_t^{\textnormal{corr}}$ error are concentrated around their mean $\hat{v}_t^c$, with $\hat{v}_t^{\textnormal{corr}} - \hat{v}_t^c$ smaller than large values of $\hat{v}_t^c$, making bias correction practical.

While it can still happen that $|\Delta_{i, t}| \geq 1$, we show in \S\ref{sec:exp} that debiasing the second moment to follow the sign descent interpretation yields an improvement in model accuracy.
Finally, Appendix C also shows a Martingale analysis that does not assume a fixed sequence of parameters $\theta_t$, which are treated as random variables dependent on the noise at previous steps.
\begin{proposition}
\label{prop:bound_martingale}
    For $0 < \alpha < 1$, for each dimension $i$, we have $\sP[\lvert \hat{v}_t^p - \E{[\hat{v}_t^p]} \rvert_i \geq \xi] \leq \alpha$ with:
    \begin{align*}
    \xi \geq
    \begin{cases}
        (\frac{1-\beta_2}{1-\beta_2^t}) \sqrt{\ln{(1/\frac{\alpha}{2})}(2v^2)}
        & 0 \leq \frac{\xi(1-\beta_2^t)}{1-\beta_2}  \leq \frac{\nu^2}{b} \\
        (\frac{1-\beta_2}{1-\beta_2^t}) \ln{(1/\frac{\alpha}{2})}2b
        & \frac{\xi(1-\beta_2^t)}{1-\beta_2}  \geq \frac{\nu^2}{b},
    \end{cases}
\end{align*}
where $\nu = 2\sqrt{\frac{1-\beta_2^{2t}}{1-\beta_2^2}}( \frac{\sigma^2C^2}{B^2} + \frac{\sigma C^2}{B} )$, $b = \frac{4\sigma^2C^2}{B^2}$.
\end{proposition}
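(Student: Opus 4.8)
The plan is to adapt the independent-sum argument behind Proposition~\ref{prop:bound_fixed} into a \emph{martingale} argument, since the clipped gradients $\bar g_\tau$ are no longer fixed but depend on the DP noise injected at earlier steps. Fix a coordinate $i$ and recall $\hat v_{t,i}^p = \tfrac{1-\beta_2}{1-\beta_2^t}\sum_{\tau=1}^{t}\beta_2^{t-\tau}\tilde g_{\tau,i}^2$ with $\tilde g_{\tau,i} = \bar g_{\tau,i} + \tfrac{1}{B} z_{\tau,i}$, so that $\tilde g_{\tau,i}^2 = \bar g_{\tau,i}^2 + \tfrac{2}{B}\bar g_{\tau,i} z_{\tau,i} + \tfrac{1}{B^2} z_{\tau,i}^2$. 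Let $\gF_\tau = \sigma(z_1,\dots,z_\tau)$; then $\bar g_{\tau,i}$ (through $\theta_{\tau-1}$) is $\gF_{\tau-1}$-measurable, $z_{\tau,i}\sim\cN(0,\sigma^2C^2)$ is independent of $\gF_{\tau-1}$, and clipping gives the almost-sure bound $|\bar g_{\tau,i}|\le\lVert\bar g_\tau\rVert_2\le C$. Define $D_\tau \triangleq \E[\hat v_{t,i}^p\mid\gF_\tau] - \E[\hat v_{t,i}^p\mid\gF_{\tau-1}]$, a martingale-difference sequence with $\sum_{\tau=1}^{t}D_\tau = \hat v_{t,i}^p - \E[\hat v_{t,i}^p]$; its dominant piece is the fresh-noise term $\tfrac{1-\beta_2}{1-\beta_2^t}\beta_2^{t-\tau}\big(\tfrac{2}{B}\bar g_{\tau,i}z_{\tau,i} + \tfrac{1}{B^2}(z_{\tau,i}^2-\sigma^2C^2)\big)$, the remainder being the effect of $z_\tau$ on the later clipped gradients $\bar g_{\tau',i}$, $\tau'>\tau$.

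Next I would bound the conditional moment generating function $\E[e^{\lambda D_\tau}\mid\gF_{\tau-1}]$. Conditioning on $\gF_{\tau-1}$ fixes $\bar g_{\tau,i}$; using the exact Gaussian moments of $z_{\tau,i}$ together with $|\bar g_{\tau,i}|\le C$, the cross term $\tfrac{2}{B}\bar g_{\tau,i}z_{\tau,i}$ is sub-Gaussian with scale $\le 2\sigma C^2/B$, while $\tfrac{1}{B^2}(z_{\tau,i}^2-\sigma^2C^2)$, being $\tfrac{\sigma^2C^2}{B^2}$ times a centered chi-square variable with one degree of freedom, is sub-exponential with variance proxy $\propto\sigma^4C^4/B^4$ and scale $4\sigma^2C^2/B^2$. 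This yields a Bernstein-type conditional bound $\E[e^{\lambda D_\tau}\mid\gF_{\tau-1}]\le\exp(\lambda^2\nu_\tau^2/2)$ valid for $|\lambda|\le 1/b_\tau$, with $\nu_\tau^2 \propto \big(\tfrac{1-\beta_2}{1-\beta_2^t}\big)^2\beta_2^{2(t-\tau)}\big(\tfrac{\sigma^2C^2}{B^2}+\tfrac{\sigma C^2}{B}\big)^2$ and $b_\tau \le \tfrac{1-\beta_2}{1-\beta_2^t}\cdot 4\sigma^2C^2/B^2$; note the $\tfrac{\sigma C^2}{B}$ term in $\nu_\tau$ is exactly the cross-term contribution that is absent from the fixed-sequence bound.

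Then I would chain the conditional bounds along $\gF_0\subseteq\cdots\subseteq\gF_t$ (multiplying the MGFs via iterated expectations) to obtain $\E[e^{\lambda(\hat v_{t,i}^p-\E[\hat v_{t,i}^p])}]\le\exp(\lambda^2 V/2)$ for $|\lambda|\le 1/b_{\mathrm{eff}}$, where $V=\sum_{\tau=1}^{t}\nu_\tau^2 = \big(\tfrac{1-\beta_2}{1-\beta_2^t}\big)^2\nu^2$ using $\sum_{\tau=1}^{t}\beta_2^{2(t-\tau)} = \tfrac{1-\beta_2^{2t}}{1-\beta_2^2}$ and the $\nu$ in the statement, and $b_{\mathrm{eff}} = \tfrac{1-\beta_2}{1-\beta_2^t}\cdot 4\sigma^2C^2/B^2$. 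A Chernoff argument then gives the two-regime tail $\sP[|\hat v_{t,i}^p-\E[\hat v_{t,i}^p]|\ge\xi]\le 2\exp\!\big(-\tfrac{1}{2}\min(\xi^2/V,\ \xi/b_{\mathrm{eff}})\big)$; setting the right-hand side equal to $\alpha$ and solving for $\xi$, then re-expressing through the statement's $b = 4\sigma^2C^2/B^2$, yields the sub-Gaussian branch $\xi\ge\tfrac{1-\beta_2}{1-\beta_2^t}\sqrt{2\nu^2\ln(1/\tfrac{\alpha}{2})}$ (active when $\tfrac{\xi(1-\beta_2^t)}{1-\beta_2}\le\nu^2/b$) and the sub-exponential branch $\xi\ge\tfrac{1-\beta_2}{1-\beta_2^t}\cdot 2b\ln(1/\tfrac{\alpha}{2})$, which is precisely the claimed bound.

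The main obstacle is the remainder piece of $D_\tau$ flagged in the first step: because $\theta_{\tau-1}$, hence $\bar g_{\tau',i}$ and $\bar g_{\tau',i}^2$ for $\tau'>\tau$, depend on $z_1,\dots,z_{\tau-1}$, each $D_\tau$ is only \emph{conditionally} (not unconditionally) sub-exponential — which is exactly why a martingale concentration inequality must replace the independent-sum bound of Proposition~\ref{prop:bound_fixed} — and the noise-propagation part of $D_\tau$ has to be controlled using only the almost-sure clipping bound $|\bar g_{\cdot,i}|\le C$, with no appeal to smoothness or a Lipschitz property of the loss (which would introduce $\eta$- and curvature-dependent factors absent from the statement). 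Showing that this contribution is dominated by, or absorbed into, the advertised $\nu$ and $b$ is the delicate point; equivalently, one may center $\hat v_{t,i}^p$ at the $\gF_{\tau-1}$-predictable process $\tfrac{1-\beta_2}{1-\beta_2^t}\sum_\tau\beta_2^{t-\tau}(\bar g_{\tau,i}^2+\Phi)$, for which the corresponding $D_\tau$ is exactly the fresh-noise term and the argument above goes through verbatim. Everything else is a routine adaptation of the computation already carried out for Proposition~\ref{prop:bound_fixed}.
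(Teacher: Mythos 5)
Your proposal follows essentially the same route as the paper's proof: a Doob martingale over the noise filtration $\sigma(z_1,\dots,z_\tau)$ with the batch sequence fixed in advance, conditional sub-exponential control of the increments (the cross term $2\beta_2^{t-k}\bar g_k z_k$ sub-Gaussian with scale $2\beta_2^{t-k}\sigma C^2/B$ via the clipping bound $|\bar g_k|\le C$, the squared-noise term sub-exponential with $b_k = 4\beta_2^{t-k}\sigma^2C^2/B^2$), combined additively into $\nu^{*} = 2\sqrt{(1-\beta_2^{2t})/(1-\beta_2^{2})}\,(\sigma^2C^2/B^2 + \sigma C^2/B)$ and $b^{*} = 4\sigma^2C^2/B^2$, followed by the two-regime Bernstein tail and the $(1-\beta_2)/(1-\beta_2^{t})$ rescaling. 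The ``delicate point'' you flag --- propagation of $z_k$ into the later clipped gradients --- is treated in the paper exactly as in your ``equivalently'' remark: in its computation of $D_k$ the $\tau>k$ terms cancel between the two conditional expectations (using only that $\bar g_k$ is deterministic given $Z_1,\dots,Z_{k-1}$ and the fixed batches), so the increments reduce to the fresh-noise terms $\beta_2^{t-k}(z_k^2 + 2\bar g_k z_k - \V(Z_k))$ and no further absorption of the propagation effect into $\nu$ or $b$ is carried out.
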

The error bound to $\E[\hat{v}_t^{\textnormal{corr}}]$ is much larger in this case, and not as useful in practice since we want to scale $\gamma'$ based on the realized trajectory.

\paragraph{Convergence of DP-AdamBC.}
To show Assumption \ref{assumption:stationarity} is not required for convergence, we study DP-AdamBC and DP-Adam under the setting of \citet{défossez2022simple}, adding the bounded gradient assumption from \citet{li2023dp2} to adapt it to the DP setting.
The main difference is we derive a high probability bound using techniques similar to that of Proposition \ref{prop:bound_martingale}. This allows us to deal with tecnically unbounded DP noise sampled from a Normal distribution. Note that both the theoretical convergence result and empirical results do not rely on Assumption \ref{assumption:stationarity}, which is only useful for matching the intuition to that of Adam and sign descent (and informs our algorithm). The detailed convergence rates and proofs, as well as a discussion, are in Appendix F.

\section{Empirical effect of Correcting for DP bias}
\label{sec:exp}

We compare the performance of DP-SGD, DP-Adam, and DP-AdamBC on image, text and graph node classification tasks with CIFAR10 \citep{cifar10_data}, SNLI \citep{bowman-etal-2015-large}, QNLI \citep{wang2019glue} and ogbn-arxiv \citep{hu2021open} datasets. We evaluate the training-from-scratch setting: for image classification, we use a 5-layer CNN model and all of the model parameters are initialized randomly; for text classification, only the last encoder and the classifier blocks are initialized randomly and the other layers inherit weights from pre-trained BERT-base model \citep{bert_paper}; for node classification, we train a DP-GCN model \citep{daigavane2022nodelevel} from scratch without per-layer clipping. For each optimizer, we tune the learning rate, as well as $\gamma$ or $\gamma'$, to maximize test accuracy at different values of $\epsilon$ for $\delta=1$e-5: $\epsilon \in \{1, 3, 7\}$ for CIFAR10, SNLI and QNLI, and $\epsilon \in \{3, 6, 12\}$ for ogbn-arxiv.
Appendix A includes the detailed dataset and model information, experiment setups and hyperparameters.

Table \ref{tab:res_multi_eps} shows the performance of different optimizers. DP-AdamBC often outperforms both DP-Adam and DP-SGD on NLP datasets (SNLI and QNLI), generally by 1 percentage point and up to 3.5 percentage points on SNLI for large $\epsilon=7$. DP-AdamBC retains a similar performance to DP-Adam on CIFAR10 while DP-SGD outperforms both, and even has an advantage over both DP-Adam and DP-SGD on obgn-arxiv for smaller $\epsilon$ values (4 percentage point at $\epsilon=3$, and 1.5 at $\epsilon=6$).
In Appendix D, we include full training trajectory plots (Figure 8), graphical comparison of optimizers' performances (Figure 6), and further examine the generalizability of our method by comparing to baselines with larger dataset and models (Figure 9).

{\bf Discussion.} Based on the experiment results and Adam's sign descent behaviour \citep{kunstner2023heavytailed}, we hypothesize that DP-AdamBC has a larger advantage on tasks and architectures for which Adam and sign descent outperform SGD in the non-private case. The hypothesis follows from DP-Adam's similarity to DP-SGD-with-Momentum (\S\ref{sec:motiv}), showing that DP-SGD and DP-Adam are closer to SGD-style algorithms, whereas DP-AdamBC is closer to the intended behavior of Adam under DP. Our experiments provide some evidence to support this reasoning: DP-AdamBC outperforms other approaches on tasks where Adam outperforms in the non-private case (WikiText-2 Transformer-XL experiment, Figure 3 \citet{kunstner2023heavytailed}); in the two cases in which DP-SGD or DP-Adam perform similarly to DP-AdamBC (CIFAR10 and obgn-arxiv in Figure 3), SGD is well documented to perform better without privacy (\citet{wilson2018marginal}, Table 1 in \citet{daigavane2022nodelevel}, respectively). Therefore, we would recommend using DP-AdamBC for DP training on tasks and model architectures on which Adam is expected (or has often been documented) to perform better than SGD without privacy. This includes modern NLP tasks with transformer-based models where Adam has been used extensively for its strong empirical performances.

{\bf Comparisons to previous work.} We compare the performance of DP-AdamBC to that of a recent Adam-like adaptive optimizer specially developped for DP, named $\textnormal{DP}^{2}$ \citep{lidp2}. $\textnormal{DP}^{2}$ uses delayed pre-conditioners to better realize the benefits of adaptivity. However, the algorithm was only evaluated on simple models, and we show that it doesn not work on the deep learning models we consider. Figure \ref{fig:compare_dp2_snli_only} (Left) shows the comparison between $\textnormal{DP}^{2}$-
RMSProp, DP-AdamBC and DP-SGD on CIFAR10 on SNLI dataset with Bert-base model. We observe that $\textnormal{DP}^{2}$-RMSProp first follows DP-SGD (since the first steps use this optimizer), and then struggles to converge on deep learning tasks, leading to poor performance. Indeed switching between two optimizers seems to make $\textnormal{DP}^{2}$ unstable: Figure \ref{fig:compare_dp2_snli_only} (Right) shows the performance of $\textnormal{DP}^{2}$ with different $s$ (switching frequency). We observe that during training, $\textnormal{DP}^{2}$'s performance either has large turbulence or drops significantly at switching points between optimizers. DP-AdamBC does not suffer from this issue. More analysis and experiments are in Appendix D.

\begin{table}[htb]
\centering
\resizebox{0.44\textwidth}{!}{%
\begin{tabular}{ccccc}
\hline\hline
 &  & $\epsilon \approx 1$ & $\epsilon \approx 3$ & $\epsilon \approx 7$ \\ \hline
\multirow{3}{*}{SNLI} & DP-SGD & \textbf{48.03 (1.25)} & 45.11 (1.84) & 51.04 (0.52) \\
 & DP-Adam & 44.72 (1.26) & 47.52 (1.75) & 52.63 (1.91) \\
 & DP-AdamBC & 45.17 (1.04) & \textbf{50.08 (1.57)} & \textbf{56.08 (0.99)} \\ \hline
\multirow{3}{*}{QNLI} & DP-SGD & 57.10 (1.59) & 58.85 (1.20) & 58.29 (0.92) \\
 & DP-Adam & 58.00 (2.05) & 60.72 (1.12) & 61.23 (1.30) \\
 & DP-AdamBC & \textbf{58.32 (1.90)} & \textbf{61.42 (0.99)} & \textbf{62.83 (1.60)} \\ \hline
\multirow{3}{*}{CIFAR10} & DP-SGD & \textbf{52.37 (0.50)} & \textbf{57.30 (0.76)} & \textbf{65.30 (0.33)} \\
 & DP-Adam & 51.89 (0.69) & 54.08 (0.41) & 62.24 (0.10) \\
 & DP-AdamBC & 49.75 (0.56) & 54.27 (0.23) & 63.43 (0.43) \\ \hline
 &  & $\epsilon \approx 3$ & $\epsilon \approx 6$ & $\epsilon \approx 12$ \\ \hline
\multirow{3}{*}{obgn-arxiv} & DP-SGD & 45.35 (1.38) & 49.12 (1.90) & \textbf{54.20 (0.62)} \\
 & DP-Adam & 46.55 (0.54) & 51.98 (0.48) & 54.02 (0.18) \\
 & DP-AdamBC & \textbf{50.51 (0.56)} & \textbf{53.40 (0.28)} & 53.81 (0.34) \\ \hline\hline
\end{tabular}%
}
\caption{Accuracy under different optimizers, for several privacy budgets. Hyper-parameters are tuned for each target $\epsilon$ and optimizer. Mean (standard deviation) over 5 runs for the best hyper-parameters.}
\label{tab:res_multi_eps}
\end{table}

\begin{figure}[t]
\centering
\includegraphics[width=0.85\linewidth]{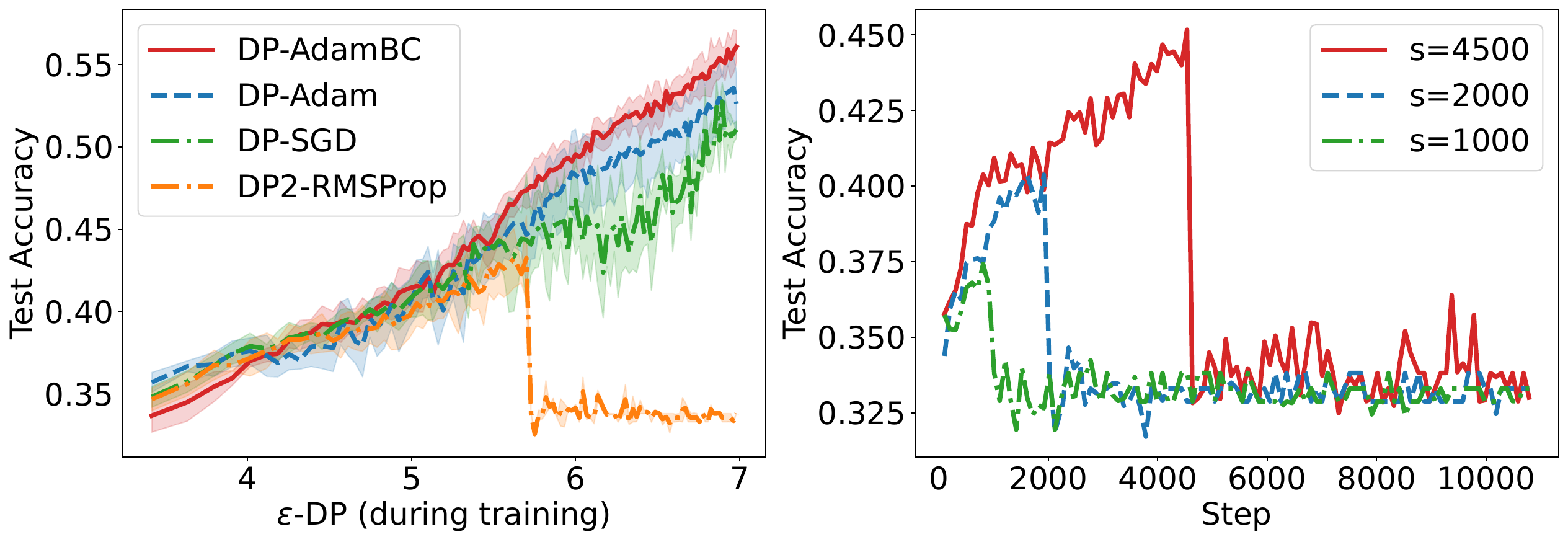}
\caption{\textbf{Left:} Comparison between DP2RMSProp, DP-AdamBC, DP-Adam and DP-SGD and \textbf{Right:} the performance of DP2RMSProp with different phase switching frequency $s$ on SNLI with Bert-base.}
\label{fig:compare_dp2_snli_only}
\end{figure}

\subsection{Empirical Effect of Bias Correction}
\label{eval:empirical-effect-correction}

\paragraph{First and second moment estimates of un-noised and private gradients.}
We numerically compare the scale of the first and second moment estimates based on un-noised and private gradients, $\hat{m}_t^c, \hat{m}_t^p, \hat{v}_t^c, \hat{v}_t^p$ respectively, at different training step $t$. The corresponding un-noised, noised and corrected updates are $\Delta_t^{c} = \frac{\hat{m}_t^p}{\sqrt{\hat{v}_t^c}}, \Delta_t^{p} = \frac{\hat{m}_t^p}{\sqrt{\hat{v}_t^p}}$ and $\Delta_t^{\textnormal{corr}} = \frac{\hat{m}_t^p}{\sqrt{\hat{v}_t^\textnormal{corr}}}$.
Table \ref{tab:exp2} shows the summary statistics of these variables near end of training, computed with the SNLI dataset with $B=256, C=0.1, \sigma=0.4, \Phi \approx 2.441$e-8 in the limit of $t$.
We observe that the difference between $\hat{m}_t^{c}$ and $\hat{m}_t^{p}$ is much smaller than that of $\hat{v}_t^{c}$ and $\hat{v}_t^{p}$, especially in the mean values (the empirical measures of the expectation). In particular, the mean of $\hat{v}_t^{p}$ is approximately $\Phi$, which suggests that the DP bias $\Phi$ dominates over the un-noised estimates of second moment $\hat{v}_t^{c}$.
We also observe that the scale of $\hat{v}_t^{p}$ is generally close to $\Phi$, which suggests the private estimate of the second moments are largely affected by the DP noise.
The scale of the corrected second moment estimates, $\hat{v}_t^{\textnormal{corr}} = \max{(\hat{v}_t - \Phi , \gamma')}$ is closer to the scale of $\hat{v}_t^{c}$, with the numerical stability constant ($\gamma'=3$e-10) preventing tiny denominator values.
If no correction is imposed, $\Phi$ dominates in $\E{[\tilde{g}_t]}$ making the update smaller. The tuned learning rate is larger to compensate, but the update $\Delta_t$ is still proportional to the first moment $\E[\bar{g}_t]$. This is not compatible with the behavior of sign descent (\S\ref{sec:method}).
\begin{table}[t]
\centering
\renewcommand{\arraystretch}{1.1}
\resizebox{0.45\textwidth}{!}{%
\begin{tabular}{c|cccccc}
\hline
 & \textbf{Min} & \textbf{Q1} & \textbf{Median} & \textbf{Q3} & \textbf{Max} & \textbf{Mean} \\ \hline
$m_t^c$ & -7.505e-05 & -7.051e-08 & -2.170e-18 & 7.056e-08 & 7.516e-05 & 4.194e-10 \\
$\hat{m}_t^p$ & -1.879e-04 & -2.428e-05 & 1.204e-08 & 2.427e-05 & 1.833e-04 & 6.120e-09 \\ \hline
$\hat{v}_t^{c}$ & 4.119e-24 & 8.297e-14 & 4.090e-13 & 9.819e-13 & 2.729e-08 & 4.032e-12 \\
$\hat{v}_t^{p}$ & 2.068e-08 & 2.408e-08 & 2.460e-08 & 2.513e-08 & 5.524e-08 & 2.461e-08 \\
$\hat{v}_t^{\textnormal{corr}}$ & 3.000e-10 & 3.000e-10 & 3.000e-10 & 7.137e-10 & 3.082e-08 & 5.633e-10 \\ \hline
$\Delta_t^{c}$ & -2.732e+07 & -3.933e+01 & 1.507e-02 & 3.938e+01 & 1.938e+07 & -2.663e+01 \\
$\Delta_t^{p}$ & -1.218 & -1.548e-01 & 7.680e-05 & 1.548e-01 & 1.159 & 3.921e-05 \\
$\Delta_t^{\textnormal{corr}}$ & -1.085e+01 & -1.116 & 5.473e-04 & 1.116 & 1.026e+01 & 3.650e-04 \\ \hline
\end{tabular}%
}
\caption{Moment estimates with un-noised and noised gradient, w/ and w/o bias correction, at step $t=10000$.}
\label{tab:exp2}
\end{table}

To further study the effect of DP noise and of our bias correction, we compare the distribution of the private, un-noised, and corrected variables. The same dataset and hyperparameters are used for demonstration.
Figure \ref{fig:mt_vt_hist} (Left) and (Middle) shows the histogram of private ($\sqrt{\hat{v}_t^p}$), un-noised ($\sqrt{\hat{v}_t^c}$) and corrected ($\sqrt{\hat{v}_t^{\textnormal{corr}}}$) second moment estimates, when $\gamma'=3e$-12 and $3e$-10 respectively.
We see that the distributions of $\hat{v}_t^c$ and $\hat{v}_t^p$ are quite different, with a shift in the center approximately equal to $\sqrt{\Phi}$. This suggests that the DP noise variance dominates the scale of $v_t^p$ in Equation \ref{eq:vt}. The corrected second moment estimates are much closer in scale to the clean estimates, with the gap near 0 due to the effect of the numerical stability constant $\gamma'$.
Figure \ref{fig:mt_vt_hist} (Right) shows the distribution of the noised ($\Delta_t^p$) and corrected ($\Delta_t^{\textnormal{corr}}$) Adam updates with respect to the noised first moment $\hat{m}_t^{p}$, rescaled to $[-1, 1]$. We observe that the private distribution is heavily concentrated around 0. The bias correction alleviates the concentration around 0 in the distribution, which is consistent with the interpretation in \S \ref{sec:method}.

\begin{figure}[t]
\centering
\includegraphics[width=0.95\linewidth]{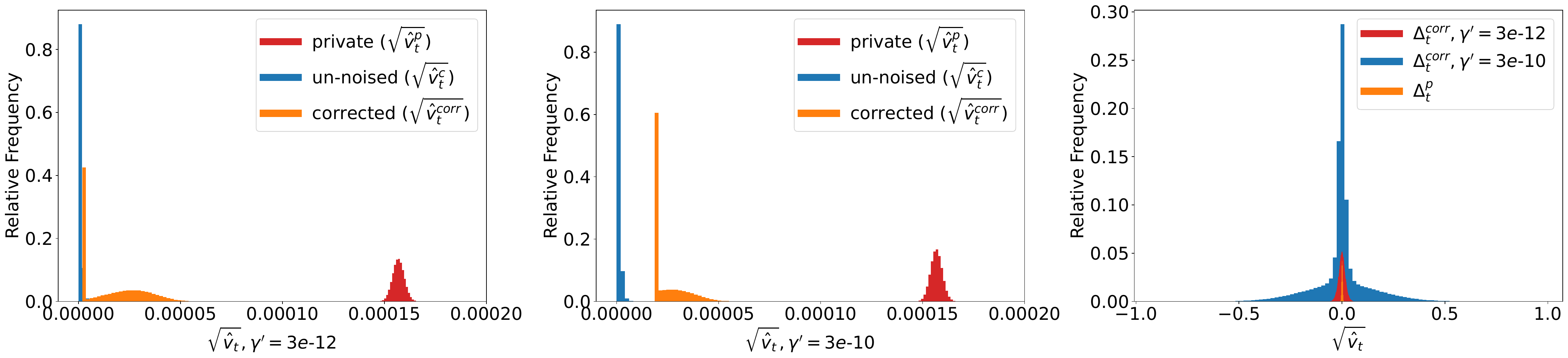}
\caption{Histogram of private ($\hat{v}_t^p$), un-noised ($\hat{v}_t^c$) and corrected ($\hat{v}_t^{\textnormal{corr}}$) second moment estimates with \textbf{Left: } $\gamma'=3e$-12, \textbf{Middle: }$\gamma'=3e$-10. \textbf{Right: } private ($\Delta_t^p$) and corrected ($\Delta_t^{\textnormal{corr}}$) Adam updates with respect to $m_t^{p}$.
}
\label{fig:mt_vt_hist}
\end{figure}

\paragraph{Correcting second moment with different values.}
We test whether the noise variance $\Phi$ is indeed the correct value to subtract from the noisily estimated $v_t^{p}$, by subtracting other values $\Phi'$ at different scales instead. In Figure \ref{fig:exp3_4} (Upper Left) we compare the performance of correcting $v_t^{p}$ with the true $\Phi$=2.4e-8 versus $\Phi'$. The experiments of DP-Adam($\Phi'$=1e-7) and DP-Adam($\Phi'$=1e-9) are trained using the same DP hyperparameters except changing value of $\Phi$ to $\Phi'$ and with coarsely tuned learning rates. We observe that both values of $\Phi'>\Phi$ or $\Phi'<\Phi$ lead to weaker performance. It suggests that the DP noise bias in the second moment estimate may be responsible for the degraded performance, and correcting for a different value does not provide a good estimate for $\E{[\overline{g}^2_t]}$.

\subsection{Hyperparameter Analysis}

\paragraph{Effect of the numerical stability constant.}
The numerical stability constant $\gamma$ is known to affect the performance of Adam in the non-private setting, and $\gamma$ is often tuned as a hyperparameter \citep{Reddi2019}. Following the same logic, we test the effect of $\gamma'$ and $\gamma$ on the performance of DP-AdamBC and DP-Adam. Figure \ref{fig:exp3_4} (Upper Right) shows that $\gamma'$ indeed impacts the performance of DP-Adam: values of $v_t^{p}$ are small, and changing $\gamma'$ can avoid magnifying a large number of parameters with tiny estimates of $v_t^{c}$.
Figure \ref{fig:exp3_4} (Lower Left) shows the effect of tuning $\gamma$ in DP-Adam. We observe that DP-AdamBC's numerical stability constant does have an impact on performance, but smaller than DP-Adam's equivalent. This is because the large scale of $\Phi$ makes estimates of $v_t^{p}$ relatively large and similar among parameters. We also observe that tuning $\gamma$ with DP-Adam is not a substitute for correcting for DP noise bias $\Phi$, and DP-AdamBC achieves higher accuracy.

\begin{figure}[htb]
\centering
\includegraphics[width=0.9\linewidth]{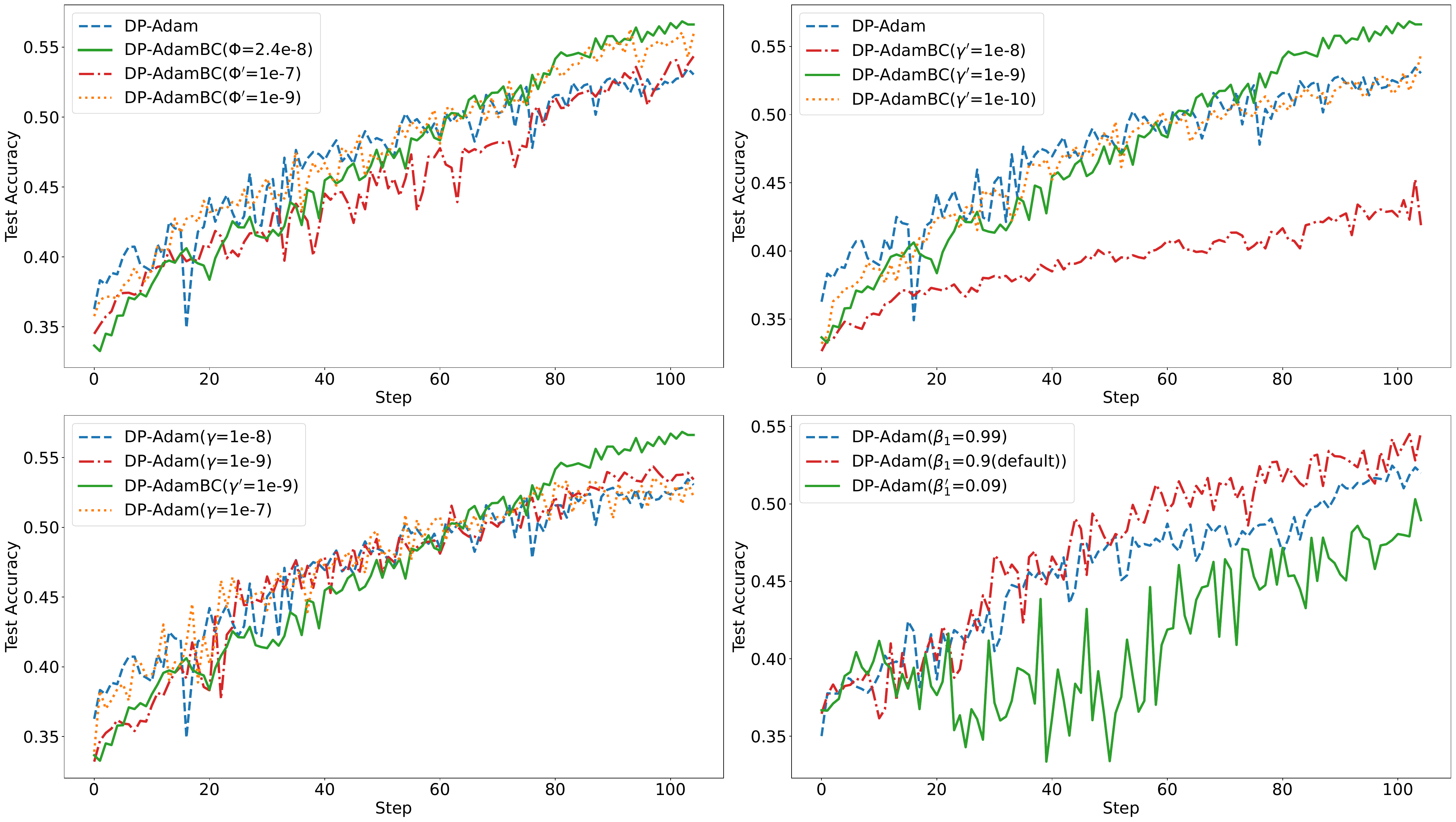}
\caption{Performance when \textbf{Upper Left: }subtracting different (fake) values of $\Phi$,
\textbf{Upper Right: }tuning $\gamma'$ in DP-AdamBC, \textbf{Lower Left: }tuning $\gamma$ in DP-Adam, \textbf{Lower Right: }tuning $\beta$s in DP-Adam. Tuning hyperparameters in DP-Adam cannot replace DP-AdamBC's bias correction.
}
\label{fig:exp3_4}
\end{figure}

\paragraph{Effect of the moving average coefficients.}
The $\beta$ coefficients control the effective length of the moving average window in Adam's estimates of the moments. It thus balances the effect of averaging out the noise, versus estimating moments with older gradients. A larger $\beta$ implies averaging over a longer sequence of past gradients, which potentially benefits performance by decreasing the effect of noise. Figure \ref{fig:exp3_4} (Lower Right) shows the effect of choosing different $\beta$ in DP-Adam, with the learning rate $\eta$ coarsely tuned from 1e-4 to 1e-2. As suggested in \citet{orig_adam}, we set $\beta_1$ and choose $\beta_2$ such that $(1-\beta_1) = \sqrt{1-\beta_2}$. We observe that setting $\beta$s too large or too small is worse than choosing the default values ($\beta_1=0.9, \beta_2=0.99$). Setting $\beta$ smaller shows a clear disadvantage as the performance is both worse and more volatile due to less smoothing over noise. Setting a larger $\beta$ results in similar performance at the end of training. However, lowering the effect of noise this way does not yield similar improvements as correcting for DP noise bias in the second moments.

\clearpage\clearpage

\section*{Acknowledgment}
We are grateful for the support of the Natural Sciences and Engineering Research Council of Canada (NSERC) [reference number RGPIN-2022-04469], as well as a Google Research Scholar award. This research was enabled by computational support provided by the Digital Research Alliance of Canada (alliancecan.ca), and by the University of British Columbia's Advanced Research Computing (UBC ARC).

\bibliography{main}

\clearpage

\appendix

\section{Experiment Setups}
\label{apdix:exp_setup}

\paragraph{Dataset.}
For image classification we use CIFAR10 \citep{cifar10_data} which has 50000 training images and 10000 test images. We use the standard train/test split and preprocessing steps as with \texttt{torchvision}.
For text classification we use the SNLI dataset \citep{bowman-etal-2015-large} and the QNLI dataset \citep{wang2019glue}. We use the same train/test split and preprocessing steps as in \texttt{Opacus}'s text classification with DP tutorial.
For node classification, we use the graph dataset, ogbn-arxiv \citep{hu2021open}. In this graph, nodes represent arXiv Computer Science papers and directed edges represents paper cites paper. This graph dataset has 169,343 nodes, average degree 13.7, 128 features, 40 classes, and 0.54/0.18/0.28 train/val/test split.

\paragraph{Model.}
For image classification on CIFAR10, we use a 5-layered CNN model as described in \citep{papernot2020tempered}.
For text classification on SNLI, we use a BERT-base model \citep{bert_paper} as in \texttt{Opacus}'s text classification tutorial.
For node classification, wee use a DP-GCN from \citet{daigavane2022nodelevel}. For DP-Adam and DP-AdamBC, this model has one encoder layer, one message passing layer, and two decoder layers. For DP-SGD, this model has two encoder layers and one decoder layer instead. In both cases, we use a latent size of 100 for the encoder, GNN, and decoder due to memory constraints.

\paragraph{Hyperparameters.}
For image classification on CIFAR10, the DP hyperparameters $C=1.0, \sigma=2.0$, batch size $=8192$, target $\epsilon = 7.1$ with R\'enyi DP for privacy accounting from Opacus \citep{opacus}. The learning rate for DP-AdamBC, DP-Adam and DP-SGD are 0.005, 0.007 and 2.5 respectively. The numerical stability constant is $\gamma'=5e$-8 and $\gamma=1$e-8 for DP-AdamBC, DP-Adam respectively. $\beta_1=0.9, \beta_2=0.999$ in both DP-AdamBC and DP-Adam. We use the Adam and SGD implementation from \texttt{optax} \citep{deepmind2020jax}.

For text classification on SNLI, the DP hyperparameters are $C=0.1, \sigma=0.4$, batch size $=256$, target $\epsilon = 7.0$ with R\'enyi DP for privacy accounting from Opacus \citep{opacus}. The learning rate for DP-AdamBC, DP-Adam and DP-SGD are 0.001, 0.01 and 45.0 respectively. The numerical stability constant is $\gamma'=1e$-10 and $\gamma=1$e-8 for DP-AdamBC, DP-Adam respectively. For text classification on QNLI, the DP hyperparameters are $C=0.1, \sigma=0.4$, $B=256$, target $\epsilon = 7.3$ with R\'enyi DP for privacy accounting from \texttt{Opacus}. The learning rate for DP-AdamBC, DP-Adam and DP-SGD are 0.003, 0.01 and 40.0 respectively. The numerical stability constant is $\gamma'=3e$-9 and $\gamma=1$e-8 for DP-AdamBC, DP-Adam respectively. We use the Adam and SGD implementation from \texttt{PyTorch} \citep{pytorch}.

For node classification in Figure \ref{fig:exp1}, we use a batch size of 10,000 and target $\epsilon = 12.0$ with Rényi DP privacy accounting from \citet{daigavane2022nodelevel}. We use noise multiplier $\lambda = 2$ and maximum degree $K = 7$ as they are defined in \citet{daigavane2022nodelevel}. In their paper, they use per-layer clipping; for each layer, their clipping thresholds are chosen as the 75th percentile of the gradient norms for that layer. We do not use per-layer clipping, we choose the clipping threshold $C$ as the median of their per-layer 75th percentile clipping thresholds. The learning rate for DP-AdamBC, DP-Adam, and DP-SGD are 0.003, 0.008, and 0.7 respectively. The numerical stability constant is $\gamma' = 2e$-6 and $\gamma = 1e$-12 for DP-AdamBC and DP-Adam respectively.

\paragraph{Hardware information.} We run experiments on local machine with Intel 11th 2.5GHz CPU and one Nvidia GeForce RTX 3090 GPU. The typical training time is about 15min, 2.5h and 10min on our image, text and node classification tasks respectively.

\section{Proof of Proposition \ref{proposition:consistent}}
\label{apdix:consistent}

\begin{proposition*}[\ref{proposition:consistent}]
Under Assumption \ref{assumption:stationarity}, the DP-AdamBC update (without numerical stability constant) $\frac{\hat{m}_t^p}{\sqrt{\max(\hat{v}_t^{\textnormal{corr}}, 0)}}$ is a consistent estimator of $\frac{\E[\bar{g}_t]}{\sqrt{\E[\bar{g}_t^2]}}$ as $\beta_1, \beta_2 \rightarrow 1$, and $t \rightarrow \infty$.
\end{proposition*}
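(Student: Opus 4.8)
The plan is to reduce the claim to two applications of the weak law of large numbers for weighted sums of i.i.d.\ variables, finished off with the continuous mapping theorem. First I would record what Assumption~\ref{assumption:stationarity} buys us: since the minibatch gradients are i.i.d.\ across $\tau$ and clipping is a fixed deterministic map, the clipped gradients $\bar g_\tau$ are i.i.d., each with $\ell_2$ norm at most $C$ (hence componentwise moments of all orders are finite), and the DP gradients $\tilde g_\tau = \bar g_\tau + z_\tau/B$ are i.i.d.\ with $z_\tau\sim\cN(0,\sigma^2C^2\I)$ independent of $\bar g_\tau$. Write $\mu \triangleq \E[\bar g_\tau]$ and $s \triangleq \E[\bar g_\tau^2]$ (componentwise), both constant in $\tau$; the target is $\mu/\sqrt s$.

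Second, the first moment. With weights $w_\tau = (1-\beta_1)\beta_1^{t-\tau}/(1-\beta_1^t)$ summing to one, $\hat m_t^p = \sum_{\tau=1}^t w_\tau \tilde g_\tau$, so $\E[\hat m_t^p]=\mu$ exactly (this is Equation~\ref{eq:mt}) and, by independence, $\Var[\hat m_t^p] = \big(\sum_\tau w_\tau^2\big)\big(\Var[\bar g_\tau]+\sigma^2C^2/B^2\big)$. A geometric-series computation gives the closed form $\sum_\tau w_\tau^2 = \frac{(1-\beta_1)(1+\beta_1^t)}{(1+\beta_1)(1-\beta_1^t)}$, which tends to $\frac{1-\beta_1}{1+\beta_1}$ as $t\to\infty$ and then to $0$ as $\beta_1\to1$. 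Chebyshev's inequality then gives $\hat m_t^p\to\mu$ in probability in this limit.

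Third, the corrected second moment. Since $\hat v_t^{\textnormal{corr}}$ is exactly $\hat v_t^p-\Phi$, Equation~\ref{eq:vt} gives $\E[\hat v_t^{\textnormal{corr}}] = \E[\hat v_t^c] = s$ under stationarity, and $\Var[\hat v_t^{\textnormal{corr}}] = \big(\sum_\tau u_\tau^2\big)\Var[\tilde g_\tau^2]$ with $u_\tau = (1-\beta_2)\beta_2^{t-\tau}/(1-\beta_2^t)$. Here $\Var[\tilde g_\tau^2]$ is finite because $\tilde g_\tau^2 = \bar g_\tau^2 + 2\bar g_\tau z_\tau/B + z_\tau^2/B^2$, with $\bar g_\tau$ bounded and $z_\tau$ having finite fourth moment, and $\sum_\tau u_\tau^2\to0$ exactly as in the previous step. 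Hence $\hat v_t^{\textnormal{corr}}\to s$ in probability. Assuming $s>0$ (otherwise $\mu=0$, $\bar g_\tau\equiv0$, and the target ratio is degenerate), the map $x\mapsto\sqrt{\max(x,0)}$ is continuous at $s$, so $\sqrt{\max(\hat v_t^{\textnormal{corr}},0)}\to\sqrt s$ in probability; combining this with $\hat m_t^p\to\mu$ via the continuous mapping theorem applied to $(a,b)\mapsto a/b$ at $(\mu,\sqrt s)$ (legitimate since $\sqrt s\neq0$) yields $\hat m_t^p/\sqrt{\max(\hat v_t^{\textnormal{corr}},0)}\to\mu/\sqrt s$ in probability, which is the assertion.

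The one place where care is genuinely needed is the order/rate of the double limit: for fixed $t$, $\sum_\tau w_\tau^2$ does not vanish as $\beta_1\to1$ (it behaves like $1/t$), so the statement must be read either as the iterated limit ($t\to\infty$ first, then $\beta_1,\beta_2\to1$) or as a joint limit in which $t\to\infty$ fast enough that $\beta_1^t,\beta_2^t\to0$. I would make this explicit and keep the Chebyshev bounds in terms of the closed form $\frac{(1-\beta_i)(1+\beta_i^t)}{(1+\beta_i)(1-\beta_i^t)}$ so the dependence is transparent; the remaining ingredients (the geometric sums, finiteness of fourth moments from clipping and Gaussianity, and the continuous-mapping steps) are routine.
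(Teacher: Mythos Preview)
Your proposal is correct and follows essentially the same route as the paper: Chebyshev's inequality on the weighted sums to get $\hat m_t^p\to\mu$ and $\hat v_t^{\textnormal{corr}}\to s$ in probability, followed by the continuous mapping theorem to pass to the ratio. Your treatment is in fact a bit more careful than the paper's---you compute the closed form $\sum_\tau w_\tau^2 = \frac{(1-\beta_1)(1+\beta_1^t)}{(1+\beta_1)(1-\beta_1^t)}$ correctly (the paper's intermediate variance expression drops a square on the weights), you handle the $\max(\cdot,0)$ explicitly via continuity at $s>0$, and you flag the order-of-limits subtlety that the paper leaves implicit.
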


\begin{proof}
Let $\hat{m}_t^{p}$, $\hat{v}_t^{p}$ and $\hat{v}_t^{\textnormal{corr}}$ be the following,
\begin{gather*}
    \hat{m}_t^{p} = \frac{(1-\beta_1)\sum_{\tau=1}^{t}\beta_1^{t-\tau}\Tilde{g}_t}{1-\beta_1^t}, \; \\
    \hat{v}_t^{p} = \frac{(1-\beta_2)\sum_{\tau=1}^{t}\beta_2^{t-\tau}\Tilde{g}_t^2}{1-\beta_2^t}, \; \\
    \hat{v}_t^{\textnormal{corr}} = \hat{v}_t^{p} - \big( \frac{\sigma C}{B} \big)^2,
\end{gather*}
We first show that $\hat{m}_t^{p} \xrightarrow{p} \E{[\Bar{g}_t]}$, and that $\hat{v}_t^{p} \nrightarrow \E{[\Bar{g}_t^2]}$ (though $\hat{v}_t^{\textnormal{corr}} \xrightarrow{p} \E{[\Bar{g}_t^2]}$), and study the full update at the end of the proof.

We start by showing that $\hat{m}_t^{p} \xrightarrow{p} \E{[\Bar{g}_t]}$. Using Chebychev's inequality, we have:
\begin{equation*}
    \sP\big( |\hat{m}_t^p - \E{[\Bar{g}_t]}| > \delta \big) \leq \frac{\E{[(\hat{m}_t^p - \E{[\Bar{g}_t])^2}]}}{\delta^2},
\end{equation*}
\begin{align*}
    \E{[(\hat{m}_t^p - \E{[\Bar{g}_t])^2}]} &= \V{[\hat{m}_t^p - \E{[\Bar{g}_t]}]} + \big(\E{[\hat{m}_t^p - \E{[\Bar{g}_t]}]}\big)^2 \\
    &= \big(\frac{1-\beta_1}{1-\beta_1^t}\big)^2 \sum_{\tau=1}^t \beta_1^{t-\tau} \big( \V{[\Bar{g}_\tau]} + \V{[z_\tau]} \big) \\
    &= \frac{(1-\beta_1)^2}{(1-\beta_1^t)} \big(\V{[\Bar{g}_t]} + \big(\frac{\sigma C}{B}\big)^2\big) \\
    &\leq \frac{(1-\beta_1)^2}{(1-\beta_1^t)} \big(4C^2 + \big(\frac{\sigma C}{B}\big)^2\big) \\
    &\rightarrow 0 \; \textnormal{when } t \rightarrow \infty, \beta_1 \rightarrow 1.
\end{align*}
The last inequality follows from the fact that $\Bar{g}_t$ is the clipped gradient (for DP), and hence $\V{[\Bar{g}_t]}$ is upper bounded by $4C^2$, the square of L2-norm clipping value.

Next we show that $\hat{v}_t^{p} \nrightarrow \E{[\Bar{g}_t^2]}$ but $\hat{v}_t^{\textnormal{corr}} \xrightarrow{p} \E{[\Bar{g}_t^2]}$. Using Chebychev's inequality again:
\begin{equation*}
\begin{gathered}
    \sP\big( |\hat{v}_t^p - \E{[\Bar{g}_t^2]}| > \delta \big) \leq \frac{\E{[(\hat{v}_t^p - \E{[\Bar{g}_t^2])^2]}}}{\delta^2}, \\
    \E{[(\hat{v}_t^p - \E{[\Bar{g}_t^2])^2]}} = \V{[\hat{v}_t^p - \E{[\Bar{g}_t^2]}]} + \big(\E{[\hat{v}_t^p - \E{[\Bar{g}_t^2]}]}\big)^2,
\end{gathered}
\end{equation*}
Moreover, $\V{[\hat{v}_t^p - \E{[\Bar{g}_t^2]}]} \rightarrow 0$ when $t\rightarrow \infty, \beta_2 \rightarrow 1$ since,
\begin{align*}
    &\V{[\hat{v}_t^p - \E{[\Bar{g}_t^2]}]} = \V{[\hat{v}_t^p]} \\
    &= \big( \frac{1-\beta_2}{1-\beta_2^t} \big)^2 \sum_{\tau=1}^{t} \beta_2^{2(t-\tau)} \V{[\Tilde{g}_{\tau}^2]} \\
    &= \big( \frac{1-\beta_2}{1-\beta_2^t} \big)^2 (\V{[\Bar{g}_t^2]} + \V{[z_t^2]} + 4\V{[\Bar{g}_t]}\V{[z_t]}) \sum_{\tau=1}^{t} \beta_2^{2(t-\tau)}  \\
    &= \big( \frac{1-\beta_2}{1-\beta_2^t} \big)^2  \big( \frac{1-\beta_2^{2t}}{1-\beta_2^2} \big) \bigg( \V{[\Bar{g}_t^2]} + 2\big( \frac{\sigma C}{B} \big)^2 + 4\V{[\Bar{g}_t]} \big( \frac{\sigma C}{B} \big)^2\bigg) \\
    &\rightarrow 0 \; \textnormal{as } t\rightarrow \infty, \beta_2 \rightarrow 1.
\end{align*}
And since $\E{\big[ \hat{v}_t^{p} - \E{[\Bar{g}_t^2]} \big]} = \E{[\Bar{g}_t^2]} + \big(\frac{\sigma C}{B} \big)^2 - \E{[\Bar{g}_t^2]} = \big(\frac{\sigma C}{B} \big)^2 \nrightarrow 0$, but $\E{\big[ \hat{v}_t^{\textnormal{corr}} - \E{[\Bar{g}_t^2]} \big]} = (\E{[\Bar{g}_t^2]} - \big(\frac{\sigma C}{B} \big)^2) + \big(\frac{\sigma C}{B} \big)^2 - \E{[\Bar{g}_t^2]} = 0$, we have $\hat{v}_t^{p} \nrightarrow \E{[\Bar{g}_t^2]}$ but $\hat{v}_t^{\textnormal{corr}} \xrightarrow{p} \E{[\Bar{g}_t^2]}$.

Let $g(x)=1/\sqrt{x}, x\geq 0$. By the continuous mapping theorem, $\frac{1}{\sqrt{\hat{v}_t^{\textnormal{corr}}}} \xrightarrow{p} \frac{1}{\sqrt{\E{[\Bar{g}_t^2]}}}$. Then since $\hat{m}_t^{p} \xrightarrow{p} \E{[\Bar{g}_t]}$ and $\frac{1}{\sqrt{\hat{v}_t^{\textnormal{corr}}}} \xrightarrow{p} \frac{1}{\sqrt{\E{[\Bar{g}_t^2]}}}$, by joint convergence in probability, $(\hat{m}_t^{p}, \frac{1}{\sqrt{\hat{v}_t^{\textnormal{corr}}}}) \xrightarrow{p} (\E{[\Bar{g}_t]}, \frac{1}{\sqrt{\E{[\Bar{g}_t^2]}}})$.
Let $g(x,y)=xy$. Applying the continuous mapping theorem again yields $\frac{\hat{m}_t^p}{\sqrt{\hat{v}_t^{\textnormal{corr}}}} \xrightarrow{p} \frac{\E{[\Bar{g}_t]}}{\sqrt{\E{[\Bar{g}_t^2]}}}$.
\end{proof}

\section{Concentration of $\hat{v}_t^p$}
\label{apdix:bound}

First we introduce the following lemma which is used in both proofs of Proposition \ref{prop:bound_fixed} and \ref{prop:bound_martingale}.
\begin{lemma}
\label{lemma:zk2_sub_exponential}
    Let $Z \sim \cN(0, (1/B^2)\sigma^2 C^2)$, $\beta \in [0,1]$ be a constant and $B, C$ be constants such that $B>0$ and $C>0$, then $\beta Z^2$ is sub-exponential with $\nu=2\beta(1/B^2)\sigma^2C^2, b=4\beta(1/B^2)\sigma^2C^2$.
\end{lemma}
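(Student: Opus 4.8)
The plan is to reduce $\beta Z^2$ to a scaled, centered $\chi^2_1$ random variable, apply the textbook moment generating function (MGF) bound for that case, and then use the linear scaling of sub-exponential parameters. Write $s^2 = \sigma^2 C^2/B^2$, so $Z \sim \cN(0, s^2)$ and $W := Z^2/s^2 \sim \chi^2_1$; then $\beta Z^2 = (\beta s^2)W$ with $\E[\beta Z^2] = \beta s^2$. By the definition of a sub-exponential variable with parameters $(\nu,b)$, it suffices to show
\[
  \E\!\left[\exp\!\big(\lambda(\beta Z^2 - \beta s^2)\big)\right] \le \exp(\nu^2\lambda^2/2)
  \qquad\text{for all } |\lambda| \le 1/b,
\]
with $\nu = 2\beta s^2$ and $b = 4\beta s^2$, which are exactly $2\beta\sigma^2 C^2/B^2$ and $4\beta\sigma^2 C^2/B^2$.

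First I would recall that $\E[e^{\mu W}] = (1-2\mu)^{-1/2}$ for $\mu < 1/2$, hence $\E[e^{\mu(W-1)}] = e^{-\mu}(1-2\mu)^{-1/2}$, and establish the elementary bound
\[
  e^{-\mu}(1-2\mu)^{-1/2} \le e^{2\mu^2} \qquad\text{for } |\mu| < 1/4,
\]
i.e.\ that $W-1$ is sub-exponential with parameters $(2,4)$. I expect this calculus inequality to be the only non-routine step. To prove it I would take logarithms: for $0 \le \mu < 1/4$, expand $-\tfrac12\ln(1-2\mu) = \sum_{k\ge1}(2\mu)^k/(2k)$, so the log of the left side equals $\sum_{k\ge2}(2\mu)^k/(2k) \le \tfrac14\sum_{k\ge2}(2\mu)^k = \tfrac{(2\mu)^2}{4(1-2\mu)} \le 2\mu^2$ since $2\mu < 1/2$; for $-1/4 < \mu < 0$, set $u = -\mu$ and use $\ln(1+2u) \ge 2u - 2u^2$ to get $u - \tfrac12\ln(1+2u) \le u^2 \le 2u^2$.

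Finally I would transfer the bound to $\beta Z^2$ via the substitution $\mu = \lambda\beta s^2$: whenever $|\lambda| < 1/(4\beta s^2) = 1/b$ we have $|\mu| < 1/4$, so
\[
  \E\!\left[e^{\lambda(\beta Z^2 - \beta s^2)}\right] = \E\!\left[e^{\mu(W-1)}\right] \le e^{2\mu^2} = e^{2(\beta s^2)^2\lambda^2} = e^{(2\beta s^2)^2\lambda^2/2},
\]
which is the required inequality with $\nu = 2\beta s^2$, $b = 4\beta s^2$. The case $\beta = 0$ is trivial ($\beta Z^2 \equiv 0$ and both parameters vanish), and $\beta \in (0,1]$ is covered uniformly because both parameters scale linearly in $\beta$; the bound $\beta \le 1$ is not actually needed beyond keeping the parameters finite.
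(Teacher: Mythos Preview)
Your proof is correct and follows essentially the same route as the paper: both compute the centered MGF of $\beta Z^2$ (the paper by direct Gaussian integration, you via the $\chi^2_1$ reduction) and arrive at $e^{-\lambda a}/\sqrt{1-2\lambda a}$ with $a=\beta s^2$, then bound this by $e^{2a^2\lambda^2}$. The only substantive difference is in justifying that inequality: the paper merely matches second-order Taylor expansions at $\lambda=0$, which strictly speaking only establishes the bound in a neighborhood of the origin, whereas your logarithm/series argument proves it rigorously on the full range $|\mu|<1/4$ (equivalently $|\lambda|<1/b$). So your version is the more complete of the two.
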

\begin{proof}
    Since $Z \sim \cN(0, (1/B^2)\sigma^2 C^2)$, let $X = \frac{Z}{(1/B)\sigma C} \sim \cN(0,1)$ and $a=\beta(1/B^2)\sigma^2C^2$ be a temporary constant,
    \begin{align*}
        &\E{\Big[ \exp{\bigl\{ \lambda (\beta Z^2 - \E{[\beta Z^2]}) \bigl\} }\Big]} \\
        &= \E{\Big[ \exp{\bigl\{ \lambda (a X^2  - a) \bigl\}}\Big]} \\
        &= \frac{1}{\sqrt{2\pi}} \int_{-\infty}^{\infty} \exp{ \bigl\{ \lambda (a z^2 - a) \bigl\}} \exp{\bigl\{-z^2/2\bigl\} } dz \\
        &= \frac{e^{-\lambda a}}{\sqrt{2\pi}} \int_{-\infty}^{\infty} \exp{\bigl\{ -\frac{z^2}{2}(1-2\lambda a)\bigl\} } dz \\
        &= \frac{e^{-\lambda a}}{\sqrt{1-2\lambda a}} \frac{1}{\sqrt{2\pi}} \int_{-\infty}^{\infty} e^{-y^2/2} dy, \; y=\sqrt{1-2\lambda a}z \\
        &= \frac{e^{-\lambda a}}{\sqrt{1-2\lambda a}}, \; \lambda < \frac{1}{2a} \\
        & \leq e^{\lambda^2 \nu^2/2}, \; \textnormal{for } \nu \geq \sqrt{2}a.
    \end{align*}
    The constant $\sqrt{2}$ comes from taking Taylor expansion around 0 of $\frac{e^{-\lambda a}}{\sqrt{1-2\lambda a}}$ and $e^{\lambda^2 \nu^2/2}$,
    \begin{equation*}
        \frac{e^{-\lambda a}}{\sqrt{1-2\lambda a}} = 1 + a^2\lambda^2 + o(\lambda^2), \; e^{\lambda^2 \nu^2/2} = 1 + \frac{1}{2}\lambda^2 \nu^2 + o(\lambda^2).
    \end{equation*}
    For any $v \geq \sqrt{2}a$ the last inequality would hold as $\frac{1}{2}\lambda^2 \nu^2 \geq a^2\lambda^2$, we pick $\nu = 2a$ and $b=4a$.
\end{proof}

\subsection{Proof of Proposition \ref{prop:bound_fixed}}\label{appendix:bound_fixed}
\begin{proposition*}[\ref{prop:bound_fixed}]
Consider a fixed-in-advance sequence of model parameters $\theta_t$ and mini-batches.
For $0 < \alpha < 1$, for each dimension $i$, we have $\sP[\lvert
 \hat{v}_t^{\textnormal{corr}} - \hat{v}_t^c \rvert_i \geq \xi ] \leq \alpha$ with:
\begin{align*}
    \xi \geq
    \begin{cases}
        (\frac{1-\beta_2}{1-\beta_2^t}) \sqrt{\ln{(1/\frac{\alpha}{2})}(2v^2)}
        & 0 \leq \frac{\xi(1-\beta_2^t)}{1-\beta_2}  \leq \frac{\nu^2}{b} \\
        (\frac{1-\beta_2}{1-\beta_2^t}) \ln{(1/\frac{\alpha}{2})}2b
        & \frac{\xi(1-\beta_2^t)}{1-\beta_2}    \geq \frac{\nu^2}{b},
    \end{cases}
\end{align*}
where $\nu = (\frac{4\sigma^2C^2}{B^2})\sqrt{\frac{1-\beta_2^{2t}}{1-\beta_2^2}}, b=\frac{4\sigma^2C^2}{B^2}$.
\end{proposition*}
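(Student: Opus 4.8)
The plan is to write $(\hat v_t^{\textnormal{corr}} - \hat v_t^c)_i$ as a scaled, centered sum of independent functions of the DP noise, show that this sum is sub-exponential with the stated parameters, and then invert a Bernstein-type tail bound to solve for $\xi$.

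First I would expand coordinate-wise, using $\tilde g_\tau = \tfrac1B z_\tau + \bar g_\tau$, to get $\tilde g_{\tau,i}^2 - \bar g_{\tau,i}^2 = \tfrac{1}{B^2}z_{\tau,i}^2 + \tfrac2B \bar g_{\tau,i}z_{\tau,i}$. Because the sequence of parameters $\theta_\tau$ and the mini-batches is fixed in advance, each $\bar g_{\tau,i}$ is a deterministic constant, and $\bar g_\tau$ (an average of clipped per-example gradients) satisfies $\lvert \bar g_{\tau,i}\rvert \le \lVert \bar g_\tau\rVert_2 \le C$; the noise vectors $z_\tau \sim \cN(0,\sigma^2C^2\I^d)$ are independent across $\tau$. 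Since $\E[\tfrac1{B^2}z_{\tau,i}^2] = \tfrac{\sigma^2C^2}{B^2}$ and $\tfrac{1-\beta_2}{1-\beta_2^t}\sum_{\tau=1}^t\beta_2^{t-\tau} = 1$, we get $\E[(\hat v_t^{\textnormal{corr}})_i] = (\hat v_t^c)_i$, so $W := \tfrac{1-\beta_2^t}{1-\beta_2}(\hat v_t^{\textnormal{corr}} - \hat v_t^c)_i = \sum_{\tau=1}^t \beta_2^{t-\tau}\big(\tfrac1{B^2}z_{\tau,i}^2 - \tfrac{\sigma^2C^2}{B^2} + \tfrac2B\bar g_{\tau,i}z_{\tau,i}\big)$ is a sum of independent, mean-zero random variables, and it suffices to bound $\sP[\lvert W\rvert \ge \tfrac{1-\beta_2^t}{1-\beta_2}\xi]$.

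Next I would show that $W$ is sub-exponential with $\nu = \tfrac{4\sigma^2C^2}{B^2}\sqrt{\tfrac{1-\beta_2^{2t}}{1-\beta_2^2}}$ and $b = \tfrac{4\sigma^2C^2}{B^2}$. For each $\tau$, the centered quadratic piece $\beta_2^{t-\tau}\tfrac1{B^2}z_{\tau,i}^2$ is sub-exponential by Lemma \ref{lemma:zk2_sub_exponential} (with its constant ``$\beta$'' set to $\beta_2^{t-\tau}$), with parameters $2\beta_2^{t-\tau}\tfrac{\sigma^2C^2}{B^2}$ and $4\beta_2^{t-\tau}\tfrac{\sigma^2C^2}{B^2}$; the linear cross piece $\beta_2^{t-\tau}\tfrac2B\bar g_{\tau,i}z_{\tau,i}$ is Gaussian, hence sub-exponential, with scale controlled via $\lvert\bar g_{\tau,i}\rvert \le C$. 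Combining the two for a fixed $\tau$ — either by a direct moment-generating-function computation as in the proof of Lemma \ref{lemma:zk2_sub_exponential}, or via a crude $(a+b)^2 \le 2a^2 + 2b^2$-type split that costs a constant factor — gives per-step sub-exponential parameters whose maximum over $\tau$ is $\tfrac{4\sigma^2C^2}{B^2}$ (attained at $\tau=t$, where $\beta_2^{t-\tau}=1$), and whose squared first parameters, summed using independence across $\tau$ and the standard sum rule for sub-exponentials together with $\sum_{\tau=1}^t \beta_2^{2(t-\tau)} = \tfrac{1-\beta_2^{2t}}{1-\beta_2^2}$, give the claimed $\nu^2$.

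Finally I would apply the Bernstein-type tail bound $\sP[\lvert W\rvert \ge u] \le 2\exp\!\big(-\tfrac12\min(u^2/\nu^2,\, u/b)\big)$, set the right-hand side equal to $\alpha$, and solve: if $u \le \nu^2/b$ the active term is $u^2/\nu^2$, giving $u \ge \sqrt{2\nu^2\ln(2/\alpha)}$; if $u \ge \nu^2/b$ it is $u/b$, giving $u \ge 2b\ln(2/\alpha)$. Substituting $u = \tfrac{1-\beta_2^t}{1-\beta_2}\xi$, under which the condition $u \le \nu^2/b$ (resp. $u \ge \nu^2/b$) reads $\tfrac{\xi(1-\beta_2^t)}{1-\beta_2} \le \nu^2/b$ (resp. $\ge$), produces the two displayed bounds on $\xi$. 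The main obstacle is the middle step — propagating sub-exponential parameters through the noncentral square $\tilde g_{\tau,i}^2$, i.e. controlling the non-independent linear cross term jointly with the $z_{\tau,i}^2$ term, and through the geometric weights, so that the constants match the stated $\nu$ and $b$; the concluding inversion of Bernstein's inequality is routine.
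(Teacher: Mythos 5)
Your high-level strategy coincides with the paper's: per-step sub-exponentiality of the weighted squared-noise terms via Lemma~\ref{lemma:zk2_sub_exponential}, the additivity rule for independent sub-exponential variables together with $\sum_{\tau=1}^{t}\beta_2^{2(t-\tau)}=\frac{1-\beta_2^{2t}}{1-\beta_2^2}$, and inversion of the Bernstein-type tail bound (Proposition 2.9 of Wainwright) after substituting $u=\frac{1-\beta_2^t}{1-\beta_2}\xi$. Where you diverge is the decomposition: you carry the cross term $\frac{2}{B}\bar g_{\tau,i}z_{\tau,i}$ along with $\frac{1}{B^2}z_{\tau,i}^2$, whereas the paper's proof of Proposition~\ref{prop:bound_fixed} concentrates only the pure squared-noise moving average $Z_t=(\frac{1-\beta_2}{1-\beta_2^t})\sum_{\tau=1}^t\beta_2^{t-\tau}Z_\tau^2$ and never introduces the cross term at all; the cross term only appears in the paper's Proposition~\ref{prop:bound_martingale}.

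The genuine gap is at the step you yourself flag as the main obstacle: with the cross term included, the constants do not come out as claimed. The cross piece $\beta_2^{t-\tau}\frac{2}{B}\bar g_{\tau,i}z_{\tau,i}$ is Gaussian with standard deviation $2\beta_2^{t-\tau}\lvert\bar g_{\tau,i}\rvert\sigma C/B$, and the only deterministic bound available in a worst-case argument is $\lvert\bar g_{\tau,i}\rvert\le C$, giving per-step parameter $\nu_{2\tau}=2\beta_2^{t-\tau}\sigma C^2/B$ (with $b=0$). Adding this to the quadratic piece's $\nu_{1\tau}=2\beta_2^{t-\tau}\sigma^2C^2/B^2$ and summing $(\nu_{1\tau}+\nu_{2\tau})^2$ over $\tau$ yields $\nu=2\sqrt{\tfrac{1-\beta_2^{2t}}{1-\beta_2^2}}\bigl(\tfrac{\sigma^2C^2}{B^2}+\tfrac{\sigma C^2}{B}\bigr)$ --- exactly the parameter of Proposition~\ref{prop:bound_martingale}, not the stated $\nu=\tfrac{4\sigma^2C^2}{B^2}\sqrt{\tfrac{1-\beta_2^{2t}}{1-\beta_2^2}}$. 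Since $\sigma C\ll B$ in the regimes considered, the $\sigma C^2/B$ contribution dominates and cannot be absorbed into a constant factor, so your route proves the tail bound only with the much larger martingale-style $\nu$; the claim that the summed parameters ``give the claimed $\nu^2$'' fails. To land on the stated constants you would have to drop, or separately control, the cross term --- which is what the paper's proof implicitly does by bounding only $\lvert Z_t-\E[Z_t]\rvert$ (your decomposition in fact exposes that this omission is a weakness of the paper's own argument; note also that the paper's computation yields $\nu^{*}=\tfrac{2\sigma^2C^2}{B^2}\sqrt{\tfrac{1-\beta_2^{2t}}{1-\beta_2^2}}$, versus the $4$ in the statement, an internal factor-of-two mismatch). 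Your maximum-$b$ claim is fine, since the Gaussian cross piece contributes $b=0$ and the quadratic piece's $b$ is maximized at $\tau=t$; only the $\nu$ bookkeeping breaks.
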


\begin{proof}
Let $Z_{\tau} \sim \cN(0, (1/B^2)\sigma^2 C^2)$ be the independent DP noise drawn from a Normal distribution,
$Z_{\tau} \sim \cN(0, (1/B^2)\sigma^2 C^2) \implies X_{\tau} = \frac{Z_{\tau}}{(1/B)\sigma C} \sim \cN(0,1)$.
Let $Z_t$ be the random variable of the step $t$ moving average of independent squared noise, $Z_t \coloneqq (\frac{1-\beta_2}{1-\beta_2^t})\sum_{\tau=1}^{t} \beta_2^{t-\tau} Z_\tau^2$.

By Lemma \ref{lemma:zk2_sub_exponential}, let $a=\beta_{2}^{t-\tau}(1/B^2)\sigma^2C^2$ be a temporary constant, $\beta_2^{t-\tau}Z_\tau^2$ is sub-exponential with $\nu_{\tau}=2a, b_{\tau}=4a$.
By the additive property of sub-exponential and since $\{Z_\tau^2, \tau={1, ..., t}\}$ are independent, $\sum_{\tau=1}^{t}\beta_2^{t-\tau}Z_{\tau}^2$ is sub-exponential with
\begin{gather*}
    \nu^{*} =\sqrt{\sum_{\tau=1}^{t} \nu_{\tau}^2} = \sqrt{\sum_{\tau=1}^{t} 4a^2} = \bigg(\frac{2\sigma^2C^2}{B^2}\bigg)\sqrt{\frac{1-\beta_2^{2t}}{1-\beta_2^2}}, \\
    b^{*} = \max_{\tau=1,...,t}b_\tau = \frac{4\sigma^2C^2}{B^2}.
\end{gather*}
By Proposition 2.9 in \citet{wainwright2019high},
\begin{align*}
    \sP&[\lvert Z_t - \E{[Z_t]} \rvert  \geq (\frac{1-\beta_2}{1-\beta_2^t}) \delta] \\
    &\leq
    \begin{cases}
      2\exp{ \bigl\{ -\delta^2/(2\nu^{*2}) \bigl\} } & 0 \leq \delta \leq \frac{\nu^{*2}}{b^{*}} \\
      2\exp{ \bigl\{ -\delta/(2b^{*}) \bigl\} } & \delta > \frac{\nu^{*2}}{b^{*}}.
   \end{cases}
\end{align*}
For any tolerance level $\alpha < 1$, and with $\xi = (\frac{1-\beta_2}{1-\beta_2^t}) \delta$, we have $\sP[\lvert
 \hat{v}_t^{\textnormal{corr}} - \hat{v}_t^c \rvert_i \geq \xi ] \leq \alpha$ with:
\begin{align*}
    \xi \geq
    \begin{cases}
        (\frac{1-\beta_2}{1-\beta_2^t}) \sqrt{(-\ln{\frac{\alpha}{2}})(2\nu^{*2})}
        & 0 \leq \frac{\xi(1-\beta_2^t)}{1-\beta_2}  \leq \frac{\nu^{*2}}{b^{*}} \\
        (\frac{1-\beta_2}{1-\beta_2^t}) (-\ln{\frac{\alpha}{2}})2b^{*}
        & \frac{\xi(1-\beta_2^t)}{1-\beta_2}    \geq \frac{\nu^{*2}}{b^{*}}.
    \end{cases}
\end{align*}
\end{proof}

\subsection{Proof of Proposition \ref{prop:bound_martingale}}\label{appendix:bound_martingale}

We now provide high probability bounds for the error incurred when estimating $\hat{v}_t$.
In the DP optimization procedures we consider, the sequence of batches is drawn independently from everything else. We can thus understand the procedure as first drawing a sequence of batches, and then proceeding with DP optimization on this sequence. Without loss of generality, we fix the sequence of batches, and analyze the behavior of $\hat{v}_t$ on a fixed (unknown) sequence of batches $b_t$. The main reason is to ensure that the source of randomness in $\theta_t$ comes from the DP noise draw only, i.e. given step $t$ and a sequence of realized sample draw of DP noise, the parameters $\theta_t$ and thus $\overline{g}_t = (1/B)\sum_{n \in b_t} g_{n}(\theta_t) / {\max{(1, \lVert g_n(\theta_t) \rVert_{2}/C})}$ are deterministic. Then, we have that:

\begin{proposition*}[\ref{prop:bound_martingale}]
        For $0 < \alpha < 1$, for each dimension $i$, we have $\sP[\lvert \hat{v}_t^p - \E{[\hat{v}_t^p]} \rvert_i \geq \xi] \leq \alpha$ with:
    \begin{align*}
    \xi \geq
    \begin{cases}
        (\frac{1-\beta_2}{1-\beta_2^t}) \sqrt{\ln{(1/\frac{\alpha}{2})}(2v^2)}
        & 0 \leq \frac{\xi(1-\beta_2^t)}{1-\beta_2}  \leq \frac{\nu^2}{b} \\
        (\frac{1-\beta_2}{1-\beta_2^t}) \ln{(1/\frac{\alpha}{2})}2b
        & \frac{\xi(1-\beta_2^t)}{1-\beta_2}  \geq \frac{\nu^2}{b},
    \end{cases}
\end{align*}
where $\nu = 2\sqrt{\frac{1-\beta_2^{2t}}{1-\beta_2^2}}( \frac{\sigma^2C^2}{B^2} + \frac{\sigma C^2}{B} )$, $b = \frac{4\sigma^2C^2}{B^2}$.
\end{proposition*}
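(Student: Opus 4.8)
The plan is to recast the centered quantity as a sum of martingale differences, show that each increment is conditionally sub-exponential with the parameters in the statement, and then apply a Bernstein-type martingale tail bound in the same two-regime form used for Proposition~\ref{prop:bound_fixed}. So the skeleton mirrors the fixed-sequence proof; the only genuinely new work is replacing ``independent summands'' by ``conditionally controlled martingale increments'', plus being careful about what the centering is.

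First I would fix the coordinate $i$ and, as the text preceding the statement arranges, fix the mini-batch sequence so that the DP noise is the only randomness. Set $\gF_\tau = \sigma(z_1,\dots,z_\tau)$ with $\gF_0$ trivial. The structural point is that $\theta_{\tau-1}$, hence the clipped mini-batch mean $\bar g_\tau$, is a deterministic function of $z_1,\dots,z_{\tau-1}$ and is therefore $\gF_{\tau-1}$-measurable, while $z_\tau$ is independent of $\gF_{\tau-1}$. Writing $\Tilde g_{\tau,i} = \bar g_{\tau,i} + z_{\tau,i}/B$ and expanding the square,
\[
\Tilde g_{\tau,i}^2 - \E\!\left[\Tilde g_{\tau,i}^2 \mid \gF_{\tau-1}\right] = \tfrac{2}{B}\,\bar g_{\tau,i}\,z_{\tau,i} + \tfrac{1}{B^2}\!\left(z_{\tau,i}^2 - \sigma^2 C^2\right),
\]
so $D_\tau \triangleq \tfrac{1-\beta_2}{1-\beta_2^t}\beta_2^{t-\tau}\bigl(\Tilde g_{\tau,i}^2 - \E[\Tilde g_{\tau,i}^2 \mid \gF_{\tau-1}]\bigr)$ is a martingale difference sequence with respect to $\{\gF_\tau\}$, and $\sum_{\tau=1}^t D_\tau = \hat v_{t,i}^{\textnormal{corr}} - \hat v_{t,i}^c$ evaluated along the realized noise trajectory.

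Next I would bound the conditional moment generating function of $D_\tau$. Given $\gF_{\tau-1}$, the term $\tfrac{2}{B}\bar g_{\tau,i} z_{\tau,i}$ is Gaussian with variance $\tfrac{4}{B^2}\bar g_{\tau,i}^2\sigma^2 C^2 \le \tfrac{4\sigma^2 C^4}{B^2}$, where the inequality uses the clipping bound $|\bar g_{\tau,i}| \le \lVert \bar g_\tau \rVert_2 \le C$; this is the one place clipping is indispensable, since it converts an otherwise random coefficient into a deterministic sub-Gaussian parameter of order $\sigma C^2/B$ (with $b=0$). The term $\tfrac{1}{B^2}(z_{\tau,i}^2 - \sigma^2 C^2)$ is sub-exponential with $\nu = 2\sigma^2 C^2/B^2$, $b = 4\sigma^2 C^2/B^2$ by Lemma~\ref{lemma:zk2_sub_exponential}. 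Adding the two (a sum of sub-exponentials has parameters $\nu_1+\nu_2$ and $\max\{b_1,b_2\}$) and multiplying by the weight $\tfrac{1-\beta_2}{1-\beta_2^t}\beta_2^{t-\tau}$ gives conditional sub-exponential parameters $\nu_\tau, b_\tau$ for $D_\tau$. Chaining these conditional MGF bounds through the tower rule (the martingale analogue of Proposition~2.9 in \citet{wainwright2019high}) shows $\sum_\tau D_\tau$ is sub-exponential with $\nu = \sqrt{\sum_\tau \nu_\tau^2}$ and $b = \max_\tau b_\tau$; using $\sum_\tau \beta_2^{2(t-\tau)} = \tfrac{1-\beta_2^{2t}}{1-\beta_2^2}$ and $\max_\tau \beta_2^{t-\tau}=1$ reproduces the $\nu, b$ in the statement, and inverting the two-regime tail for tolerance $\alpha$ gives the claimed piecewise bound on $\xi$.

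The hard part will be the centering. The martingale above is centered at $\hat v_{t,i}^c + \Phi$ \emph{along the realized trajectory}, but $\bar g_\tau$ depends on $z_1,\dots,z_{\tau-1}$, so the realized clean average $\hat v_{t,i}^c$ is itself random; centering instead at the full marginal $\E[\hat v_{t,i}^p]$ (as the statement does) additionally sees the trajectory fluctuation $\hat v_{t,i}^c - \E[\hat v_{t,i}^c]$. Controlling that term needs more than clipping — it requires the gradient map not to amplify past noise too much, i.e.\ a smoothness/Lipschitz hypothesis of the type used in Appendix~F. In the proof I would therefore either (a) state the bound relative to the realized-path mean, which the martingale argument delivers cleanly, or (b) fold $\hat v_{t,i}^c - \E[\hat v_{t,i}^c]$ into a crude bounded-difference contribution via $0 \le \hat v_{t,i}^c \le 4C^2$; either way this is exactly why the bound is larger — and, as noted after the statement, less useful in practice — than the fixed-sequence bound of Proposition~\ref{prop:bound_fixed}.
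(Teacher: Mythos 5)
Your proposal is correct in its mechanics and, at that level, is the same argument as the paper's: fix the batch sequence so the DP noise is the only randomness, split each increment into the cross term $2\bar g_{\tau,i}z_{\tau,i}/B$ (conditionally sub-Gaussian with parameter $2\beta_2^{t-\tau}\sigma C^2/B$, $b=0$, using $|\bar g_{\tau,i}|\le C$ from clipping) plus the squared-noise term handled by Lemma~\ref{lemma:zk2_sub_exponential}, combine the two as $\nu_1+\nu_2$ and $\max\{b_1,b_2\}$, aggregate over steps with $\sqrt{\sum_\tau \nu_\tau^2}$ and $\max_\tau b_\tau$ via the conditional-MGF/tower-rule chaining, and invert the two-regime Bernstein tail of Proposition 2.9 in \citet{wainwright2019high}; this reproduces exactly the $\nu$ and $b$ in the statement. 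The one genuine difference is the martingale construction and hence the centering, and your closing paragraph puts its finger on the right issue. The paper uses the Doob martingale $Y_k=\E[f(Z)\mid Z_1,\ldots,Z_k]$ of $f(Z)=\sum_\tau\beta_2^{t-\tau}(\bar g_\tau+Z_\tau)^2$, which telescopes exactly to $f(Z)-\E[f(Z)]$ and so yields the stated centering at $\E[\hat v_t^p]$; but when it evaluates the increments it writes $D_k=\beta_2^{t-k}(z_k^2+2\bar g_k z_k-\V(Z_k))$, which silently drops the change in the conditional expectation of the future terms ($\bar g_\tau$ for $\tau>k$ depends on $Z_k$ through $\theta_{\tau-1}$). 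That dropped contribution is precisely the trajectory fluctuation $\hat v_t^c-\E[\hat v_t^c]$ you isolate, so your per-term conditional centering honestly delivers a bound on $|\hat v_t^{\textnormal{corr}}-\hat v_t^c|$ along the realized path (your option (a)), whereas the literal statement needs either a justification of that cancellation or an extra argument controlling how past noise propagates into later clipped gradients; note your fallback (b) via $0\le\hat v_t^c\le 4C^2$ would inflate the bound far beyond the stated scale, so it does not recover the proposition as written. In short: same estimates and constants, but your write-up makes explicit a centering step that the paper's increment computation glosses over.
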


\begin{proof}
Remember that $\hat{v}_t^{p} = (\frac{1-\beta_2}{1-\beta_2^t})\sum_{\tau=1}^{t} \beta_{2}^{t-\tau} {\Tilde{g}_\tau}^{2} = (\frac{1-\beta_2}{1-\beta_2^t})\sum_{\tau=1}^{t} \beta_{2}^{t-\tau} \big(\bar{g}_\tau + z_\tau\big)^{2}$.
We use the Doob martingale construction to analyze $\hat{v}_t^{p}$'s deviation from its mean. The sequence $Z \triangleq \{Z_\tau\}_{\tau=1}^t$ is a sequence of independent random variables (the DP noise draws). Define $f(Z) = \sum_{\tau=1}^{t} \beta_{2}^{t-\tau} \big(\bar{g}_\tau + Z_\tau\big)^{2}$,
$Y_0 = \E[f(Z)]$, $Y_t = f(Z)$, and
$Y_k = \E[f(Z) | Z_1, \ldots, Z_k]$. $D_k \triangleq Y_k - Y_{k-1}$ is a martingale difference sequence w.r.t.
$Z_k \triangleq \{Z_\tau\}_{\tau=1}^k$. Let $z_1, \ldots, z_{k}$ be a sequence of realized samples of $Z_{k}$.
Given the definition, we have that:
\begin{align*}
D_k &= \E[f(Z) | Z_1, \ldots, Z_k] - \E[f(Z) | Z_1, \ldots, Z_{k-1}] \\
    &= \bigg(\sum_{\tau=1}^{k} \beta_2^{t-\tau}(\Bar{g}_\tau + z_\tau)^2 \\ & \ \ \ + \E\bigg[\sum_{\tau=k+1}^{t} \beta_2^{t-\tau}(\Bar{g}_\tau + Z_\tau)^2 \Big\vert Z_1, \ldots, Z_k \bigg] \bigg) \\
    & \ \ \ - \bigg(\sum_{\tau=1}^{k-1} \beta_2^{t-\tau}(\Bar{g}_\tau + z_\tau)^2 \\ & \ \ \ + \E\bigg[\sum_{\tau=k}^{t} \beta_2^{t-\tau}(\Bar{g}_\tau + Z_\tau)^2 \Big\vert Z_1, \ldots, Z_{k-1} \bigg] \bigg) \\
    &= \beta_{2}^{t-k} \big(\bar{g}_k + z_k\big)^{2} - \beta_2^{t-k}\E\Big[\big(\bar{g}_k + Z_k\big)^{2} \big\vert Z_1, \ldots, Z_{k-1}\Big] \\
    &= \beta_{2}^{t-k}\big(z_k^2 + 2\bar{g}_k z_k - \V(Z_k)\big),
\end{align*}
where $\E[\bar{g}_k^2 | Z_1, \ldots, Z_{k-1}] = \bar{g}_k^2$ since $\bar{g}_k$ is deterministic given previous DP noise draws and a fixed batch $b_k$.
Next we show that $D_k$ is sub-exponential, i.e. $(*) = \E{\big[ e^{\lambda (D_k - \E{[D_k]})} | Z_1, \ldots, Z_{k-1}\big]} \leq e^{\lambda^2 v_k^2/2}$ for some $v_k$.
By Lemma \ref{lemma:zk2_sub_exponential}, let $a=\beta_{2}^{t-k}(1/B^2)\sigma^2C^2$ be a temporary constant, $\beta_2^{t-\tau}Z_\tau^2$ is sub-exponential with $\nu=2a, b=4a$.
For the second part, let $X_k = 2 \beta_{2}^{t-k}\bar{g}_k Z_k \sim (0, (2 \beta_{2}^{t-k}\bar{g}_k)^2 (1/B^2) \sigma^2 C^2)$, therefore $X_k$ is sub-exponential with $\nu = 2 \beta_{2}^{t-k}\bar{g}_k (1/B) \sigma C \leq 2 \beta_{2}^{t-k} (1/B) \sigma C^2$ (since $\Bar{g}_k$ is bounded by $C$ by clipping) and $b=0$, i.e.
\begin{gather*}
    \E{\Big[ \exp{\bigl\{ 2\lambda \beta_{2}^{t-k}\bar{g}_k Z_k \bigl\} }\Big]} \leq e^{\lambda^2 \nu^2 / 2}, \\
    \nu = 2 \beta_{2}^{t-k}(1/B)\sigma C^2, \; \forall \, |\lambda| \leq \infty.
\end{gather*}
By the additive property of sub-exponential we get $D_k$ is sub-exponential with $\nu_k = \nu_{1k} + \nu_{2k}$ where $\nu_{1k} = 2\beta_{2}^{t-k}(1/B^2)\sigma^2C^2$, $\nu_{2k} = 2 \beta_{2}^{t-k} (1/B)\sigma C^2$, $b = 4a = 4\beta_{2}^{t-k}(1/B^2)\sigma^2C^2$, $|\lambda| < 1/b$.
By Theorem 2.3 in \cite{wainwright2019high}, Chapter 2 , we have that $\sum_{k=1}^{t} D_k$ is sub-exponential with
\begin{gather*}
    \nu^{*} = \sqrt{\sum_{k=1}^{t} (\nu_{1k} + \nu_{2k})^2} = 2\sqrt{\frac{1-\beta_2^{2t}}{1-\beta_2^2}}\bigg( \frac{\sigma^2C^2}{B^2} + \frac{\sigma C^2}{B} \bigg), \\
    b^{*} = \max_{k=1, \ldots, t} b_k = \frac{4\sigma^2C^2}{B^2}.
\end{gather*}
For all $\delta \geq 0$,
\begin{align*}
    \sP[\lvert \sum_{k=1}^{t} D_k &\rvert \geq \delta] = \sP[\lvert f(Z) - \E{[f(Z)]} \rvert \geq \delta] \\ &\leq
    \begin{cases}
      2\exp{ \bigl\{ -\delta^2/(2\nu^{*2}) \bigl\} } & 0 \leq \delta \leq \frac{\nu^{*2}}{b^{*}} \\
      2\exp{ \bigl\{ -\delta/(2b^{*}) \bigl\} } & \delta > \frac{\nu^{*2}}{b^{*}},
   \end{cases}
\end{align*}
i.e. For $\hat{v}_t^p = (\frac{1-\beta_2}{1-\beta_2^t})f(Z)$ and $\E{[\hat{v}_t^p]} = (\frac{1-\beta_2}{1-\beta_2^t})\E{[f(Z)]}$, $\forall$ tolerance level $\alpha \leq 1$, and with $\xi = (\frac{1-\beta_2}{1-\beta_2^t}) \delta$, we have $\sP[\lvert \hat{v}_t^p - \E{[\hat{v}_t^p]} \rvert_i \geq \xi] \leq \alpha$ with:
\begin{align*}
    \xi \geq
    \begin{cases}
        (\frac{1-\beta_2}{1-\beta_2^t}) \sqrt{(-\ln{\frac{\alpha}{2}})(2\nu^{*2})}
        & 0 \leq \frac{\xi(1-\beta_2^t)}{1-\beta_2}  \leq \frac{\nu^{*2}}{b^{*}} \\
        (\frac{1-\beta_2}{1-\beta_2^t}) (-\ln{\frac{\alpha}{2}})2b^{*}
        & \frac{\xi(1-\beta_2^t)}{1-\beta_2}  \geq \frac{\nu^{*2}}{b^{*}},
    \end{cases}
\end{align*}
\end{proof}

\subsection{Numerical Analysis}\label{appendix:num-analysis}
We compute the numerical values of the bound in Proposition \ref{prop:bound_fixed} and \ref{prop:bound_martingale} under different tolerance levels $\alpha$. We also empirically measure the deviance of the observed DP bias to its expected value $\Phi$ by measuring the absolute difference between $\hat{v}_t^p$ and $\hat{v}_t^c + \Phi$. Table \ref{tab:num_bound_fixed_grad}, \ref{tab:num_bound} and \ref{tab:empirical_bound} summarizes the corresponding values at different step $t$ on the SNLI dataset. We observe that the empirical values in Table \ref{tab:empirical_bound} are smaller comparing to $\Phi$, suggesting that the observed DP bias are quite concentrated around its mean, and subtracting $\Phi$ from $\hat{v}_t^p$ should be relatively accurate. We also observe that the value of the bound in Proposition \ref{prop:bound_fixed} are much closer to the empirical values than in Proposition \ref{prop:bound_martingale}. It suggests that the bound derived under the fixed sequence of model parameters (Proposition \ref{prop:bound_fixed}) might be empirically more practical than the bound derived under the martingale assumptions (Proposition \ref{prop:bound_martingale}).

\begin{table*}[tb]
\centering
\renewcommand{\arraystretch}{1.1}
\resizebox{0.80\textwidth}{!}{%
\begin{tabular}{c|c|cccc}
\hline \hline
\textbf{Experiment} & \textbf{Quantity} & \bm{$t=10$} & \bm{$t=100$} & \bm{$t=1000$} & \bm{$t=10000$} \\ \hline
\multirow{3}{*}{\begin{tabular}[c]{@{}c@{}}$B=256, C=0.1$, \\ $\sigma=0.4, \beta_2=0.999$\end{tabular}}
 & $\sP[ \lvert \hat{v}_t^{\textnormal{corr}} - \hat{v}_t^c \rvert \geq ?] \leq 0.01$ & 1.005e-07 & 3.180e-08  & 1.046e-08 & 7.110e-09 \\
 & $\sP[ \lvert \hat{v}_t^{\textnormal{corr}} - \hat{v}_t^c \rvert \geq ?] \leq 0.05$ & 8.388e-08 & 2.654e-08 & 8.725e-09 & 5.933e-09  \\
 & $\sP[\lvert \hat{v}_t^{\textnormal{corr}} - \hat{v}_t^c \rvert \geq ?] \leq 0.10$ & 7.559e-08 & 2.391e-08 & 7.863e-09 & 5.347e-09 \\ \hline
\multirow{3}{*}{\begin{tabular}[c]{@{}c@{}}$B=256, C=1.0$, \\ $\sigma=0.4, \beta_2=0.999$\end{tabular}}
 & $\sP[\lvert \hat{v}_t^{\textnormal{corr}} - \hat{v}_t^c \rvert \geq ?] \leq 0.01$ & 1.005e-06 & 3.180e-06  & 1.046e-06 & 7.110e-07  \\
 & $\sP[\lvert \hat{v}_t^{\textnormal{corr}} - \hat{v}_t^c \rvert \geq ?] \leq 0.05$ & 8.388e-06 & 2.654e-06 & 8.725e-07 & 5.933e-07 \\
 & $\sP[\lvert \hat{v}_t^{\textnormal{corr}} - \hat{v}_t^c \rvert \geq ?] \leq 0.10$ & 7.559e-06 & 2.391e-06 & 7.863e-07 & 5.347e-07  \\ \hline \hline
\end{tabular}%
}
\caption{Numerical values for relevant quantities of the bound as in Proposition \ref{prop:bound_fixed}.}
\label{tab:num_bound_fixed_grad}
\end{table*}

\begin{table*}[tb]
\centering
\renewcommand{\arraystretch}{1.1}
\resizebox{0.80\textwidth}{!}{%
\begin{tabular}{c|c|cccc}
\hline \hline
\textbf{Experiment} & \textbf{Quantity} & \bm{$t=10$} & \bm{$t=100$} & \bm{$t=1000$} & \bm{$t=10000$} \\ \hline
\multirow{3}{*}{\begin{tabular}[c]{@{}c@{}}$B=256, C=0.1$, \\ $\sigma=0.4, \beta_2=0.999$\end{tabular}}
 & $\sP[\lvert \hat{v}_t^p - \E{[\hat{v}_t^p]} \rvert \geq ?] \leq 0.01$ & 3.222e-05 & 1.019e-05 & 3.351e-06 & 2.279e-06 \\
 & $\sP[\lvert \hat{v}_t^p - \E{[\hat{v}_t^p]} \rvert \geq ?] \leq 0.05$ & 2.688e-05 & 8.505e-06 & 2.797e-06 & 1.902e-06 \\
 & $\sP[\lvert \hat{v}_t^p - \E{[\hat{v}_t^p]} \rvert \geq ?] \leq 0.10$ & 2.423e-05 & 7.664e-06 & 2.520e-06 & 1.714e-06 \\ \hline
\multirow{3}{*}{\begin{tabular}[c]{@{}c@{}}$B=256, C=1.0$, \\ $\sigma=0.4, \beta_2=0.999$\end{tabular}}
 & $\sP[\lvert v_t^p - \E{[v_t^p]} \rvert \geq ?] \leq 0.01$ & 3.222e-03 & 1.019e-03 & 3.351e-04 & 2.279e-04 \\
 & $\sP[\lvert v_t^p - \E{[v_t^p]} \rvert \geq ?] \leq 0.05$ & 2.688e-03 & 8.505e-04 & 2.797e-04 & 1.902e-04 \\
 & $\sP[\lvert v_t^p - \E{[v_t^p]} \rvert \geq ?] \leq 0.10$ & 2.423e-03 & 7.664e-04 & 2.520e-04 & 1.714e-04 \\ \hline \hline
\end{tabular}%
}
\caption{Numerical values for relevant quantities of the bound as in Proposition \ref{prop:bound_martingale}.}
\label{tab:num_bound}
\end{table*}

\begin{table*}[tb]
\centering
\renewcommand{\arraystretch}{1.1}
\resizebox{0.80\textwidth}{!}{%
\begin{tabular}{c|c|cccc}
\hline \hline
\textbf{Experiment} & \textbf{Quantity} & \bm{$t=10$} & \bm{$t=100$} & \bm{$t=1000$} & \bm{$t=10000$} \\ \hline
\multirow{3}{*}{\begin{tabular}[c]{@{}c@{}}$B=256, C=0.1$, \\ $\sigma=0.4, \beta_2=0.999$ \end{tabular}}
 & $\sP[|v_t^p - \E{[v_t^p]}| \geq ?] = 0.01$ & 3.266e-08 & 9.004e-09 & 2.940e-09 & 2.052e-09 \\
 & $\sP[|v_t^p - \E{[v_t^p]}| \geq ?] = 0.05$ & 2.063e-08 & 6.181e-09 & 2.107e-09 & 1.492e-09 \\
 & $\sP[|v_t^p - \E{[v_t^p]}| \geq ?] = 0.10$ & 1.493e-08 & 4.746e-09 & 1.670e-09 & 1.197e-09 \\ \hline
\multirow{3}{*}{\begin{tabular}[c]{@{}c@{}}$B=256, C=1.0$, \\ $\sigma=0.4, \beta_2=0.999$ \end{tabular}}
 & $\sP[|v_t^p - \E{[v_t^p]}| \geq ?] = 0.01$ & 3.266e-06 & 9.004e-07 & 2.940e-07 & 2.052e-07 \\
 & $\sP[|v_t^p - \E{[v_t^p]}| \geq ?] = 0.05$ & 2.064e-06 & 6.181e-07 & 2.107e-07 & 1.492e-07 \\
 & $\sP[|v_t^p - \E{[v_t^p]}| \geq ?] = 0.10$ & 1.493e-06 & 4.746e-07 & 1.670e-07 & 1.197e-07 \\ \hline \hline
\end{tabular}%
}
\caption{Empirically measured values for deviance of the observed DP bias from $\Phi$.}
\label{tab:empirical_bound}
\end{table*}

\section{More Results}
\label{apdix:more_res}

Figure \ref{fig:exp1} shows the mean $\pm$ standard deviation of test accuracy over the privacy budget ($\epsilon$-DP) on three datasets, repeated over 5 runs with different random seeds.
On SNLI, DP-AdamBC performs better than DP-Adam: the accuracy improves from 52.63\% to 56.08\% ($3.5$ percentage points). Both perform much better than DP-SGD (51.04\% on SNLI).
On QNLI, we observe a similar behaviour which the accuracy improves from 61.23\% to 62.83\% ($1.6$ percentage points) with the bias correction, and both perform better than DP-SGD (58.29\%).
For CIFAR10, on which Adam often performs worse than SGD in non-private settings, DP-Adam and DP-AdamBC (62.24\% vs 63.43\% accuracy) performs similarly and are both worse than DP-SGD (65.30\% accuracy).
On ogbn-arxiv, DP-Adam and DP-AdamBC perform similarly (54.02\% vs 53.81\% accuracy) and are outperformed by DP-SGD (54.20\%).

\paragraph{DP-Adam has similar performance to DP-SGDM.}
We provide more details about the comparison between DP-Adam and DP-SGDM with a converted learning rate schedule as in Equation \ref{eq:lr_convert}. Figure \ref{fig:exp_rel_sgdm_more} shows the train loss, test loss and test accuracy between the two algorithms on SNLI (top row) and two experiment setups with different $\Phi$ on CIFAR10 (middle and bottom row). The solid line and the shaded area show the mean and standard deviation over 5 repeated runs. For SNLI (top row), it was run with $\eta=0.01$ for DP-Adam and $\eta \approx 6.4$ for DP-SGDM with $B=256, C=0.1, \sigma=0.4, \beta_1=0.9$. For CIFAR10 with relatively small $\Phi$ (middle row), it was run with $\eta=0.001$ for DP-Adam and $\eta\approx0.2048$ for DP-SGDM with $B=2048, C=1.0, \sigma=1.0, \beta_1=0.9$; for CIFAR10 with relatively large $\Phi$ (bottom row), it was run with $\eta=0.001$ for DP-Adam and $\eta \approx0.0256$ for DP-SGDM with $B=256, C=1.0, \sigma=1.0, \beta_1=0.9$. We observe that the performances are close between the two algorithms in train loss, test loss and test accuracy. Some discrepancy still exists since the observed value of DP bias is concentrated around but not exactly equal to $\Phi$. When $\Phi$ is relatively large as in the second case on CIFAR10, $\Phi$ is more likely to dominate the denominator of DP-Adam's update, and we observe an even closer behaviour between DP-Adam and DP-SGDM in all three aspects. There could also be possible different generalization behaviour between the two algorithm, but the training behaviour is almost identical.

\begin{figure*}[t]
\centering
\includegraphics[width=0.9\linewidth]{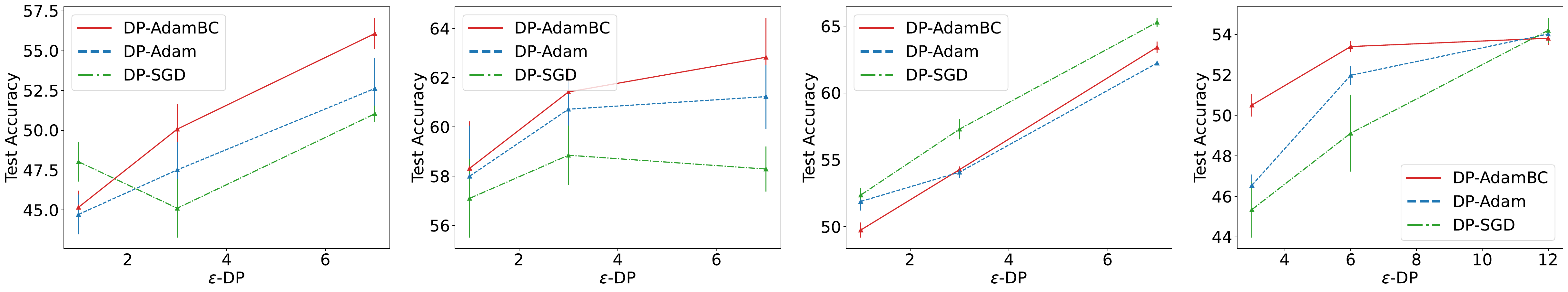}
\caption{(From left to right) Comparing the performance of DP-Adam, DP-AdamBC and DP-SGD on QNLI and SNLI dataset (nlp), CIFAR10 (images) and obgn-arxiv (node classification) at different target pribacy budget ($\epsilon$). Each result is tuned separately. We report the mean (standard deviation) over 5 runs for the best parameters.}
\label{fig:res_multi_eps}
\end{figure*}

\begin{figure*}[tb]
\centering
\includegraphics[width=0.7\linewidth]{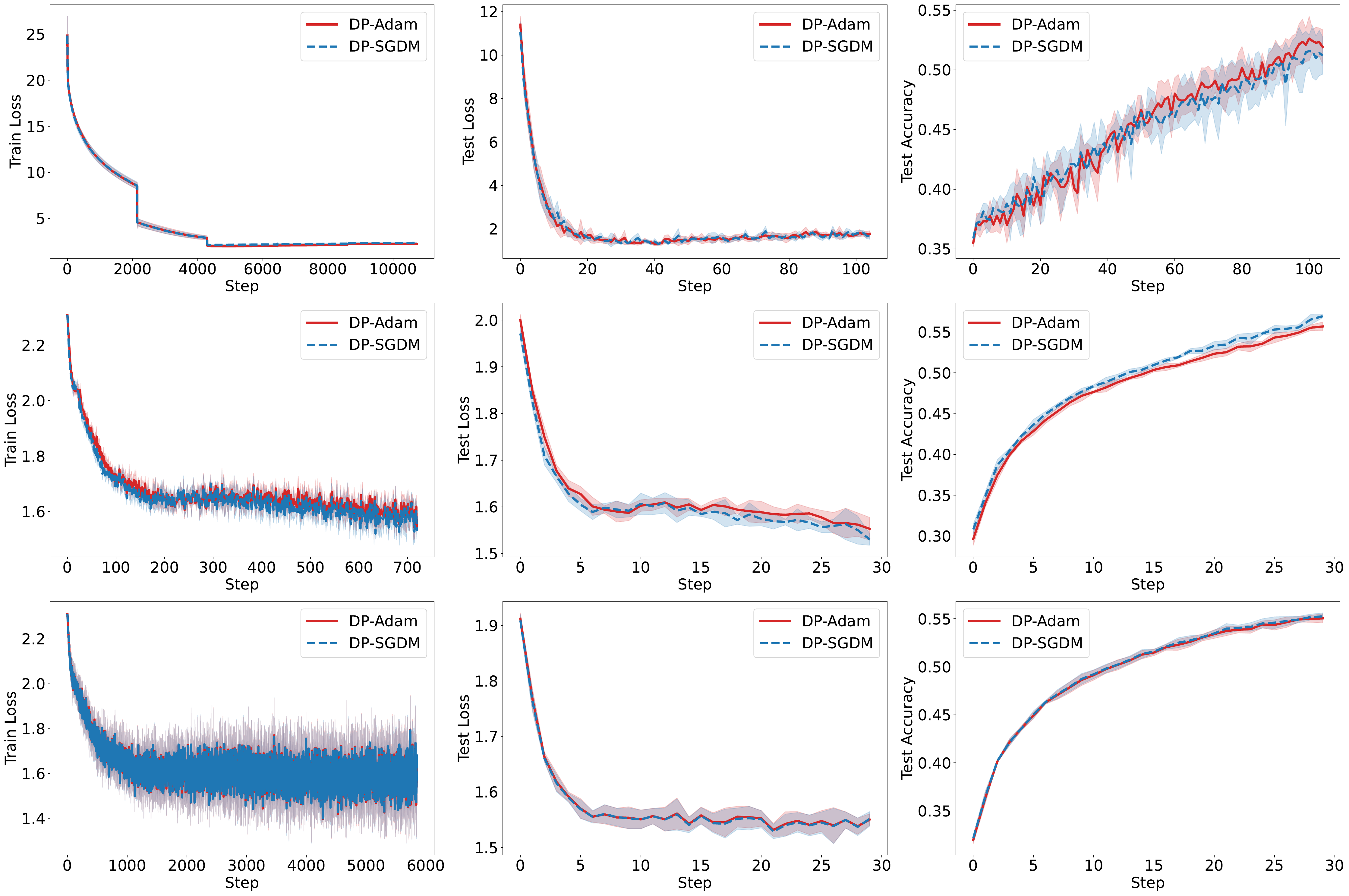}
\caption{Compare train loss, test loss and test accuracy between DP-Adam and the DP-SGDM with the converted learning rate schedule as in Equation \ref{eq:lr_convert} on SNLI (top row) and CIFAR10 with relatively small $\Phi$ (middle row) and large $\Phi$ (bottom row) respectively.
}
\label{fig:exp_rel_sgdm_more}
\end{figure*}

\paragraph{Additional Experiments.}
\label{apdix:additional_exp}
We repeat the comparison between DP-AdamBC, DP-Adam and DP-SGD on SNLI dataset with target $\epsilon=3$. The hyperparameters are $lr_{\textnormal{DP-Adam}} = 0.005$, $\gamma'_{\textnormal{DP-Adam}} = 1\textnormal{e-8}$, $lr_{\textnormal{DP-AdamBC}} = 0.005$ , $\gamma_{\textnormal{DP-AdamBC}} = 5\textnormal{e-9}$, $B=256, C=0.1, \sigma=0.5$. For each algorithm, we report the mean and standard deviation over 5 repeated runs with different seeds. Figure \ref{fig:additional_exp}(Left) shows that DP-AdamBC is approximately $2.6\%$ better in final mean test accuracy than DP-Adam with this privacy budget. The final mean(standard deviation) test accuracy for DP-AdamBC and DP-Adam are $50.1\%(1.6\%)$, $47.5\%(1.8\%)$ respectively.

We conduct additional experiments to compare the performances between DP-AdamBC, DP-Adam, and DP-SGD for smaller privacy budgets for all datasets. For CIFAR10, we tune the relevant hyperparameters (learning rate, $\gamma$, $\gamma'$, C, B, and number of steps) independently for each experiment. For SNLI and QNLI, we tune the same learning rate except for $B=256$ is fixed at its maximum capacity allowed on our machine.
For ogbn-arxiv, we tune the model hyperparameters for DP-Adam/DP-AdamBC and DP-SGD over 50 runs and then tune the optimizer hyperparameters for each algorithm (with fixed model hyperparameters). For DP-Adam/DP-AdamBC, we use noise multiplier $\lambda = 2.7$, maximum degree $K = 6$, batch size $B = 10,000$, clipping norm $C = 0.0003$, one encoder layer, and two decoding layers. For DP-SGD, we use noise multiplier $\lambda = 2.3$, maximum degree $K = 4$, batch size $B = 2000$, clipping norm $C = 0.0015$, one encoder layer, and one decoder layer. The optimizer hyperparameters are $lr_{\textnormal{DP-Adam}}~=~0.1$, $\gamma'_{\textnormal{DP-Adam}}~=~1\textnormal{e-8}$, $lr_{\textnormal{DP-AdamBC}}~=~0.0001$, $\gamma_{\textnormal{DP-AdamBC}}~=~4.3\textnormal{e-9}$, $lr_{\textnormal{DP-SGD}}~=~0.24$. We report the mean and standard deviation of 5 repeated runs with different seeds for the tuned algorithms. Table \ref{tab:res_multi_eps} shows that DP-AdamBC has the best mean test accuracy at this privacy budget.

To examine the generalizability of the results we test the algorithm on larger dataset and model. We repeat the comparison between DP-AdamBC and DP-Adam on SST2 dataset and Bert-Large model, with the last encoder block and the classifier head randomly initialized and trained. The hyperparameters are $lr_{\textnormal{DP-Adam}} = 0.005$, $\gamma'_{\textnormal{DP-Adam}} = 1\textnormal{e-8}$, $lr_{\textnormal{DP-AdamBC}} = 0.003$ , $\gamma_{\textnormal{DP-AdamBC}} = 3\textnormal{e-9}$, $B=256, C=0.1, \sigma=0.4$. Figure \ref{fig:additional_exp}(Right) shows that DP-AdamBC has around 2.4\% advantage in final mean test accuracy than DP-Adam.

\begin{figure*}[htb]
\centering
\includegraphics[width=\linewidth]{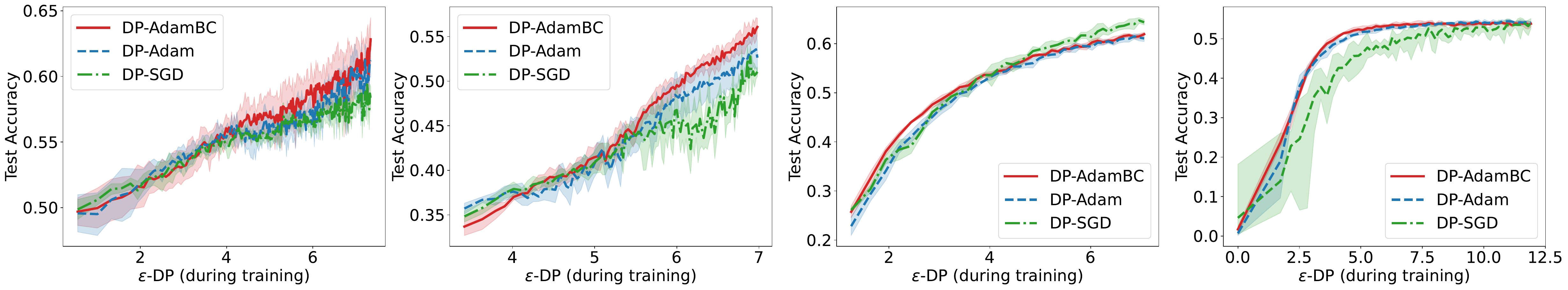}
\caption{(From left to right) Comparing the performance of DP-Adam, DP-AdamBC and DP-SGD on QNLI and SNLI dataset (nlp), CIFAR10 (images) and obgn-arxiv (node classification). At the end of training $\epsilon\textnormal{-DP} \approx 7$ for CIFAR10, QNLI and SNLI and $\epsilon\textnormal{-DP} \approx 12$ for obgn-arxiv. Each optimizer is tuned separately. The x-axis is the step over a single training trajectory converted to privacy budget $\epsilon$ to make results comparable for different optimizers.}
\label{fig:exp1}
\end{figure*}

\begin{figure*}[htb]
     \centering
     \begin{subfigure}[b]{0.35\textwidth}
         \centering
         \includegraphics[width=\textwidth]{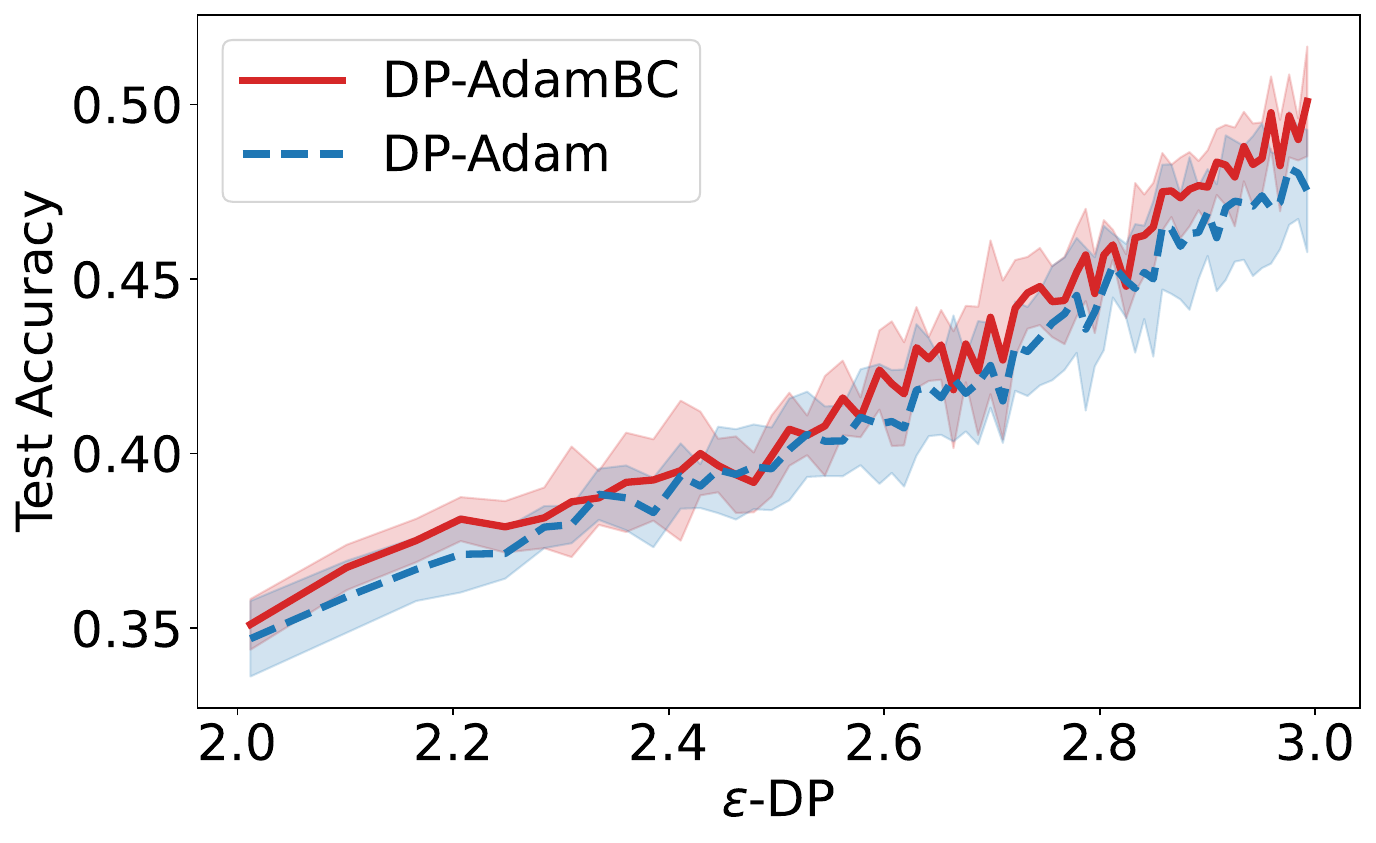}
     \end{subfigure}
     \begin{subfigure}[b]{0.35\textwidth}
         \centering
         \includegraphics[width=\textwidth]{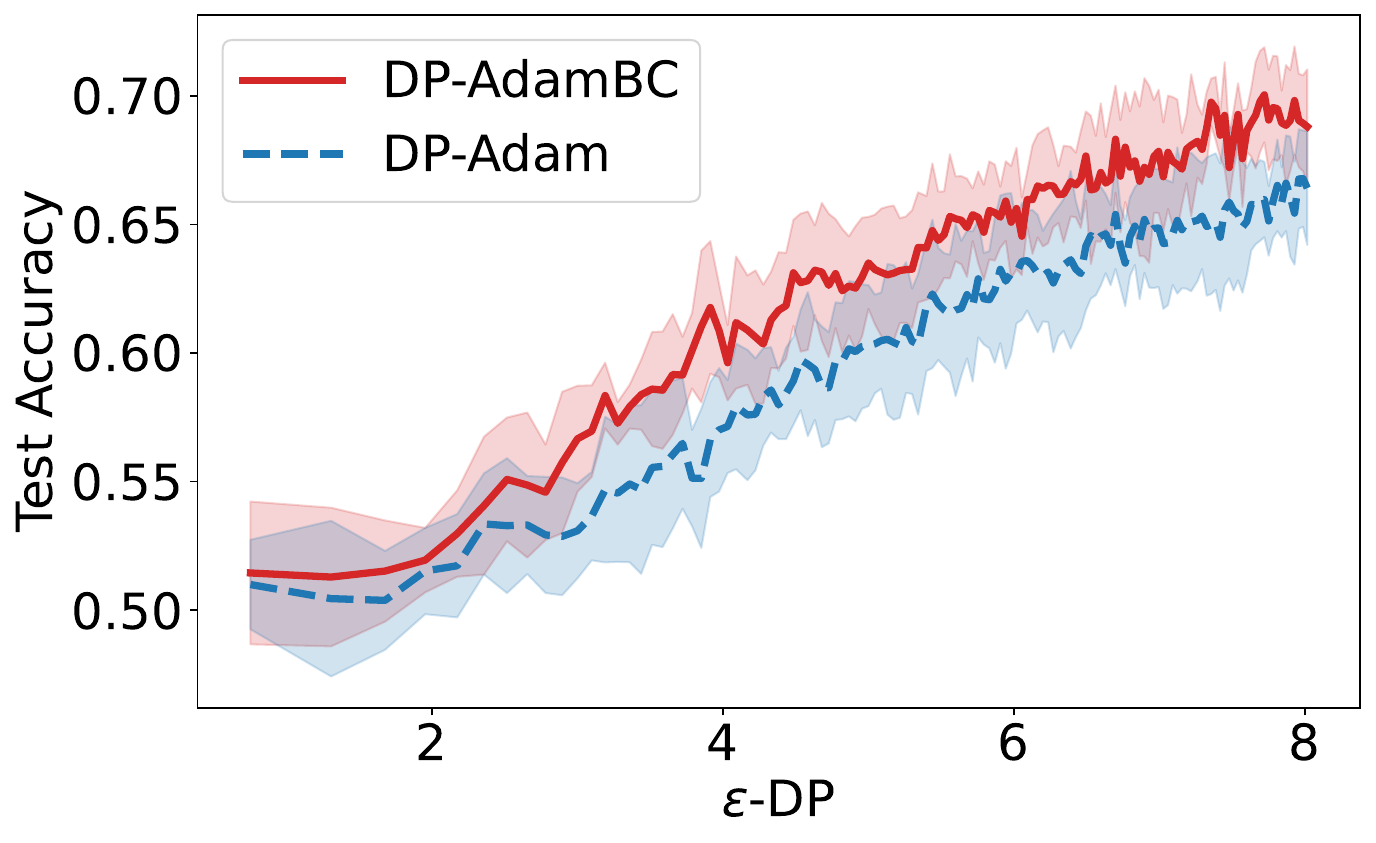}
     \end{subfigure}
    \caption{(\textbf{Left: }) Comparison between DP-AdamBC and DP-Adam when target $\epsilon=3$ on the SNLI dataset. DP-AdamBC shows $2.6\%$ advantage in final mean test accuracy. The final mean(standard deviation) test accurary for DP-AdamBC and DP-Adam are $50.1\%(1.6\%)$, $47.5\%(1.8\%)$ respectively. (\textbf{Right: }) Comparison between DP-AdamBC and DP-Adam on the SST2 dataset with Bert-Large. DP-AdamBC shows 2.4\% advantage in final test accuracy compared to DP-Adam (68.80\% vs 66.4\%).}
    \label{fig:additional_exp}
\end{figure*}

\paragraph{Comparison to DP-Adam variant.}
\label{apdix:compare_dp2}

We performed an empirical comparison of $\textrm{DP}^2$ with DP-AdamBC, and discuss the differences between the two algorithms below (we refer to Algorithm 1 in \citet{lidp2} as $\textrm{DP}^2$). There are three key differences between $\textrm{DP}^2$ and DP-AdamBC:
\begin{enumerate}
    \item Adam does not use pre-conditioned gradients in its update, since the moments are estimated from non-scaled gradients, whereas RMSprop (the base for $\textrm{DP}^2$ in \citet{lidp2}) uses scaled (thus pre-conditioned) gradients in the update. Figure 10 in \citet{lidp2} shows that the gains obeseved in $\textrm{DP}^2$ come in large part from reducing the amount of clipping and noising by using pre-conditioned gradients, but such advantage cannot directly transfer to Adam.
    \item The pre-conditioning term ($D_t$) is computed on much larger batches of gradients, since it is accumulated over multiple iterations. This means that $D_t$ is computed on a different distribution than the actual $t$-step gradients. For Adam, it would imply that the first moments (expectation of $t$-step gradient) and second moments (variance of $t$-step gradient) are estimated with different sampling distributions which could break Adam's sign descent behaviour.
    \item $\textrm{DP}^2$ does not use momentum on the gradients (as the first moment in Adam) potentially because it alternates between two optimizers. Momentum is typically important on tasks for which Adam works well \citep{kunstner2023heavytailed}.
\end{enumerate}

Empirically, \citet{lidp2} evaluates $\textrm{DP}^2$ on linear models and matrix factorization, and not on deep learning tasks. Figure \ref{fig:compare_dp2_snli_only} (Left) and \ref{fig:compare_dp2_cifar_only} (Left)  shows the comparison between DP2-RMSProp, DP-AdamBC and DP-SGD on SNLI with Bert-base and CIFAR10 with CNN respectively. DP2-RMSProp introduces several new hyperparameters (learning rates, clipping thresholds and delay parameters in both phases of SGD and RMSProp) whereas DP-AdamBC adds no additional hyperparameters compared to DP-Adam. We tuned $\textrm{DP}^2$ with grid search over: learning rate-\{0.001, 0.01, 0.1, 1.0, 3.0, 5.0, 7.0, 10.0\} for both datasets in both phases of $\textrm{DP}^2$, (CIFAR10) $s1=s2=s$ as suggested in \citet{lidp2}, s-\{25, 65, 130, 250\} (roughly 4, 10, 20, 40 out of 50 epochs), and C-\{0.1, 1, 3\} in both phases of $\textrm{DP}^2$; (SNLI) s-\{1000, 2000, 4500\} (roughly 0.5, 1, 2 out of 3 epochs) and C-\{0.01, 0.1, 1\} in both phases of $\textrm{DP}^2$, we fixed batch size $B$ and noise multiplier $\sigma$ to be the same as with the other three algorithms. Figure \ref{fig:compare_dp2_snli_only} (Left) and \ref{fig:compare_dp2_cifar_only} (Left) shows the mean and standard deviation of the test accuracy over 5 runs with on the two datasets. We observe that $\textrm{DP}^2$ first follows DP-SGD (since the first steps use this optimizer), and then struggles to converge on deep learning tasks, leading to poor performance. Indeed switching between two optimizers seem to make $\textrm{DP}^2$ unstable: Figure \ref{fig:compare_dp2_snli_only} (Right) and \ref{fig:compare_dp2_cifar_only} (Right) shows $\textrm{DP}^2$ on these two tasks with different $s$ (switching frequency). We observe the performance either has large turbulence or drops significantly when switching optimizers during training.



\begin{figure*}[htb]
    \centering
    \includegraphics[width=0.7\textwidth]{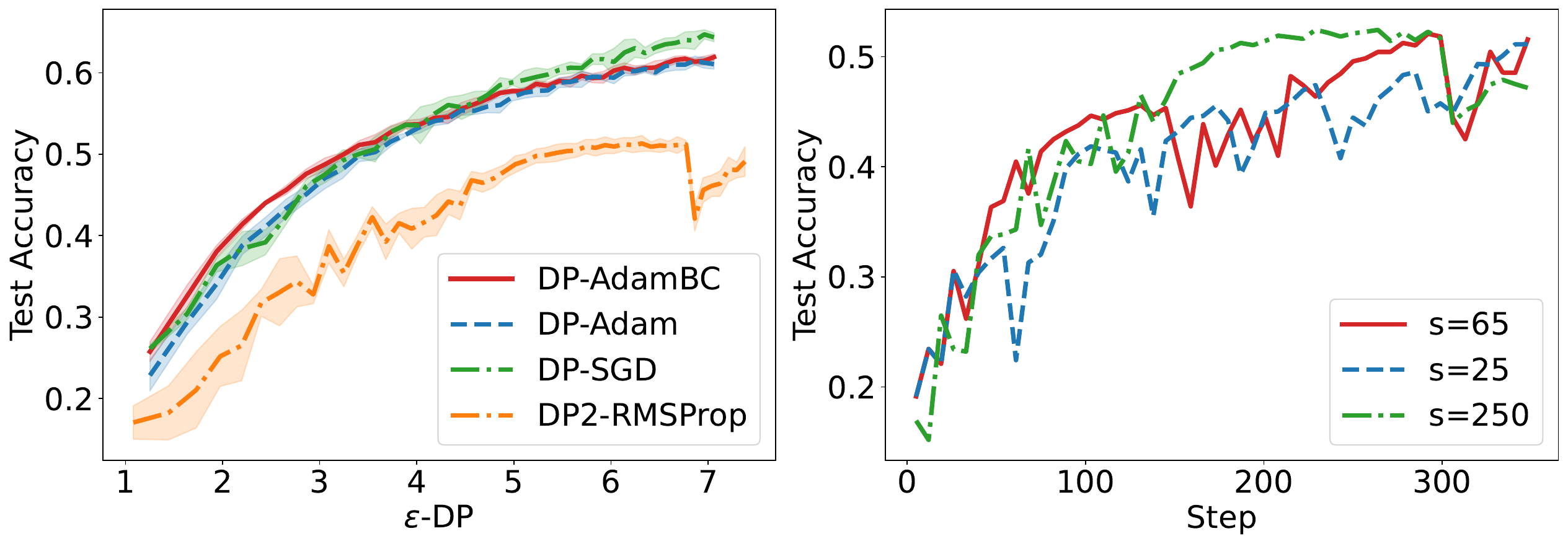}
    \caption{\textbf{Left:} Comparison between DP2RMSProp, DP-AdamBC, DP-Adam and DP-SGD on CIFAR10 with CNN, \textbf{Right:} The performance of DP2RMSProp with different phase switching frequency $s$ on CIFAR10 with CNN.}
    \label{fig:compare_dp2_cifar_only}
\end{figure*}

\section{Privacy Analysis}
\label{apdix:privacy_proof}
Since DP-AdamBC uses the privitized gradient to update first and second moment estimates, and the DP bias $\Phi$ can be calculated from public hyperparameters $B, \sigma, C$. By the post-processing property of DP, for a given privacy accounting method, the same privacy guarantee holds for DP-AdamBC as with DP-SGD or DP-Adam. We formalize the proof as follows.

\begin{theorem}[Privacy guarantee of DP-SGD]
There exist constants $c_1$ and $c_2$ so that given the sampling probability $q=L/N$ and the number of steps $T$, for any $\epsilon < c_1 q^2 T$ , Algorithm 1 in \citet{abadi2016deep} is $(\epsilon, \delta)$-differentially private for any $\delta>0$ if we choose $\sigma \geq (c_2 q \sqrt{T \log{(1/\delta)}}) / \epsilon$.
\end{theorem}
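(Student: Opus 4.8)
The plan is to reproduce the moments accountant argument of \citet{abadi2016deep}, which this statement restates verbatim. First I would set up the machinery. For a randomized mechanism $\mathcal{M}$ on datasets and a pair of adjacent datasets $d \sim d'$, define the privacy loss at output $o$ as $c(o;\mathcal{M},d,d') = \log\frac{\sP[\mathcal{M}(d)=o]}{\sP[\mathcal{M}(d')=o]}$ and the log-moment generating function $\alpha_{\mathcal{M}}(\lambda) = \max_{d\sim d'}\log \E_{o\sim\mathcal{M}(d)}\big[e^{\lambda\, c(o;\mathcal{M},d,d')}\big]$. Two standard facts then drive everything: (i) \textbf{composability} --- if $\mathcal{M}$ is the adaptive composition of $\mathcal{M}_1,\dots,\mathcal{M}_T$, then $\alpha_{\mathcal{M}}(\lambda) \le \sum_{i=1}^T \alpha_{\mathcal{M}_i}(\lambda)$; and (ii) \textbf{tail bound / conversion to DP} --- for any $\epsilon>0$, $\mathcal{M}$ is $(\epsilon,\delta)$-DP with $\delta = \min_\lambda \exp(\alpha_{\mathcal{M}}(\lambda) - \lambda\epsilon)$, which is just Markov's inequality applied to $e^{\lambda c}$.

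Next I would establish the per-step bound. Per-example clipping forces the $L^2$-sensitivity of the batch gradient sum to $C$, and a step adds $\mathcal{N}(0,\sigma^2C^2\mathbf{I})$, so after rescaling each iteration is exactly a subsampled Gaussian mechanism with noise multiplier $\sigma$ and per-example inclusion probability $q=L/N$. The crucial lemma (Lemma 3 of \citet{abadi2016deep}) is that there are absolute constants such that, for $q$ sufficiently small and every positive integer $\lambda \le \sigma^2\log\frac{1}{q\sigma}$, the one-step log-moment satisfies $\alpha_{\text{step}}(\lambda) \le \frac{q^2\lambda(\lambda+1)}{(1-q)\sigma^2} + O\!\left(\frac{q^3\lambda^3}{\sigma^3}\right) \le \frac{c\, q^2\lambda^2}{\sigma^2}$ for an absolute constant $c$. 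Proving this is the main obstacle: one writes the one-step output density as a mixture $\mu = (1-q)\mu_0 + q\mu_1$ of two unit-shift Gaussians, expands $\E_{z\sim\mu}\big[(\mu_0(z)/\mu(z))^\lambda\big]$ by the binomial theorem around $\mu_0$, and bounds the moments $\E_{\mu_0}\big[((\mu_0-\mu)/\mu)^k\big]$ for $k=0,\dots,\lambda$ --- the $k=2$ term dominating and giving the $q^2\lambda^2/\sigma^2$ scaling --- while separately controlling the region where $\mu_1/\mu$ is large; the restriction on $\lambda$ and a symmetric check of the $\mathcal{M}(d')$ direction are what keep the higher-order terms negligible.

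Finally I would assemble the bound. Composing the $T$ iterations via (i) gives $\alpha_{\mathcal{M}}(\lambda) \le T\,\alpha_{\text{step}}(\lambda) \le c\, T q^2\lambda^2/\sigma^2$. Plugging into the tail bound (ii), I want $\exp(cTq^2\lambda^2/\sigma^2 - \lambda\epsilon) \le \delta$; optimizing, the choice $\lambda \asymp \epsilon\sigma^2/(Tq^2)$ makes the exponent of order $-\epsilon^2\sigma^2/(Tq^2)$, so the requirement reduces to $\epsilon^2\sigma^2/(Tq^2) \gtrsim \log(1/\delta)$, i.e. $\sigma \ge c_2\, q\sqrt{T\log(1/\delta)}/\epsilon$ for a suitable $c_2$. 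The last check is feasibility of $\lambda$: it must be a positive integer below $\sigma^2\log\frac{1}{q\sigma}$, and substituting the relation between $\sigma$, $\epsilon$, $q$, $T$ shows this holds precisely when $\epsilon < c_1 q^2 T$ --- exactly the stated hypothesis, with $c_1$ absorbing the valid-$\lambda$ range and the integer rounding. This theorem is then invoked, together with the post-processing property, to obtain Proposition~\ref{prop:privacy_guarantee}.
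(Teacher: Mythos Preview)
Your proposal correctly reconstructs the moments accountant argument of \citet{abadi2016deep}. However, note that the paper itself does \emph{not} supply a proof of this theorem at all: in Appendix~E the statement is simply quoted verbatim from \citet{abadi2016deep} as a known result, and the paper's actual work is the short post-processing argument for Proposition~\ref{prop:privacy_guarantee} that follows it. So there is nothing in the paper to compare your proof against --- you have filled in a proof the authors deliberately outsourced to the cited reference, and your outline of that proof (privacy-loss moments, composability, the subsampled-Gaussian per-step bound $\alpha_{\text{step}}(\lambda)\lesssim q^2\lambda^2/\sigma^2$, and the final optimization over $\lambda$) matches the original argument.
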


\begin{proposition}[Privacy guarantee of DP-AdamBC]
    Let the optimization algorithm DP-SGD($\theta, X, y, C, \sigma, B$) (Algorithm 1 in \citet{abadi2016deep}),
    with privacy analysis \textit{Compose}($T$, $\theta_{1,\ldots,T}$), be $(\epsilon, \delta)$-DP, then DP-AdamBC($\theta, X, y, C, \sigma, B$) with the same privacy analysis \textit{Compose}($T$, $\theta_{1,\ldots,T}$) is also $(\epsilon, \delta)$-DP.
\end{proposition}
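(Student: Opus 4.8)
The plan is to show that DP-AdamBC releases exactly the same privatized quantities as DP-SGD, and that everything else it computes is data-independent post-processing, so the identical composition-based accounting \textit{Compose}($T,\theta_{1,\dots,T}$) carries over verbatim.

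First I would isolate the only randomized, data-dependent operation in Algorithm~\ref{alg:dp_adam}: the computation of $\tilde g_t$ from the per-example gradients at $\theta_{t-1}$ by clipping each to $L2$-norm $C$ and adding $\cN(0,\sigma^2 C^2 \I^d)/B$ to the sum. This step is line-for-line the privatized gradient step of DP-SGD (Algorithm~1 in \citet{abadi2016deep}): conditioned on the transcript so far (hence on $\theta_{t-1}$), the released $\tilde g_t$ is the output of a subsampled Gaussian mechanism with noise multiplier $\sigma$ applied to a query whose $L2$-sensitivity is at most $C$ by clipping, uniformly over the point $\theta_{t-1}$ at which the gradients are evaluated. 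Thus the per-step privacy profile -- the only object that a composition accountant such as RDP/moments accounting consumes -- is identical for the two algorithms.

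Next I would observe that the map from the noisy gradients $(\tilde g_1,\dots,\tilde g_t)$, together with $\theta_0$ and the public constants $\eta,\beta_1,\beta_2,\gamma'$, to $\theta_t$ is deterministic and does not depend on the data. The only ingredient new relative to DP-Adam is the subtracted bias $\Phi=(\sigma C/B)^2$ and the stability floor $\gamma'$; both are functions solely of the public hyperparameters $(\sigma,C,B)$ and $\gamma'$, so $\hat m_t$, $\hat v_t^{\textnormal{corr}}$, and $\theta_t$ are measurable functions of the released $\tilde g_{\le t}$. Viewing the procedure step by step as an adaptive mechanism: at step $t$ the analyst picks, as a function of past outputs, the point at which to query the clipped batch gradient (this is precisely where the Adam-BC update rule enters, purely as the choice of next query), and then the subsampled Gaussian mechanism is applied. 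This is exactly the adaptive-composition setting for which \textit{Compose} is valid, so the composed release $\theta_{1,\dots,T}$ -- and in particular the returned $\theta$ -- inherits the same $(\epsilon,\delta)$ guarantee as DP-SGD; any further function of it stays $(\epsilon,\delta)$-DP by post-processing.

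The one point needing care -- more careful bookkeeping than genuine difficulty -- is to resist coupling DP-SGD and DP-AdamBC on the same noise draws and claiming identical transcripts: because the two update rules drive the iterates along different trajectories, the queried gradients, and hence $\tilde g_t$ for $t\ge 2$, actually differ between the algorithms. The correct argument is therefore the adaptive-composition one above, in which what is shared is not the realized transcript but the per-step conditional mechanism (subsampled Gaussian, sensitivity $\le C$, noise multiplier $\sigma$), which is all that \textit{Compose} depends on. With that observation the proposition follows immediately.
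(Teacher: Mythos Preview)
Your proposal is correct and follows essentially the same approach as the paper: identify that the privatized-gradient step is identical to DP-SGD's, observe that $\Phi=(\sigma C/B)^2$ and $\gamma'$ depend only on public hyperparameters so the update is post-processing of $\tilde g_{\le t}$, and conclude via the same composition accounting. Your final paragraph on adaptive composition---emphasizing that it is the per-step conditional mechanism, not the realized transcript, that is shared with DP-SGD---is a point the paper's proof leaves implicit, so your write-up is if anything slightly more careful on that front.
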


\begin{proof}
    Let \textit{PrivitizeGradient}($\theta, X, y, C, \sigma, B$) be the key step providing DP guarantee in DP-SGD, DP-Adam and DP-AdamBC:
    \begin{center}
    (Compute gradient) $g_t(x_i) \xleftarrow{} \nabla_\theta \mathcal{L} (\theta, x_i), \forall i \in B$, \\
    (Clip gradient) $\Bar{g}_t(x_i) \xleftarrow{} g_t(x_i) / \max{(1, \lVert g_t(x_i) \rVert /C)}$, \\
    (Noise gradient) $\Tilde{g_t}\xleftarrow{} (1/B) (\sum_i \Bar{g}_t(x_i) + \mathcal{N}(0, \sigma^2 C^2))$.
    \end{center}
    When the DP noise is sampled from a Gaussian distribution, by standard arguments of the Gaussian mechanism in \citet{dwork2006calibrating} (Appendix A), the procedure is $(\epsilon', \delta')$-DP with $\sigma \geq (2\ln{(1.25/\delta')}C)/\epsilon'$. By the privacy amplification theorem, \textit{PrivitizeGradient}($\theta, X, y, C, \sigma, B$) is $(O(q\epsilon'), q\delta')$-DP with sampling probability $q=B/N$ for batch size $B$ and total sample size $N$. Let \textit{Compose}($T$, $\theta_{1,\ldots,T}$) computes the overall privacy cost over $T$ training iterations with a privacy accountant (e.g. strong composition in \citet{dwork2006calibrating}, moment accountant in \citet{abadi2016deep}, RDP accountant in \citet{Mironov_2019}), \textit{PrivitizeGradient}($\theta, X, y, C, \sigma$) over $T$ iterations with \textit{DP-SGD} update $\theta_{t+1} \xleftarrow{} \theta_{t} - \eta \nabla_{\theta_t} f$, where $\eta$ is a hyperparameter representing learning rate, $\nabla_{\theta_t} f$ is the output privitized gradient from \textit{PrivitizeGradient}, is $(\epsilon, \delta)$-DP. Since DP-AdamBC's update (Equation (4)) does not inquire additional private information as (1) $\Phi$ and $\gamma'$ are determined from user-defined hyperparameters and (2) moment estimates are calculated from privitized gradients. By the post-processing property of DP \citep{dwork2006calibrating}, \textit{PrivitizeGradient}($\theta, X, y, C, \sigma$) over $T$ iterations with \textit{DP-AdamBC} update is $(\epsilon, \delta)$-DP. If \textit{Compose}($T$, $\theta_{1,\ldots,T}$) is the moment accountant in \citet{abadi2016deep}, DP-AdamBC has the same privacy guarantees as in Theorem 1.
\end{proof}

\section{Convergence analysis}
\label{apdix:convergence_analysis}
In a recent work, \citet{défossez2022simple} shows dropping the correction term in the first moment estimation (Section 2.2) has no observable effect and proves the convergence of Adam with this mild modification. We use the same setup and extend the analysis from \citet{défossez2022simple}.

Let $F: \sR^d \rightarrow \sR$ be the objective function which $d \in \sN$ is the problem dimension (number of parameters of the function to optimize), $\theta$ be the model parameters, $f: \sR^d \rightarrow \sR$ be the stochastic function such that $\forall \theta \in \sR^{d}, \E{[\nabla f(\theta)]} = \nabla F(\theta)$, $\eta$ be the learning rate, the subscript $t$ be the step index, the subscript $i$ be the dimension index. We make the following assumptions.

\begin{assumption}[$F$ is bounded below]
\label{assmption:f_bounded_below}
$F$ is bounded below by $F_*$, i.e. $\forall \theta \in \sR^{d}, F(\theta) \geq F_*$.
\end{assumption}

\begin{assumption}[Bounded stochastic gradient]
\label{assmption:bounded_stochastic_grad}
The $L2$-norm of the stochastic gradient is uniformly almost surely bounded, i.e. $\exists C \geq 0$ such that $\forall t$, $\forall \theta \in \sR^{d}, \norm{\nabla f_t(\theta)} \leq C$ a.s..
\end{assumption}
Note that Assumption \ref{assmption:bounded_stochastic_grad} implies that for every update step the gradient clipping operation in DP-SGD, DP-Adam or DP-AdamBC has no effect. It is usually assumed for the ease of theoretical analysis such as in \citet{li2023dp2}, but is often violated empirically.

At step $t$, we call the noise sample $z_t \sim Z$. The privatized gradient is $\Tilde{g_t} = \nabla f_t(\theta) + z_t$, and the private second moment estimate is $v_t^p=\sum_{j=1}^{t}\beta_2^{j}\Tilde{g_t}^2$. We introduce the following Lemma, which is the main technical change we need to adapt the proof of \citet{défossez2022simple} to our setting with DP noise.

\begin{lemma}
\label{lemma:gaussian_concentration_on_z}
Let $\mu^{*} = (\beta_2(\beta_2^t-1)/(\beta_2-1))((\Phi-\frac{2\Phi}{\pi})+(C+\sqrt{\frac{2\Phi}{\pi}})^2)$, $\nu^{*} = (2\beta_2^2\Phi\sqrt{(\beta_2^t-1)/ (\beta_2^2-1)})$, $b^{*}=4\beta_2\Phi$ we have,
    $\Pr[v_t^p \geq \delta] \leq \alpha$ with,
    \begin{align*}
        \delta \geq
        \begin{cases}
            \mu^{*} + \sqrt{2\nu^{*2}\ln{(\frac{2}{\alpha})}}
            & 0 \leq \delta  \leq \frac{\nu^{*2}}{b^{*}} \\
            \mu^{*} + 2b^{*}\ln{(\frac{2}{\alpha})}
            & \delta   \geq \frac{\nu^{*2}}{b^{*}}.
        \end{cases}
    \end{align*}
\end{lemma}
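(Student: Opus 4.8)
The plan is to bound $v_t^p$ from above, almost surely and coordinate-wise, by a sum of \emph{independent} random variables, and then apply a Bernstein-type tail bound for sums of sub-exponential variables, as in Lemma~\ref{lemma:zk2_sub_exponential} and Proposition~2.9 of \citet{wainwright2019high}. Since the privatized gradient at step $\tau$ is $\Tilde{g}_\tau = \nabla f_\tau(\theta_{\tau-1}) + z_\tau$ with $\norm{\nabla f_\tau(\theta_{\tau-1})} \le C$ almost surely (Assumption~\ref{assmption:bounded_stochastic_grad}), we have the pointwise bound $\Tilde{g}_{\tau,i}^2 \le (|\nabla f_{\tau,i}(\theta_{\tau-1})| + |z_{\tau,i}|)^2 \le (C + |z_{\tau,i}|)^2$ in each coordinate $i$. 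The key observation is that the right-hand side depends only on the \emph{fresh} noise $z_{\tau,i}$, which is independent of everything drawn before step $\tau$; hence, with the same weighting as in the definition of $v_t^p$, the quantity $S_t \triangleq \sum_{j} \beta_2^{j}(C + |z_{j,i}|)^2$ is a sum of independent terms and $\Pr[v_t^p \ge \delta] \le \Pr[S_t \ge \delta]$ for every $\delta$. It therefore suffices to control the \emph{upper} tail of $S_t$; this is also why $\mu^*$ enters as an over-estimate of $\E[v_t^p]$ rather than its exact mean.

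\textbf{Mean and per-term sub-exponentiality.} Expanding $(C + |z|)^2 = C^2 + 2C|z| + z^2$ for $z \sim \cN(0,\Phi)$ with $\Phi = \sigma^2 C^2/B^2$, and using $\E|z| = \sqrt{2\Phi/\pi}$, $\E[z^2] = \Phi$, together with the geometric sum $\sum_{j=1}^{t}\beta_2^{j} = \beta_2(\beta_2^{t}-1)/(\beta_2-1)$, I would compute $\E[S_t] = \big(\beta_2(\beta_2^{t}-1)/(\beta_2-1)\big)\big(C^2 + 2C\sqrt{2\Phi/\pi} + \Phi\big)$, which equals $\mu^*$ once one rewrites $C^2 + 2C\sqrt{2\Phi/\pi} + \Phi = (C+\sqrt{2\Phi/\pi})^2 + (\Phi - 2\Phi/\pi)$ (note $\Phi - 2\Phi/\pi = \mathrm{Var}(|z|)$). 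For the fluctuations of a single term $\beta_2^{j}(C+|z_j|)^2$: the quadratic part $\beta_2^{j} z_j^2$ is sub-exponential by Lemma~\ref{lemma:zk2_sub_exponential}, and the linear part $2C\beta_2^{j}|z_j|$ is a $1$-Lipschitz function of a Gaussian, hence sub-Gaussian (sub-exponential with $b=0$). Since both parts involve the \emph{same} $z_j$ their moment generating functions cannot simply be multiplied; I would instead bound the joint MGF $\E\big[\exp\{\lambda\beta_2^{j}(2C|z_j| + z_j^2)\}\big]$ directly via the Gaussian-integral computation in the proof of Lemma~\ref{lemma:zk2_sub_exponential} augmented by the extra linear exponent (or, more crudely, via Cauchy--Schwarz on the two factors), obtaining that $\beta_2^{j}(C+|z_j|)^2$ is sub-exponential with parameters $(\nu_j, b_j)$ proportional to $\beta_2^{j}$, with $b_j = 4\beta_2^{j}\Phi$ inherited from Lemma~\ref{lemma:zk2_sub_exponential}.

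\textbf{Aggregation and inversion.} Because the $t$ terms are independent, additivity of sub-exponential variables (Proposition~2.9 / Theorem~2.3 in \citet{wainwright2019high}) gives that $S_t - \mu^*$ is sub-exponential with $\nu^{*} = \big(\sum_{j}\nu_j^2\big)^{1/2}$ --- a geometric sum in $\beta_2^{2j}$ that collapses to the claimed $\sqrt{(\beta_2^{t}-1)/(\beta_2^2-1)}$-type factor --- and $b^{*} = \max_j b_j = 4\beta_2\Phi$, the maximum being attained at $j=1$ since $\beta_2 \in (0,1)$. (The $\nu^{*}, b^{*}$ obtained this way carry the same dependence on the noise scale, of the form $\Phi + \sqrt{\Phi}\,C$, as in Proposition~\ref{prop:bound_martingale}.) Applying the two-regime Bernstein bound $\Pr[S_t - \mu^* \ge u] \le \exp(-u^2/2\nu^{*2})$ for $0 \le u \le \nu^{*2}/b^{*}$ and $\le \exp(-u/2b^{*})$ for $u \ge \nu^{*2}/b^{*}$, equating the right-hand side with $\alpha$ (using the two-sided form accounts for the $\ln(2/\alpha)$), and solving for $u = \delta - \mu^*$ produces the stated piecewise lower bound on $\delta$.

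\textbf{Main obstacle.} The delicate step is the per-term MGF estimate for $2C|z_j| + z_j^2$: one must control a linear and a quadratic function of the \emph{same} Gaussian simultaneously, and then carry the constants through the geometric sums so that they collapse to exactly $\mu^*, \nu^*, b^*$. The almost-sure reduction $\Tilde{g}_{\tau,i}^2 \le (C+|z_{\tau,i}|)^2$ itself is clean, but it only yields an upper tail --- which is all Lemma~\ref{lemma:gaussian_concentration_on_z} needs --- at the price of replacing $\E[v_t^p]$ by the larger $\mu^*$.
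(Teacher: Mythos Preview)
Your approach is essentially the paper's: almost-sure domination $\Tilde{g}_{\tau,i}^2 \le (C+|z_{\tau,i}|)^2$, compute the mean of the dominating sum as $\mu^*$ via the half-normal moments, establish per-term sub-exponentiality, aggregate by independence, and invert the Bernstein bound. The mean computation and the $b^* = \max_j 4\beta_2^{j}\Phi = 4\beta_2\Phi$ step match exactly.

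The one place you are \emph{more} careful than the paper is the per-term MGF. The paper does not split $2C|z_j| + z_j^2$ or argue via Cauchy--Schwarz; it simply states, ``by the similar argument as in Proposition~\ref{prop:bound_fixed}, $\beta_2^t(C+z_t)^2$ is sub-exponential with $\nu = 2\beta_2^t\Phi$, $b = 4\beta_2^t\Phi$,'' i.e.\ it reuses the parameters of Lemma~\ref{lemma:zk2_sub_exponential} for $\beta_2^t z_t^2$ verbatim, as if the shift by $C$ did not affect the sub-exponential constants. That is why the stated $\nu^*$ contains only the $\Phi$-scale and no $C\sqrt{\Phi}$ cross-term. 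Your instinct that a rigorous MGF bound should produce a $\nu_j$ of the form $\beta_2^{j}(\Phi + C\sqrt{\Phi})$, as in Proposition~\ref{prop:bound_martingale}, is correct; the ``obstacle'' you flag (collapsing exactly to the stated $\nu^*$) is not a defect in your argument but a reflection of the paper's shortcut at this step. If you want to reproduce the paper's constants literally, you would have to accept that shortcut; if you want a fully justified bound, your route gives the (slightly larger) $\nu^*$ with the cross-term, and everything downstream goes through unchanged.
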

\begin{proof}
    By Assumption \ref{assmption:bounded_stochastic_grad} we have $|\nabla f_t(\theta)| \leq C, \, \forall t$, so that $\beta_2^t(\nabla f_t(\theta) + z_t)^2 \leq \beta_2^t(|\nabla f_t(\theta)| + |z_t|)^2 \leq \beta_2^t(C + |z_t|)^2$.
    Since $z_t$ is sampled from $\cN(0, \Phi=(\sigma C/B)^2)$, by the similar argument as in Proposition \ref{prop:bound_fixed}, $\beta_2^t(C + z_t)^2$ is sub-exponential with $\nu = 2\beta_2^t\Phi$, $b=4\beta_2^t\Phi$. Since each $z_t$ is drawn independently, by the additive property of sub-exponential, $\sum_{j=1}^{t}\beta_2^{j}(C + z_t)^2$ is sub-exponential with $\nu^{*} = (2\beta_2^2\Phi\sqrt{(\beta_2^t-1)/ (\beta_2^2-1)})$, $b^{*}=4\beta_2\Phi$.
    In addition we have $\E{[v_t^p]} \leq \E{[\sum_{j=1}^{t}\beta_2^j(C+|Z_t|)^2]}=\sum_{j=1}^{t}\beta_2^j\E{[(C+|Z_t|)^2]} = \sum_{j=1}^{t}\beta_2^j (\Var[|Z_j|]+(C+\E{[|Z_j|]})^2)$. Given $Z_j\sim N(0,\Phi)$, $|Z_j|$ has a truncated Normal distribution where $\E{[|Z_j|]} = \sqrt{\frac{2\Phi}{\pi}}$ and $\Var{[|Z_j|]}=\Phi-\frac{2\Phi}{\pi}$. Let $\mu^{*} = (\beta_2(\beta_2^t-1)/(\beta_2-1))((\Phi-\frac{2\Phi}{\pi})+(C+\sqrt{\frac{2\Phi}{\pi}})^2)$ we have $E{[v_t^p]}\leq \mu^{*}$.
    By Proposition 2.9 in \citet{wainwright2019high} we have the result.
\end{proof}

We can apply a union bound over Lemma \ref{lemma:gaussian_concentration_on_z} to directly obtain the following result:
\begin{corollary}
\label{lemma:union_bound_on_v}
    Let $\mu^{*}$ be defined as in Lemma \ref{lemma:gaussian_concentration_on_z}. We have that $\Pr[\sup_t v_t^p \geq \delta] \leq \alpha$ with,
    \begin{align*}
        \delta \geq
        \begin{cases}
            \mu^{*} + \sqrt{2\nu^{*2}\ln{(\frac{2T}{\alpha})}}
            & 0 \leq \delta  \leq \frac{\nu^{*2}}{b^{*}} \\
            \mu^{*} + 2b^{*}\ln{(\frac{2T}{\alpha})}
            & \delta   \geq \frac{\nu^{*2}}{b^{*}}.
        \end{cases}
    \end{align*}
\end{corollary}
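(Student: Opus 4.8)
The plan is to obtain the uniform-in-$t$ bound from the single-step bound of Lemma~\ref{lemma:gaussian_concentration_on_z} by a routine union bound, after first checking that one threshold $\delta$ can be made to serve every step $t \le T$ simultaneously.

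First I would fix $t$ and apply Lemma~\ref{lemma:gaussian_concentration_on_z} with the tolerance level $\alpha$ replaced by $\alpha/T$. Writing the sub-exponential parameters from that lemma with explicit $t$-dependence as $\mu_t^{*},\nu_t^{*},b_t^{*}$, this gives $\Pr[v_t^p \ge \delta] \le \alpha/T$ whenever $\delta$ exceeds the step-$t$ threshold $\mu_t^{*} + \sqrt{2\nu_t^{*2}\ln(2T/\alpha)}$ in the regime $0 \le \delta \le \nu_t^{*2}/b_t^{*}$, and $\mu_t^{*} + 2b_t^{*}\ln(2T/\alpha)$ in the regime $\delta \ge \nu_t^{*2}/b_t^{*}$.

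Next I would note that $\mu_t^{*}$ is proportional to $(1-\beta_2^{t})/(1-\beta_2)$ and $\nu_t^{*}$ to $\sqrt{(1-\beta_2^{t})/(1-\beta_2^{2})}$, both nondecreasing in $t$ since $\beta_2<1$, while $b^{*}=4\beta_2\Phi$ is already $t$-independent. Hence evaluating $\mu^{*},\nu^{*},b^{*}$ at $t=T$ (as the corollary statement intends) yields an upper bound valid for all $t\le T$, and the resulting threshold $\delta$ dominates every step-$t$ threshold; this simultaneously preserves the two-regime case split, so $\Pr[v_t^p \ge \delta] \le \alpha/T$ holds for each $t=1,\dots,T$. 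Finally, since $\{\sup_{t\le T} v_t^p \ge \delta\} = \bigcup_{t=1}^{T}\{v_t^p \ge \delta\}$, the union bound gives $\Pr[\sup_{t\le T} v_t^p \ge \delta] \le \sum_{t=1}^{T}\Pr[v_t^p \ge \delta] \le T\cdot(\alpha/T) = \alpha$, which is the claim. There is essentially no obstacle here; the only point needing a line of care is the monotonicity check that lets a single threshold serve all $t$, together with the observation that replacing the constants by their $t=T$ values keeps us in a consistent regime of the piecewise bound.
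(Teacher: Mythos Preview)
Your proposal is correct and follows exactly the approach the paper takes: the paper states only that the corollary is obtained by ``apply[ing] a union bound over Lemma~\ref{lemma:gaussian_concentration_on_z},'' and your argument is precisely that union bound with $\alpha$ replaced by $\alpha/T$. Your additional care about the monotonicity of $\mu_t^{*}$ and $\nu_t^{*}$ in $t$ (so that evaluating at $t=T$ dominates all earlier steps) is a detail the paper leaves implicit but which you handle correctly.
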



\begin{assumption}[Smoothness of $F$]
\label{assmption:f_smooth}
The gradient of $F$ is $L$-Liptchitz-continuous such that $\forall \theta, \theta' \in \sR^{d}, \norm{\nabla F(\theta) - \nabla F(\theta')} \leq L \norm{\theta - \theta'}$.
\end{assumption}

Let $\theta_0$ be the randomly initialized starting point, $\forall t \in [1, \ldots, T]$, the update rule of DP-AdamBC is $\theta_{t} \leftarrow \theta_{t-1} - \eta \cdot \hat{m}_t / \sqrt{\max{\big(\hat{v}_t - \Phi}, \gamma' \big)}$, and the update rule of DP-Adam is $\theta_{t} \leftarrow \theta_{t-1} - \eta \cdot \hat{m}_t / \sqrt{\hat{v}_t + \gamma}$, where $\hat{m}_t$, $\hat{v}_t$ and $\Phi$ are the private first, second moment estimates and the DP bias in private second moment estimates (Section \ref{sec:motiv-bias}). We prove the following propositions.

\begin{proposition}[Convergence of DP-AdamBC and DP-Adam Without momentum]
    Given the above assumptions and the defined update rule with $\beta_1 = 0, 0 < \beta_2 < 1$, $\eta_t = \eta \sqrt{\frac{1-\beta_2^t}{1-\beta_2}}, \eta>0$, $\forall \alpha$ s.t. $0<\alpha<1$, let $\nu^{*} = (2\beta_2^2\Phi\sqrt{(\beta_2^t-1)/ (\beta_2^2-1)})$, $b^{*}=4\beta_2\Phi$, we have $\forall T \in \sN^*$, \\
    \\
    (DP-Adam)
    \begin{align*}
        \E{[\norm{\nabla F(\theta_{t})}^2]} &\leq \frac{2\delta(F(\theta_0)-F_{*})}{\eta T} \\ &+ \bigg( \frac{4d(C^2+\Phi)}{\sqrt{1-\beta_2}} + \frac{\eta d L \sqrt{C^2+\Phi}}{1-\beta_2} \bigg) \\ & \times \bigg(\frac{1}{T}\ln{\big(1+\frac{C^2+\Phi}{(1-\beta_2)\epsilon}) - \ln{(\beta_2)}} \bigg),
    \end{align*}
    (DP-AdamBC)
    \begin{align*}
        \E{[\norm{\nabla F(\theta_{t})}^2]} &\leq \frac{2\sqrt{\delta^2 - \Phi}(F(\theta_0)-F_{*})}{\eta T} \\ &+ \bigg( \frac{4dC^2)}{\sqrt{1-\beta_2}} + \frac{\eta d L C}{1-\beta_2} \bigg) \\ & \times \bigg(\frac{1}{T}\ln{\big(1-\frac{C^2+\Phi}{(1-\beta_2)\Phi}) - \ln{(\beta_2)}} \bigg),
    \end{align*}
        \begin{align*}
    \delta \geq
    \begin{cases}
        \mu^{*} + \sqrt{\ln{(1/\frac{\alpha}{2T})}(2\nu^{*2})}
        & 0 \leq \delta  \leq \frac{\nu^{*2}}{b^{*}} \\
        \mu^{*} + \ln{(1/\frac{\alpha}{2T})}2b^{*}
        & \delta   \geq \frac{\nu^{*2}}{b^{*}}.
    \end{cases}
\end{align*}
\end{proposition}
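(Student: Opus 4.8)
The plan is to adapt the convergence analysis of \citet{défossez2022simple} for Adam without first-moment correction to the DP setting, the one genuinely new ingredient being the high-probability control of the second-moment accumulator from Lemma~\ref{lemma:gaussian_concentration_on_z} and Corollary~\ref{lemma:union_bound_on_v}, which is needed precisely because the DP noise $z_t$ is Gaussian and hence unbounded. Since $\beta_1 = 0$, the first-moment estimate collapses to $\hat m_t = m_t = \tilde g_t$, so the DP-Adam update is $\theta_t = \theta_{t-1} - \eta_t \, \tilde g_t / \sqrt{\hat v_t + \gamma}$ and the DP-AdamBC update is $\theta_t = \theta_{t-1} - \eta_t \, \tilde g_t / \sqrt{\max(\hat v_t - \Phi, \gamma')}$, with $\eta_t = \eta\sqrt{(1-\beta_2^t)/(1-\beta_2)}$ chosen exactly so that the $(1-\beta_2^t)$ normalization of $\hat v_t$ cancels and the relevant accumulator is the un-normalized $v_t^p$. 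First I would apply the descent lemma from Assumption~\ref{assmption:f_smooth} to $F$ along the step $\theta_{t-1}\to\theta_t$, sum over $t=1,\dots,T$, and telescope the left-hand side to $F(\theta_0)-F(\theta_T)\le F(\theta_0)-F_*$ via Assumption~\ref{assmption:f_bounded_below}. This leaves a signal term $-\sum_t \eta_t \langle \nabla F(\theta_{t-1}), \tilde g_t/D_t\rangle$ and a quadratic term $\tfrac{L}{2}\sum_t \eta_t^2 \norm{\tilde g_t/D_t}^2$, where $D_t$ is the relevant denominator.

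Next I would decouple $\tilde g_t$ from $D_t$ following \citet{défossez2022simple}: replace $D_t$ by the surrogate $\bar D_t$ built only from $\tilde g_1,\dots,\tilde g_{t-1}$ (dropping the $\tilde g_t^2$ contribution), which is $\mathcal F_{t-1}$-measurable. Conditioning on $\mathcal F_{t-1}$ and using $\E[\tilde g_t \mid \mathcal F_{t-1}] = \nabla F(\theta_{t-1})$ — valid because the DP noise is zero-mean and, by Assumption~\ref{assmption:bounded_stochastic_grad}, per-example clipping is inactive so it does not bias $\nabla f_t$ — turns the signal term into $\sum_t \eta_t\, \E[\norm{\nabla F(\theta_{t-1})}^2/\bar D_t]$ plus a correction coming from $\lvert 1/D_t - 1/\bar D_t\rvert$. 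That correction, together with the quadratic term, is bounded by the AdaGrad-style telescoping inequality $\sum_{t\le T}\norm{\tilde g_t}^2/D_t^2 \le d\ln(1 + \hat v_T/\gamma) - T\ln\beta_2$ (and its $\Phi$-corrected analogue for DP-AdamBC), after inserting the per-step envelope $\norm{\tilde g_t}^2 \le (C + \norm{z_t})^2$ licensed by Assumption~\ref{assmption:bounded_stochastic_grad}; this is where the $\ln(1 + (C^2+\Phi)/((1-\beta_2)\gamma))$ and $-\ln\beta_2$ pieces of the stated bound come from.

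The main obstacle, and the only DP-specific step, is that $\norm{z_t}$ is unbounded, so Défossez's uniform gradient bound cannot be used either to lower-bound $\bar D_t$ (needed to turn $\sum_t \eta_t \E[\norm{\nabla F(\theta_{t-1})}^2/\bar D_t]$ into a multiple of the average squared gradient) or to cap $\hat v_T$ in the log term. Here I would condition on the event $\mathcal E = \{\sup_{t\le T} v_t^p \le \delta\}$, which by Corollary~\ref{lemma:union_bound_on_v} has probability at least $1-\alpha$ for any $\delta$ meeting the stated two-regime inequality (with $\mu^*,\nu^*,b^*$ as in Lemma~\ref{lemma:gaussian_concentration_on_z}), and run the whole estimate on $\mathcal E$; there $\delta$ governs the effective scale of the accumulator both in the denominator lower bound and in the log term, giving the $2\delta(F(\theta_0)-F_*)/(\eta T)$ leading term for DP-Adam. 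For DP-AdamBC the denominator is $\sqrt{\max(\hat v_t - \Phi, \gamma')}$: subtracting $\Phi$ sharpens the lower bound on $1/D_t$ and replaces $\delta$ by the $\sqrt{\delta^2-\Phi}$-type factor, but one must retain the $\gamma'$ floor (and restrict $\delta$) so the arguments of the square root and of the logarithm stay positive, which is the origin of the $\Phi$-dependent $\ln$ term in the DP-AdamBC bound and of the constraint on $\delta$. Finally I would divide through by $\eta T$ and by the lower bound on $1/\bar D_t$, and average over $t=1,\dots,T$ (equivalently, read the bound at a uniformly random step), obtaining the two claimed inequalities. I expect the conditioning step and the accompanying control of the decoupling correction under unbounded noise to be the crux; the remainder — propagating $\mu^*,\nu^*,b^*$ through the log term and reconciling the $1-\beta_2$ versus $1-\beta_2^t$ normalizations — is bookkeeping of the kind already carried out in \citet{défossez2022simple} and \citet{li2023dp2}.
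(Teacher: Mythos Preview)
Your proposal is correct and follows essentially the same route as the paper: both adapt the \citet{défossez2022simple} descent-lemma framework (their Lemma~5.1 for the decoupling via the surrogate $\tilde v_t$ and Lemma~5.2 for the AdaGrad-style log telescoping), with the single DP-specific ingredient being the high-probability control of $v_t^p$ from Lemma~\ref{lemma:gaussian_concentration_on_z} and Corollary~\ref{lemma:union_bound_on_v} to compensate for the unbounded Gaussian noise. Your explicit conditioning on $\mathcal E=\{\sup_t v_t^p\le\delta\}$ is exactly what the paper does when it writes ``substituting the result of Lemma~\ref{lemma:gaussian_concentration_on_z}'' and then switches to Corollary~\ref{lemma:union_bound_on_v} after summing over $t$; your caveat about keeping the square-root and log arguments positive for DP-AdamBC is a point the paper's proof leaves implicit.
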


\paragraph{DP-Adam.}
\begin{proof}
    By dropping the correction term in $m_t$ we have $\forall t, \eta_t = \eta\sqrt{\frac{1-\beta_2^t}{1-\beta_2}}$. Since Assumption \ref{assmption:bounded_stochastic_grad} assumes away the effect of gradient clipping, and since the DP noises are sampled from a zero-mean Normal distribution, proving convergence for DP-Adam (without Momentum) is very similar to the original proof of Theorem 2 in \citep{défossez2022simple}.
    We show the key steps below. Using Assumption \ref{assmption:f_smooth} we have,
    \begin{gather*}
        F(\theta_t) \leq F(\theta_{t-1}) - \eta_t \nabla F(\theta_{t-1})^T u_t + (\eta_t^2 L/2) \lVert u_t \rVert^2, \; \\
        u_t = \frac{\nabla_i f_t^{p}(\theta_{t-1})}{\sqrt{v_t^{p} + \epsilon}},
    \end{gather*}
    where $u_t$ is the update of DP-Adam without momentum.
    Taking the complete expectation with respect to all past steps before $t$ and $t$ step noise distribution we get,
    \begin{align*}
        \E{[F(\theta_t)]} \leq F(\theta_{t-1}) &- \eta_t \E{\bigg[\frac{\nabla_i F(\theta_{t-1}) \nabla_i f_t^{p}(\theta_{t-1})}{\sqrt{v_{t, i}^{p} + \epsilon }}\bigg]} \\ &+ \frac{\eta_t^2 L}{2}\E{[\lVert u_t \rVert^2]}.
    \end{align*}
    Given $\epsilon \ll v_t^p$, $\forall i \in [d]$  we can bound the first expectation term on the right side using Lemma \ref{lemma:gaussian_concentration_on_z}
    to get a high probability bound.
    By the similar steps as in Lemma 5.1 \citep{défossez2022simple}, except that we have $\E{[(\nabla_i f_t^{p}(\theta_{t-1}))^2]} = \E{[(\nabla_i f_t^{c}(\theta_{t-1}) + z_t)^2]} \leq C^2  + \Phi$, substituting the result of Lemma \ref{lemma:gaussian_concentration_on_z} gives the inequality on $\E{\bigg[\frac{\nabla_i F(\theta_{t-1}) \nabla_i f_t^{p}(\theta_{t-1})}{\sqrt{v_{t, i}^{p} + \epsilon }}\bigg]}$.
    Substituting in the result and since $\eta \leq \eta_t$ we get,
    \begin{align*}
        \E{[F(\theta_t)]} \leq F(\theta_{t-1}) &- \frac{\eta}{2\delta} \norm{\nabla F(\theta_{t-1})}^2 \\ &+ \bigg(2\eta_t \sqrt{C^2+\Phi} + \frac{\eta_t^2 L}{2}\bigg) \E{[\lVert u_{t} \rVert^2]},
    \end{align*}
    where $\delta$ has the form as in Lemma \ref{lemma:gaussian_concentration_on_z}.
    Summing over all steps $t \in [T]$ and taking the complete expectation, and since $\eta_t \leq \eta / \sqrt{1-\beta_2}$ we have,
    \begin{align*}
        \E{[F(\theta_T)]} &\leq F(\theta_0) - \frac{\eta \sum_{t=0}^{T-1} \E{[\norm{\nabla F(\theta_{t-1})}^2]}}{2\delta} \\ &+ \bigg(\frac{2\eta \sqrt{C^2+\Phi}}{\sqrt{1-\beta_2}} + \frac{\eta^2 L}{2(1-\beta_2)}\bigg) \sum_{t=0}^{T-1} \E{[\lVert u_{t} \rVert^2]},
    \end{align*}
    where $\delta$ has the form as in Corollary \ref{lemma:union_bound_on_v}.
    Since $y=ln(x)$ is a concave function for $x\in \mathbb{R}^{+}$, by Jensen's inequality we have $\E{(\ln{(x)})} \leq \ln{(\E{(x)})}$. By the similar steps as in Lemma 5.2 \citep{défossez2022simple} we have,
    \begin{align*}
        \sum_{t=0}^{T-1} \E{[\lVert u_{t, i} \rVert^2]} &= \sum_{i=1}^{d} \E{\bigg[ \sum_{t=0}^{T-1} u_{t, i}^2 \bigg]} \\ &\leq \sum_{i=1}^{d} \E{[\ln{(1+v_{T}^p/\epsilon)} - T\ln{(\beta_2)}]} \\
        &\leq \sum_{i=1}^{d} \big(\ln{(1+\frac{(C^2+\Phi)(1-\beta_2^{T})}{(1-\beta_2)\epsilon}) - T\ln{(\beta_2)}}\big)\\
        &\leq d\big(\ln{(1+\frac{C^2+\Phi}{(1-\beta_2)\epsilon}) - T\ln{(\beta_2)}}\big).
    \end{align*}
    Substituting in the result and rearrange the terms gives the final result.

\end{proof}

\paragraph{DP-AdamBC.}

\begin{proof}
    Similar to the proof steps above, by Assumption \ref{assmption:f_smooth} we have,
    \begin{gather*}
        F(\theta_t) \leq F(\theta_{t-1}) - \eta_t \nabla F(\theta_{t-1})^T u_t + (\eta_t^2 L/2) \lVert u_t \rVert^2, \; \\
        u_t = \frac{\nabla_i f_t^{p}(\theta_{t-1})}{\sqrt{v_t^{p} - \Phi}},
    \end{gather*}
    where $u_t$ is the update of DP-AdamBC without momentum.
    Taking the complete expectation we get,
    \begin{align*}
        \E{[F(\theta_t)]} \leq F(\theta_{t-1}) &- \eta_t \E{\bigg[\frac{\nabla_i F(\theta_{t-1}) \nabla_i f_t^{p}(\theta_{t-1})}{\sqrt{v_{t, i}^{p} - \Phi }}\bigg]}\\ &+ \frac{\eta_t^2 L}{2}\E{[\lVert u_t \rVert^2]}.
    \end{align*}
    We derive a high probability bound for the first expectation term on the right side.
    Let $G=\nabla_i F(\theta_{t-1}), g=\nabla_i f_t^{p}(\theta_{t-1}), \Bar{g}=\nabla_i f_t^{c}(\theta_{t-1})$, let $\Tilde{v}_t=\beta_2 v_{t-1}+\E{[g^2]}$ denote $v_t$ with the last gradient replaced by its conditional expectation with respect to all past steps, let $z, \Tilde{v}, v$ abbreviate for $z_t, \Tilde{v}_{t, i}^{p}, v_{t, i}^{p}$, we rewrite the first expectation term as follows,
    \begin{align*}
        \E{\bigg[ \frac{Gg}{\sqrt{v - \Phi}} \bigg]} &= \E{\bigg[ \frac{G(\Bar{g}+z)}{\sqrt{v - \Phi}} \bigg]} \\ &= \E{\bigg[ \frac{G\Bar{g}}{\sqrt{v - \Phi}} \bigg]} + \E{\bigg[ \frac{Gz}{\sqrt{v - \Phi}} \bigg]} \\ &= \E{\bigg[ \frac{G\Bar{g}}{\sqrt{v - \Phi}} \bigg]},
    \end{align*}
    since $G$ and $z$ are independent and $z$ has expectation equals zero. In addition, we have that $E{[g^2]} = \E{[(\Bar{g}+z)^2]} = \E{[\Bar{g}^2]} + \Phi$, and by definition $\E{[g^2]} \leq \Tilde{v}$, so that $\E{[\Bar{g}^2]} \leq \Tilde{v} - \Phi$ and $\E{[\Bar{g}^2]} \leq C^2$. With these conditions, by the same steps as in Lemma 5.1 \citep{défossez2022simple} we have,
    \begin{align*}
        \E{\bigg[\frac{\nabla_i F(\theta_{t-1}) \nabla_i f_t^{p}(\theta_{t-1})}{\sqrt{v_{t, i}^{p} -\Phi }}\bigg]} &\geq \frac{(\nabla_i F(\theta_{t-1}))^2}{2\sqrt{\Tilde{v}_{t,i}^{p}-\Phi}} \\ &- 2C\E{\bigg[ \frac{(\nabla_i f_t^{p}(\theta_{t-1}))^2}{v_{t,i}^{p}-\Phi} \bigg]}.
    \end{align*}
    Substituting the result from Lemma \ref{lemma:gaussian_concentration_on_z} we get with probability of at least $(1-\alpha)$, $0< \alpha < 1 $,
    \begin{align*}
        &\E{\bigg[\frac{\nabla_i F(\theta_{t-1}) \nabla_i f_t^{p}(\theta_{t-1})}{\sqrt{v_{t, i}^{p} + \epsilon }}\bigg]} \\
        &\geq \frac{(\nabla_i F(\theta_{t-1}))^2}{2\sqrt{\sum_{j=0}^{t-1} \beta_2^{j}\delta^2 - \Phi}} - 2C\E{\bigg[ \frac{(\nabla_i f_t^{p}(\theta_{t-1}))^2}{v_{t,i}^{p}+ \epsilon} \bigg]},
    \end{align*}
    where $\delta$ is in the form as in Lemma \ref{lemma:gaussian_concentration_on_z}.
    Summing over all steps $t \in [T]$ and taking the complete expectation we have,
    \begin{align*}
        \E{[F(\theta_T)]} &\leq F(\theta_0) - \frac{\eta \sum_{t=0}^{T-1} \E{[\norm{\nabla F(\theta_{t-1})}^2]}}{2\sqrt{\delta^2 - \Phi}} \\ &+ \bigg(\frac{2\eta C}{\sqrt{1-\beta_2}} + \frac{\eta^2 L}{2(1-\beta_2)}\bigg) \sum_{t=0}^{T-1} \E{[\lVert u_{t} \rVert^2]},
    \end{align*}
    where $\delta$ is in the form as in Corollary \ref{lemma:union_bound_on_v}.
    We bound $\sum_{t=0}^{T-1} \E{[\lVert u_t \rVert^2]}$ using the similar approach as in Lemma 5.2 \citep{défossez2022simple}. Let $a_t=(\nabla f_t^{p})^2$ and $b_t = \sum_{j=1}^{t}\beta_2^{t-j}a_j$, then $\sum_{t=1}^{T} \E{[\lVert u_t \rVert^2]} = \sum_{i=1}^{d} \E{[\sum_{t=1}^{T} \frac{a_{t, i}}{b_{t,i} - \Phi}]}$. Given $\ln$ is increasing and $b_t > a_t > 0$,
    \begin{align*}
        \frac{a_{t, i}}{b_{t,i} - \Phi} &\leq \ln{\big(1/(1-\frac{a_{t,i}}{b_{t,i}-\Phi})\big)} \\
        &= \ln{\big( \frac{b_{t,i}-\Phi}{b_{t-1,i}-\Phi} \big)} + \ln{\big( \frac{b_{t-1,i}-\Phi}{\beta_2 b_{t-1,i}-\Phi} \big)}
    \end{align*}
    \begin{align*}
        \sum_{t=1}^{T} \frac{a_{t, i}}{b_{t,i} - \Phi} &=  \sum_{t=1}^{T} \ln{\big( \frac{b_{t,i}-\Phi}{b_{t-1,i}-\Phi} \big)} +  \sum_{t=1}^{T} \ln{\big( \frac{b_{t-1,i}-\Phi}{\beta_2 b_{t-1,i}-\Phi} \big)} \\
        &\leq \ln{\big( 1-\frac{b_{T,i}}{\Phi} \big)} - T\ln{(\beta_2)},
    \end{align*}
    where the last inequality is because $b_{0}=0$ and $b_t > \Phi \; \forall t$.
    Substituting the result in we get,
    \begin{align*}
        \sum_{t=0}^{T-1} \E{[\lVert u_{t, i} \rVert^2]} &= \sum_{i=1}^{d} \E{\bigg[ \sum_{t=0}^{T-1} u_{t, i}^2 \bigg]} \\ &\leq \sum_{i=1}^{d} \E{[\ln{(1-v_{T}^p/\Phi)} - T\ln{(\beta_2)}]} \\
        &\leq \sum_{i=1}^{d} \big(\ln{(1-\frac{(C^2+\Phi)(1-\beta_2^{T})}{(1-\beta_2)\Phi}) - T\ln{(\beta_2)}}\big)\\
        &\leq d\big(\ln{(1-\frac{C^2+\Phi}{(1-\beta_2)\Phi}) - T\ln{(\beta_2)}}\big).
    \end{align*}
    Substituting in the result and rearrange the terms gives the final result.
\end{proof}

\paragraph{Discussion on the convergence bound.}
Comparing the convergence rate between the two algorithm is not straight-forward giving that one has a slight advantage in $\E{[(\nabla F) u_t]}$, the expectation of the update direction deviating from the true descent direction, and a disadvantage in $\E{\norm{u_t}^2}$, the expectation of the update size. We discuss an approximate comparison between $\E{\norm{u_t}^2}$. Let $u_t^{\textnormal{DP-Adam}}=\frac{g}{\sqrt{v+\epsilon}}$, $u_t^{\textnormal{DP-AdamBC}}=\frac{g}{\sqrt{v-\Phi}}$, with the data and noise distribution we can consider the numerator and denominator as two random variables such that for each dimension $i$, for DP-Adam we have $\E{[u_{t, i}^2]} = \E{[A/B]}$, for DP-AdamBC we have $\E{[u_{t, i}^2]} = \E{[A/B']}$, where $A$ is the random variable for $t$-step gradient $g^2$, $B$ is the random variable for $v+\epsilon$, $B'$ is the random variable for $v-\Phi$. It would be difficult to derive closed form results for expectation on ratios of random variables without specific assumption, but taking a second-order Taylor expansion approximately gives $\E{[\frac{X}{Y}]} \approx \frac{\E{[X]}}{\E{[Y]}}-\frac{Cov(X,Y)}{\E{[Y]}^2}+\frac{\Var{(Y)}}{\E{[Y]}^3}$. Since $\Var{(B)}=\Var{(B')}$, the differences between the approximated $\E{[A/B]}$ and $\E{[A/B']}$ is only in the denominators which only differs by constant $\Phi$. Therefore, giving that $\E{[(\nabla F) u_t]}$ only differs by constant terms, and $\E{\norm{u_t}^2}$ are approximately similar, we believe qualitatively there is no large difference in the convergence rate between the two algorithms under such analysis settings.

\begin{proposition}[Convergence of DP-AdamBC and DP-Adam With momentum]
    Given the above assumptions and the defined update rule with $0 < \beta_2 < 1$, $0 \leq \beta_1 < \beta_2 $, $\eta_t = \eta (1-\beta_1) \sqrt{\frac{1-\beta_2^t}{1-\beta_2}}, \eta>0$, $\forall \alpha$ s.t. $0<\alpha<1$, we have $\forall T \in \sN^*$ such that $T>\frac{\beta_1}{1-\beta_1}$ and with $\Tilde{T}=T-\frac{\beta_1}{1-\beta_1}$, let $\nu^{*} = (2\beta_2^2\Phi\sqrt{(\beta_2^t-1)/ (\beta_2^2-1)})$, $b^{*}=4\beta_2\Phi$, \\
    \\
    (DP-Adam)
    \begin{align*}
        &\E{[\norm{\nabla F(\theta_t)}^2]} \leq \frac{2\delta(F_0-F_*)}{\eta \Tilde{T}} \\ &+ E\bigg( \ln{\bigg( 1+\frac{\delta^2}{\epsilon(1-\beta_2)} \bigg)} -T\log{(\beta_2)}\bigg),
    \end{align*}
    \begin{align*}
        E &= \frac{\eta d L(1-\beta_1)\delta}{(1-\beta_1/\beta_2)(1-\beta_2)} + \frac{2\eta^2 d L^2 \beta_1}{(1-\beta_1/\beta_2)(1-\beta_2)^{3/2}} \\&+ \frac{12d\delta^2\sqrt{1-\beta_1}}{(1-\beta_1/\beta_2)^{3/2}\sqrt{1-\beta_2}},
    \end{align*}
    \\
    (DP-AdamBC)
    \begin{align*}
        &\E{[\norm{\nabla F(\theta_t)}^2]} \leq \frac{2\sqrt{\delta^2 - \Phi}(F_0-F_*)}{\eta\Tilde{T}}\\ &+ E\bigg( \ln{\bigg( 1-\frac{\delta^2}{\Phi(1-\beta_2)} \bigg)} -T\log{(\beta_2)}\bigg),
    \end{align*}
    \begin{align*}
        E &= \frac{\eta d L(1-\beta_1)\sqrt{\delta^2 - \Phi}}{(1-\beta_1/\beta_2)(1-\beta_2)} + \frac{2\eta^2 d L^2 \beta_1}{(1-\beta_1/\beta_2)(1-\beta_2)^{3/2}} \\&+ \frac{12d(\delta^2 - \Phi)\sqrt{1-\beta_1}}{(1-\beta_1/\beta_2)^{3/2}\sqrt{1-\beta_2}},
    \end{align*}
    \begin{align*}
        \delta \geq
        \begin{cases}
            \mu^{*} + \sqrt{\ln{(1/\frac{\alpha}{2T})}(2\nu^{*2})}
            & 0 \leq \delta  \leq \frac{\nu^{*2}}{b^{*}} \\
            \mu^{*} + \ln{(1/\frac{\alpha}{2T})}2b^{*}
            & \delta   \geq \frac{\nu^{*2}}{b^{*}}.
        \end{cases}
    \end{align*}
\end{proposition}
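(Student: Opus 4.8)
The plan is to adapt the momentum analysis of \citet{défossez2022simple} (their Theorem 3) to the DP setting, reusing exactly the modifications made in the no-momentum Proposition above. First I would apply $L$-smoothness (Assumption \ref{assmption:f_smooth}) to obtain, for the relevant update $u_t$ (now built from the momentum estimate $\hat m_t$ rather than a single gradient),
\[
F(\theta_t) \leq F(\theta_{t-1}) - \eta_t \nabla F(\theta_{t-1})^T u_t + \tfrac{\eta_t^2 L}{2}\norm{u_t}^2 .
\]
The crux, as in \citet{défossez2022simple}, is that $\hat m_t$ averages past stochastic gradients evaluated at \emph{stale} iterates $\theta_{\tau-1}$, so $\nabla F(\theta_{t-1})^T u_t$ is not directly a descent term. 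I would follow their construction: introduce the de-lagged auxiliary sequence, perform the index shift that replaces $T$ by $\tilde T = T - \beta_1/(1-\beta_1)$, and decompose the inner product into (i) a genuine $-\norm{\nabla F(\theta_{t-1})}^2$ term, (ii) cross terms pairing $\nabla F(\theta_{t-1})$ with $\nabla F(\theta_{\tau-1}) - \nabla F(\theta_{t-1})$ that are absorbed using $L$-smoothness together with bounds on the update magnitudes, and (iii) an error from replacing the random denominator $\sqrt{v_t^p}$ (resp. $\sqrt{v_t^p - \Phi}$ for DP-AdamBC) by a deterministic quantity.

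For step (iii) I would invoke Corollary \ref{lemma:union_bound_on_v}: on the event $\{\sup_t v_t^p \leq \delta^2\}$, which has probability at least $1-\alpha$ for the stated $\delta$, every denominator is uniformly controlled (and, for DP-AdamBC, $v_t^p - \Phi > 0$ on the same event once $\gamma'$ is small, matching the $\max(\cdot,\gamma')$ clamp in Algorithm \ref{alg:dp_adam}). This is the same substitution that turned the no-momentum bound into a high-probability statement. Assumption \ref{assmption:bounded_stochastic_grad} makes clipping vacuous and gives $\E[(\nabla_i f_t^p)^2] \leq C^2 + \Phi$, supplying the constants that premultiply $\sum_t \E\norm{u_t}^2$. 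I would then bound $\sum_t \E\norm{u_t}^2$ by the log-telescoping argument of \citet{défossez2022simple} (their Lemma 5.2) in its two variants: for DP-Adam this yields, per coordinate, $\ln(1 + \tfrac{(C^2+\Phi)(1-\beta_2^T)}{(1-\beta_2)\epsilon}) - T\ln\beta_2$, and for DP-AdamBC the analogous expression with denominator $\Phi(1-\beta_2)$ and opposite sign, using $b_t > \Phi$ on the good event. Summing the smoothness inequality over $t \in [T]$, taking full expectation, inserting the three pieces, using $\eta_t \leq \eta(1-\beta_1)/\sqrt{1-\beta_2}$, dividing by $\eta\tilde T/(2\delta)$ (resp. $\eta\tilde T/(2\sqrt{\delta^2-\Phi})$), and collecting all remaining constants into the factor $E$ gives the two stated bounds.

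The main obstacle I anticipate is the interaction between the momentum bookkeeping and the \emph{random, unbounded} denominator: in \citet{défossez2022simple} $v_t^p$ is handled pathwise, whereas here it carries Gaussian noise, so the cross-term estimates and the $\E\norm{u_t}^2$ bound must all be carried out on the single high-probability event supplied by Corollary \ref{lemma:union_bound_on_v}, and one must verify that conditioning on that event does not disturb the independence structure used to annihilate the $\nabla F(\theta_{t-1})^T z_t$ terms in the decomposition (since $\nabla F(\theta_{t-1})$ is independent of $z_t$, that cross term has zero mean regardless, but the denominator's dependence on $z_t$ must be separated carefully, exactly as in the DP-AdamBC no-momentum proof). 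For DP-AdamBC there is the additional care that $v_t^p - \Phi$ stay bounded away from zero along the trajectory, which is why the bound is expressed in terms of $\delta^2 - \Phi$ and inherits the same practical caveat as the consistency and concentration results above.
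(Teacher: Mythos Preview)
Your proposal is correct and follows essentially the same approach as the paper: both adapt the momentum analysis of \citet{défossez2022simple} via the $A$/$B$ decomposition of $G_t^T u_t$ (your items (i)--(iii)), invoke Lemma~\ref{lemma:gaussian_concentration_on_z} and Corollary~\ref{lemma:union_bound_on_v} to control the random denominator in high probability, apply the two variants of the log-telescoping bound on $\sum_t \E\norm{u_t}^2$ (with the sign flip and $\Phi$ denominator for DP-AdamBC using $b_t > \Phi$), and for DP-AdamBC first split off the zero-mean noise contribution to $m_t^p$ before running the decomposition. Your flagged concern about mixing conditioning on the high-probability event with the independence needed for the $G_t^T z_t$ cancellation is legitimate, but the paper treats this at the same level of rigor you propose.
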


\paragraph{DP-Adam.}
\begin{proof}
    Let $G_t = \nabla F(\theta_{t-1})$, $g_t = \nabla f_t^{p}(\theta_t-1)$, by Assumption \ref{assmption:f_smooth} we have,
    \[
        F(\theta_t) \leq F(\theta_{t-1}) - \eta_t G_t^T u_t + \frac{\eta_t^2 L}{2} \norm{u_t}^2, \; u_t = \frac{m_t^{p}}{\sqrt{v_t^{p}+\epsilon}},
    \]
    where $m_t^{p}$ and $v_t^{p}$ are the first and second moment estimated from privatized gradient which makes $u_t$ the update of DP-Adam with momentum. Taking the expectation over past steps we get,
    \[
        \E{[F(\theta_t)]} \leq \E{[F(\theta_{t-1})]} - \eta_t \E{[G_t^T u_t]} + \frac{\eta_t^2 L}{2} \E{[\norm{u_t}^2]}.
    \]
    We bound $\E{[G_t^T u_t]}$ using a similar approach as in Lemma A.1 \citep{défossez2022simple} with the following key steps. For index $0 \leq k \leq t-1$, we first decompose $G_t^T u_t$ as,
    \begin{align*}
        \sum_{i\in[d]}G_{t,i}\frac{m_{t,i}^{p}}{\sqrt{v_{t,i}^{p}+\epsilon}} &= \underbrace{\sum_{i\in[d]} \sum_{k=0}^{t-1} \beta_1^{k}G_{t-k, i}\frac{g_{t-k,i}}{\sqrt{v_{t,i}^{p}+\epsilon}}}_{A} \\ &+ \underbrace{\sum_{i\in[d]} \sum_{k=0}^{t-1} \beta_1^{k}(G_{t, i}-G_{t-k, i})\frac{g_{t-k,i}}{\sqrt{v_{t,i}^{p}+\epsilon}}}_{B}.
    \end{align*}
    We first bound $B$ with the Gaussian concentration bound on $v_{t,i}^{p}$ using the similar approach as in Equation (A.13) in \citet{défossez2022simple}. Let $\lambda = \frac{\sqrt{1-\beta_1}}{2\delta}$ where $\delta$ is in the form as in Lemma \ref{lemma:gaussian_concentration_on_z}, $x = |G_{t,i}-G_{t-k, i}|$, $y=\frac{g_{t-k,i}}{\sqrt{v_{t,i}^{p}+\epsilon}}$, following the same steps we get,
    \begin{align*}
        |B| &\leq \frac{\eta_t^2 L^2\sqrt{1-\beta_1}}{4\delta}\Big( \sum_{l=1}^{t-1} \norm{u_{t-l}}^2 \sum_{k=l}^{t-1} \beta_1^k \sqrt{k} \Big)\\ &+ \frac{\delta}{\sqrt{1-\beta_1}}\Big( \sum_{k=0}^{t-1} (\frac{\beta_1}{\beta_2})^k \norm{U_{t-k}}^2 \Big).
    \end{align*}
    Let $\Tilde{v}_{t, k}^{p} = \beta_2^{k} v_{t-k} + \E{[\sum_{j=t-k+1}^{t} \beta_2^{t-j} g_{j}^2]}$ be the second moment estimate with last $k$ gradients replaced by their expected value, since by definition $\Tilde{v}_{t, k+1}^{p} + \epsilon \geq \E{[\sum_{j=t-k}^{t} \beta_2^{t-j}g_{j}^2]}$ and $v_{t}^{p} + \epsilon \geq \sum_{j=t-k}^{t} \beta_2^{t-j}g_{j}^2$ and with the result of Lemma \ref{lemma:gaussian_concentration_on_z}, following the same steps as in \citep{défossez2022simple},
    \begin{align*}
        &\E{[A]} \geq \frac{1}{2} \bigg( \sum_{i\in[d]} \sum_{k=0}^{t-1} \beta_1^k \E{\bigg[ \frac{G_{t-k,i}^2}{\sqrt{\Tilde{v}_{t, k+1,i}+\epsilon}} \bigg]}  \bigg)\\ &- 2\sqrt{\frac{\delta}{1-\beta_1}} \bigg( \sum_{i\in[d]} \sum_{k=0}^{t-1} \big(\frac{\beta_1}{\beta_2}\big)^k\E{[\norm{U_{t-k}}^2]} \bigg).
    \end{align*}
    Then combing the results for $A$ and $B$ gives the bound on $\E{[G_t^T u_t]}$. Let $\Omega_t=\sqrt{\sum_{j=0}^{t-1}\beta_2^{j}}$, by Assumption \ref{assmption:f_bounded_below}, summing over all steps $t$ and reorganizing the terms we get,
    \begin{align*}
        &\frac{\sum_{t=1}^{T}\frac{\eta_t}{\Omega_t} \sum_{k=0}^{t-1}\beta_1^k \E{[\norm{G_{t-k}}^2]}}{2\delta} \leq F(\theta_0) - F_{*} \\ &+ \frac{\eta_N^2L}{2}\sum_{t=1}^{T} \E{[\norm{u_t}^2]} + \frac{\eta_{T}^3L^2 \sqrt{1-\beta_1}}{4\delta}\sum_{t=1}^{T} \sum_{l=1}^{t-1} \beta_1^k\sqrt{k} \\& + \frac{3\eta_{T}\delta}{\sqrt{1-\beta_1}}\sum_{t=1}^{T}\sum_{k=0}^{t-1}\big(\frac{\beta_1}{\beta_2}\big)^k \sqrt{k+1} \E{[\norm{U_{t-k}}^2]},
    \end{align*}
    where $\delta$ is in the form as in Corollary \ref{lemma:union_bound_on_v}.
    Bounding $\E{[\norm{u_t}^2]}$ is similar to the steps in Lemma A.2 \citep{défossez2022simple} which we have,
    \begin{gather*}
        \sum_{t=1}^{T} \E{[\norm{u_t}^2]} \leq \frac{\sum_{i \in [d]} \ln{\big( 1+\frac{v_{T,i}}{\epsilon} \big) - T \log{(\beta_2)}}}{(1-\beta_1)(1-\beta_1/\beta_2)} ,\\
        v_{T,i} \leq \frac{(C + \sqrt{-\ln{\frac{\alpha}{2T}}(2\Phi)})^2}{1-\beta_2}.
    \end{gather*}
    The rest of the proof rearranges the other terms with techniques including changing index and order of summation and is exactly the same as in \citep{défossez2022simple} which leads to the final result.
\end{proof}

\paragraph{DP-AdamBC.}
\begin{proof}
    We start with,
    \begin{gather*}
        \E{[F(\theta_t)]} \leq \E{[F(\theta_{t-1})]} - \eta_t \E{[G_t^T u_t]} + \frac{\eta_t^2 L}{2} \E{[\norm{u_t}^2]}, \\
        u_t = \frac{m_t^{p}}{\sqrt{v_t^{p}-\Phi}},
    \end{gather*}
    where $u_t$ is the update of DP-AdamBC with momentum. To bound $\E{[G_t^T u_t]}$ we first decompose the quantity as follows,
    \begin{align*}
        \E{[G_t^T u_t]} &= \sum_{i\in[d]} \E{\bigg[ \frac{G_{t,i}m_{t,i}^p}{\sqrt{v_{t,i}^p - \Phi}} \bigg]} \\&= \sum_{i\in [d] } \E{\bigg[ \frac{G_{t,i}m_{t,i}^c}{\sqrt{v_{t,i}^p - \Phi}} \bigg]} + \sum_{i\in[d]} \underbrace{\E{\bigg[ \frac{G_{t,i}\sum_{j=1}^{t}\beta_1^{t-i}z_{t,i}}{\sqrt{v_{t,i}^p - \Phi}} \bigg]}}_{0} \\
        &= \underbrace{\sum_{i\in [d] } \sum_{k=0}^{t-1} \beta_1^{k} G_{t-k,i} \E{\bigg[ \frac{\bar{g}_{t-k,i}}{\sqrt{v_{t,i}^p - \Phi}} \bigg]}}_{A} \\&+ \underbrace{\sum_{i\in [d] } \sum_{k=0}^{t-1} \beta_1^{k} (G_{t, i}-G_{t-k,i}) \E{\bigg[ \frac{\bar{g}_{t-k,i}}{\sqrt{v_{t,i}^p - \Phi}} \bigg]}}_{B}.
    \end{align*}
    Let $\Tilde{v}_{t, k}^{p} = \beta_2^{k} v_{t-k} + \E{[\sum_{j=t-k+1}^{t} \beta_2^{t-j} g_{j}^2]}$ be the second moment estimate with last $k$ gradients replaced by their expected value, and substituting result from Lemma \ref{lemma:gaussian_concentration_on_z},
    \begin{align*}
        &\E{[G_t^T u_t]} \geq \frac{1}{2}\bigg( \sum_{i \in [d]} \sum_{k=0}^{t-1} \beta_1^k \E{\bigg[ \frac{G_{t-k,i}}{\sqrt{\Tilde{v}_{t,k+1,i}-\Phi}}\bigg]} \bigg) \\ &- \frac{\sqrt{1-\beta_1}\eta_t^2L^2}{4\sqrt{\delta^2 - \Phi}}\bigg( \sum_{l=1}\norm{u_{t-l}}^2\sum_{k=1}^{l-1}\beta_1^k\sqrt{k} \bigg) \\&- \frac{3\sqrt{\delta^2 - \Phi}}{\sqrt{1-\beta_1}} \bigg( \sum_{k=0}^{t-1} \big(\frac{\beta_1}{\beta_2} \big)^2 \norm{U_{t-k}}^2\bigg),
    \end{align*}
    where $\delta$ is in the form as in Lemma \ref{lemma:gaussian_concentration_on_z}.
    Let $\Omega_t=\sqrt{\sum_{j=0}^{t-1}\beta_2^{j}}$, by Assumption \ref{assmption:f_bounded_below}, summing over all steps $t$ and reorganizing the terms we get,
    \begin{align*}
        &\frac{\sum_{t=1}^{T}\frac{\eta_t}{\Omega_t} \sum_{k=0}^{t-1}\beta_1^k \E{[\norm{G_{t-k}}^2]}}{2\sqrt{\delta^2 - \Phi}} \leq F(\theta_0) - F_{*} \\&+ \frac{\eta_N^2L}{2}\sum_{t=1}^{T} \E{[\norm{u_t}^2]} + \frac{\eta_{T}^3L^2\sqrt{1-\beta_1}}{4\sqrt{\delta^2 - \Phi}}\sum_{t=1}^{T} \sum_{l=1}^{t-1} \beta_1^k\sqrt{k} \\
        &+ \frac{3\eta_{T}\sqrt{\delta^2 - \Phi}}{\sqrt{1-\beta_1}}\sum_{t=1}^{T}\sum_{k=0}^{t-1}\big(\frac{\beta_1}{\beta_2}\big)^k \E{[\norm{U_{t-k}}^2]},
    \end{align*}
    where $\delta$ is in the form as in Corollary \ref{lemma:union_bound_on_v}.
    Bounding $\E{[\norm{u_t}^2]}$ is similar to the steps in Lemma A.2 \citep{défossez2022simple} which we have,
    \begin{gather*}
        \sum_{t=1}^{T} \E{[\norm{u_t}^2]} \leq \frac{\sum_{i \in [d]} \ln{\big( 1-\frac{v_{T,i}}{\Phi} \big) - T \log{(\beta_2)}}}{(1-\beta_1)(1-\beta_1/\beta_2)} , \\
        v_{T,i} \leq \frac{(C + \sqrt{-\ln{\frac{\alpha}{2T}}(2\Phi)})^2}{1-\beta_2}.
    \end{gather*}
    The rest of the proof rearranges the other terms with techniques including changing index and order of summation and is exactly the same as in \citep{défossez2022simple} which leads to the final result.

\end{proof}

\section{Limitations}
We observe that DP-AdamBC improves performance of DP-Adam in the cases where both algorithms outperform DP-SGD, such as in text classification tasks with SNLI and QNLI. In cases where DP-SGD outperforms DP-Adam, such as in image classification with CIFAR10 (Figure \ref{fig:exp1}) and in node classification with obgn-arxiv (as reported in \citet{daigavane2022nodelevel}), DP-AdamBC tends to perform similarly to DP-Adam, with minor advantages.
Although the observed DP bias is quite concentrated around its mean $\Phi$, we note that $\gamma'$ in DP-AdamBC is an important hyperparameter that affects the choice of learning rate $\eta$ and affects the final performance. As such, our results are dependent on our efforts tuning parameters for each algorithm.
However, this also opens avenues for improvement. Since $\gamma'$ concentrates over steps $t$, we could apply a decreasing schedule for $\gamma'$ (and $\eta$, since a smaller learning rate is typically needed for smaller $\gamma'$) following the bound of Propositions \ref{prop:bound_fixed} and \ref{prop:bound_martingale} (and confirmed in the numerical analysis in \S \ref{apdix:bound}).

\end{document}